\def\TARGET{1}
\newcommand{\gray}{\cellcolor[gray]{0.8}}
\theoremstyle{plain}
\newtheorem{theorem}{Theorem}
\newtheorem{lemma}{Lemma}
\theoremstyle{definition}
\newtheorem{definition}{Definition}
\newtheorem{assumption}{Assumption}
\theoremstyle{remark}
\newtheorem{remark}{Remark}
\renewcommand{\t}{\text}
\newcommand{\op}[1]{\operatorname{#1}}
\newcommand{\C}[1]{{\mathcal{#1}}} 
\newcommand{\B}[1]{{\mathbb{#1}}} 
\newcommand{\BF}[1]{{\mathbf{#1}}} 
\newcommand{\F}[1]{{\mathfrak{#1}}}
\newcommand{\bra}{\langle}
\newcommand{\ket}{\rangle}
\newcommand{\tF}{\tilde{F}}
\newcommand{\x}{\BF{x}}
\newcommand{\y}{\BF{y}}
\newcommand{\z}{\BF{z}}
\newcommand{\vv}{\BF{v}}
\newcommand{\uu}{\BF{u}}
\newcommand{\g}{\BF{g}}
\newcommand{\tFgrad}{\nabla (\tF|_{\C{L}})}
\DeclareMathOperator*{\argmin}{argmin}
\DeclareMathOperator*{\argmax}{argmax}
\def\blfootnote{\xdef\@thefnmark{}\@footnotetext}
\title{A Unified Approach for Maximizing Continuous DR-submodular Functions}
\date{}
\author{
\begin{tabular}{c}
Mohammad Pedramfar \\
{\normalfont Purdue University} \\
\texttt{mpedramf@purdue.edu}
\end{tabular}
\And 
\begin{tabular}{c}
Christopher John Quinn \\
{\normalfont Iowa State University} \\
\texttt{cjquinn@iastate.edu}
\end{tabular}
\And
\begin{tabular}{c}
Vaneet Aggarwal \\
{\normalfont Purdue University} \\
\texttt{vaneet@purdue.edu}
\end{tabular}
}
\begin{document}

\maketitle

\begin{abstract}
This paper presents a unified approach for maximizing continuous DR-submodular functions that encompasses a range of settings and oracle access types. Our approach includes a Frank-Wolfe type offline algorithm for both monotone and non-monotone functions, with different restrictions on the general convex set. We consider settings where the oracle provides access to either the gradient of the function or only the function value, and where the oracle access is either deterministic or stochastic. We determine the number of required oracle accesses in all cases. Our approach gives new/improved results for nine out of the sixteen considered cases, avoids computationally expensive projections in three cases, with the proposed framework matching performance of state-of-the-art approaches in the remaining four cases. Notably, our approach for the stochastic function value-based oracle enables the first regret bounds with bandit feedback for stochastic DR-submodular functions.
\end{abstract}

\newcommand{\DrawOfflineTabular}{
\begin{tabular}{ | c | c | c | c | c | c | c | c | c | }
\hline
$F$ & Set & Oracle & Setting & Reference & Appx.  & Complexity \\
\hline
\multirow{11}{*}{\rotatebox{90}{Monotone}}
& \multirow{5}*{\rotatebox{90}{$0 \in \C{K}$}}
  & \multirow{3}*{$\nabla F$}
    & \multirow{1}*{det.}
      & \cite{bian17_guaran_non_optim}, {\color{blue} (*)}
        & $1-1/e$ & $O(1/\epsilon)$ \\
    \cline{4-7}
& & & \multirow{2}*{stoch.}
      & \cite{mokhtari20_stoch_condit_gradien_method}, {\color{blue} (*)}
        & $1-1/e$ & $O(1/\epsilon^3)$ \\
& & & & \cite{zhang22_stoch_contin_submod_maxim} $\ddagger$
        & $1-1/e$ & $O(1/\epsilon^2)$ \\
  \cline{3-7}
& & \multirow{2}*{$F$}
    & \multirow{1}*{det.}
      & \gray {\color{blue}This paper}
        &\gray  $1-1/e$ & \gray $O(1/\epsilon^3)$ \\
    \cline{4-7}
& & & \multirow{1}*{stoch.}
      & \gray {\color{blue}This paper} & \gray $1-1/e$ & \gray $O(1/\epsilon^5)$ \\
\cline{2-7}
& \multirow{6}*{\rotatebox{90}{general$\dagger$}}
  & \multirow{4}*{$\nabla F$}
    & \multirow{2}*{det.}
      & \cite{hassani17_gradien_method_submod_maxim} $\ddagger$
        & $1/2$    & $O(1/\epsilon)$ \\
& & & &  {\color{blue}This paper}
        &  $1/2$    & $\tilde{O}(1/\epsilon)$ \\
    \cline{4-7}
& & & \multirow{2}*{stoch.}
      & \cite{hassani17_gradien_method_submod_maxim}$\ddagger$
        & $1/2$    & $O(1/\epsilon^2)$ \\
& & & & {\color{blue}This paper}
        & $1/2$    & $\tilde{O}(1/\epsilon^3)$ \\
  \cline{3-7}
& & \multirow{2}*{$F$} 
    & \multirow{1}*{det.}  
      & \gray {\color{blue}This paper}
        & \gray $1/2$    & \gray $\tilde{O}(1/\epsilon^3)$ \\
    \cline{4-7}
& & & \multirow{1}*{stoch.}  
      & \gray {\color{blue}This paper}
        & \gray $1/2$    &  \gray $\tilde{O}(1/\epsilon^5)$ \\
\hline
\multirow{11}*{\rotatebox{90}{Non-Monotone}}
& \multirow{5}*{\rotatebox{90}{d.c.}}
  & \multirow{2}*{$\nabla F$}
    & \multirow{1}*{det.}
      & \cite{bian17_contin_dr_maxim}, {\color{blue} (*)}
        & $1/e$   & $O(1/\epsilon)$ \\
    \cline{4-7}
& & & \multirow{1}*{stoch.}
      & \cite{mokhtari20_stoch_condit_gradien_method}, {\color{blue} (*)}
        & $1/e$   & $O(1/\epsilon^3)$ \\
  \cline{3-7}
& & \multirow{2}*{$F$}
    & \multirow{1}*{det.}
      &\gray {\color{blue}This paper}
        &\gray $1/e$   &\gray $O(1/\epsilon^3)$ \\
    \cline{4-7}
& & & \multirow{1}*{stoch.}
      &\gray {\color{blue}This paper}
        &\gray $1/e$   &\gray $O(1/\epsilon^5)$ \\
\cline{2-7}
& \multirow{6}*{\rotatebox{90}{general}}
  & \multirow{4}*{$\nabla F$}
    & \multirow{3}*{det.}
      & \cite{durr19_nonmonotone}
        & $\frac{1-h}{3\sqrt{3}}$ & $O(e^{\sqrt{dL/\epsilon}})$ \\
        & & & & \cite{du2022improved}
                & $\frac{1-h}{4}$         & $O(e^{\sqrt{dL/\epsilon}})$ \\
        & & & & \cite{du22_lyapun}, {\color{blue} (*)}
                & $\frac{1-h}{4}$         & $O(1/\epsilon)$ \\
    \cline{4-7}
    & & & \multirow{1}*{stoch.}  
          & \gray {\color{blue}This paper}
            & \gray $\frac{1-h}{4}$          & \gray $O(1/\epsilon^3)$ \\
\cline{3-7}
& & \multirow{2}*{$F$}
    & \multirow{1}*{det.}  
      & \gray {\color{blue}This paper}
        & \gray $\frac{1-h}{4}$          & \gray $O(1/\epsilon^3)$ \\
    \cline{4-7}
& & & \multirow{1}*{stoch.} 
      & \gray {\color{blue}This paper}
        & \gray $\frac{1-h}{4}$          & \gray $O(1/\epsilon^5)$ \\
\hline
\end{tabular}
}

\newcommand{\DrawOfflineTable}{
\begin{table}[H]
\small
\caption{Offline DR-submodular optimization results.  } 
\label{TBL:offline}
\begin{center}
\DrawOfflineTabular
\end{center}
{~\\
This table compares the different results for the number of oracle calls (complexity) \textit{within the feasible set}  for DR-submodular maximization. 
Shaded rows indicate problem settings for which our work has the \textbf{first guarantees} or \textbf{beats the SOTA}.
The rows marked with a blue star {\color{blue}(*)} correspond to cases where \cref{ALG:main_offline} \textbf{generalizes the corresponding algorithm} and therefore has the same performance. 
The different columns enumerate properties of the function, the convex feasible region (downward-closed, includes the origin, or general), and the oracle, as well as the approximation ratios and oracle complexity (the number of queries needed to achieve the stated approximation ratio with at most $\epsilon>0$ additive error).
We have $h := \min_{\x \in \C{K}} \|x\|_\infty$.
(See \cref{APX:constraint_vs_query} regarding~\cite{mokhtari20_stoch_condit_gradien_method} and~\cite{zhang22_stoch_contin_submod_maxim}).
$\dagger$ when the oracle can be queried for any points in $[0,1]^d$ (even outside the feasible region $\C{K}$), the problem of optimizing monotone DR-submodular functions over a general convex set simplifies ---   \cite{bian17_guaran_non_optim} and \cite{mokhtari20_stoch_condit_gradien_method} achieve the same ratios and complexity bounds as listed above for $0\in\C{K}$; 
\cite{chen20_black_box_submod_maxim} can achieve an approximation ratio of $1-1/e$ with the $O(1/\epsilon^3)$ and $O(1/\epsilon^5)$  complexity for exact and stochastic value oracles respectively.  
$\ddagger$ \cite{hassani17_gradien_method_submod_maxim} and~\cite{zhang22_stoch_contin_submod_maxim} use gradient ascent, requiring potentially computationally expensive projections. 
}
\end{table}
}

\newcommand{\DrawOnlineTabular}{
\begin{tabular}{ | c | c | c |  c | c | c | c | }
\hline
$F$ & Set & Feedback &  Reference & Coef. $\alpha$ & $\alpha$-Regret \\
\hline
\multirow{6}*{ \rotatebox{90}{Monotone} }
& \multirow{3}*{$0 \in \C{K}$}
  & \multirow{3}*{$\nabla F$}
    & \cite{chen18_projec_free_onlin_optim_stoch_gradien}$\dagger$,
      & $1/e$   & $O(T^{2/3})$ \\
& & & \gray {\color{blue}This paper}
      & \gray $1-1/e$ & \gray $O(T^{3/4})$ \\
  \cline{4-6}
\cline{3-6}
& & \multirow{1}*{$F$}
    &\gray {\color{blue}This paper}
      &\gray $1-1/e$ &\gray $O(T^{5/6})$ \\
  \cline{4-6}
\cline{2-6}
& \multirow{3}*{{general}}
  & \multirow{2}*{$\nabla F$}
    & \cite{hassani17_gradien_method_submod_maxim} $\ddagger$ 
      & $1/2$    & $O(T^{1/2})$ \\
    & & & {\color{blue}This paper}
      & $1/2$     & $\tilde{O}(T^{3/4})$ \\
  \cline{4-6}
\cline{3-6}
& & \multirow{1}*{$F$} 
    & {\color{blue}This paper} \gray
      &\gray $1/2$   &\gray $\tilde{O}(T^{5/6})$ \\
  \cline{4-6}
\hline
\multirow{4}*{ \rotatebox{90}{Non-mono.} }
& \multirow{2}*{{d.c.}}
  &\multirow{1}*{$\nabla F$}
    &\gray {\color{blue}This paper}
      &\gray $1/e$   &\gray $O(T^{3/4})$ \\
  \cline{4-6}
\cline{3-6}
& & \multirow{1}*{$F$} 
    &\gray {\color{blue}This paper}
      &\gray $1/e$   &\gray $O(T^{5/6})$ \\
  \cline{4-6}
\cline{2-6}
& \multirow{2}*{{general} }
  & \multirow{1}*{$\nabla F$}
    &\gray {\color{blue}This paper}
      &\gray $\frac{1-h}{4}$   &\gray $O(T^{3/4})$ \\
  \cline{4-6}
\cline{3-6}
& & \multirow{1}*{$F$}
    & {\color{blue}This paper} \gray
      &\gray $\frac{1-h}{4}$   &\gray $O(T^{5/6})$ \\
\cline{4-6}
\hline
\end{tabular}
}

\newcommand{\DrawOnlineTableShort}{
\begin{table}[H]
\caption{Online stochastic DR-submodular optimization.}
\label{TBL:online:short}
\begin{center}
\DrawOnlineTabular
\end{center}
{~\\
\small 
This table compares the different results for the expected $\alpha$-regret for online stochastic DR-submodular maximization for the under bandit and semi-bandit feedback. Shaded rows indicate problem settings for which our work has the \textbf{first guarantees} or \textbf{beats the SOTA}.
We have $h := \min_{\x \in \C{K}} \|x\|_\infty$.
$\dagger$ the analysis in \cite{chen18_projec_free_onlin_optim_stoch_gradien} has an error (see the supplementary material for details). 
$\ddagger$ \cite{hassani17_gradien_method_submod_maxim} uses gradient ascent, requiring potentially computationally expensive projections. 
}
\end{table}
} 

\blfootnote{This work was supported in part by the National Science Foundation under grants CCF-2149588 and CCF-2149617, and Cisco, Inc.}

\section{Introduction}

The problem of optimizing DR-submodular functions over a convex set 
has attracted considerable interest in both the machine learning and theoretical computer science communities \citep{bach2019submodular,bian2019optimal,hassani17_gradien_method_submod_maxim,niazadeh20_optim_algor_contin_non_submod}. 
This is due to its many practical applications in modeling real-world problems, such as influence/revenue maximization, facility location, and non-convex/non-concave quadratic programming \citep{bian17_contin_dr_maxim,djolonga2014map,ito2016large,gu2023profit,li2023experimental}.  
as well as more recently identified applications like serving heterogeneous learners under networking constraints~\cite{li2023experimental} and joint optimization of routing and caching in networks~\cite{li23_joint_optim_routi}.

Numerous studies investigated developing approximation algorithms for constrained DR-submodular maximization, utilizing a variety of algorithms and proof analysis techniques.
These studies have addressed both monotone and non-monotone functions and considered various types of constraints on the feasible region.
The studies have also considered different types of oracles---gradient oracles and value oracles, where the oracles could be exact (deterministic) or stochastic.
Lastly, for some of the aforementioned offline problem settings, some studies have also considered analogous online optimization problem settings as well, where performance is measured in regret over a horizon. 
This paper aims to unify the disparate offline problems under a single framework by providing a comprehensive algorithm and analysis approach that covers a broad range of setups. By providing a unified framework, this paper presents novel results for several cases where previous research was either limited or non-existent, both for offline optimization problems and extensions to related stochastic online optimization problems.

This paper presents a Frank-Wolfe based meta-algorithm for (offline) constrained DR-submodular maximization where we could only query within the constraint set, with sixteen variants for sixteen problem settings.
The algorithm is designed to handle settings where (i) the function is monotone or non-monotone, (ii) the feasible region is a downward-closed (d.c.) set (extended to include $0$ for monotone functions) or a general convex set, (iii) gradient or value oracle access is available, and (iv) the oracle is exact or stochastic. 
Table 1 enumerates the cases and corresponding results on oracle complexity (further details are provided in Appendix \ref{sec:related}).
We derive the first oracle complexity guarantees for nine cases,  
derive the oracle complexity in three cases where previous result had a computationally expensive projection step \cite{zhang22_stoch_contin_submod_maxim,hassani17_gradien_method_submod_maxim} (and we obtain matching complexity in one of these), and obtain matching guarantees in the remaining four cases.
\begin{figure}[t]
\DrawOfflineTable
\end{figure}
In addition to proving approximation ratios and oracle complexities for  several (challenging) settings that are the first or improvements over the state of the art, the \textit{technical novelties of our approach} include: 
\begin{enumerate}[label=(\roman*)]
\item A new construction procedure of a shrunk constraint set that allows us to work with lower dimensional feasible sets when given a value oracle, resulting in the first results on general lower dimensional feasible sets given a value oracle.
\item The first Frank-Wolfe type algorithm for analyzing monotone functions over a general convex set for any type of oracle, where only feasible points can be queried. 
\item Shedding light on a previously unexplained gap in approximation guarantees for monotone DR-submodular maximization. 
Specifically, by considering the notion of query sets and assuming that the oracles can only be queries within the constraint set, we divide the class of monotone submodular maximization into monotone submodular maximization over convex sets containing the origin and monotone submodular maximization over general convex sets.
Moreover, we conjecture that the $1/2$ approximation coefficient, which has been considered sub-optimal in the literature, is optimal when oracle queries can only be made within the constraint set. 
(See Appendix~\ref{APX:constraint_vs_query} for more details.)
\end{enumerate}

Furthermore, we also consider online stochastic DR-submodular optimization with bandit feedback, where an agent sequentially picks actions (from a convex feasible region), receives stochastic rewards (in expectation a DR-submodular function) but no additional information, and seeks to maximize the expected cumulative reward.
Performance is measured against the best action in expectation 
(or a near-optimal baseline when the offline problem is NP-hard but can be approximated to within $\alpha$ in polynomial time), the difference denoted as expected $\alpha$-regret. 
For such problems, when only bandit feedback is available (it is typically a strong assumption that semi-bandit or full-information feedback is available), the agent must be able to learn from stochastic value oracle queries over the feasible actions action. 
By designing offline algorithms that only query feasible points, we made it possible to convert those offline algorithms into online algorithms. 
In fact, because of how we designed the offline algorithms, we are able to access them in a black-box fashion for online problems when only bandit feedback is available. 
Note that previous works on DR-submodular maximization with bandit feedback in monotone settings (e.g.~\cite{zhang19_onlin_contin_submod_maxim},~\cite{niazadeh21_onlin_learn_offlin_greed_algor} and~\cite{wan23_bandit_multi_dr_submod_maxim}) explicitly assume that the convex set contains the origin.

For each of the offline setups, we extend the offline algorithm (the respective variants for stochastic value oracle) and oracle query guarantees to provide algorithms and $\alpha$-regret bounds in the bandit feedback scenario. Table 2 enumerates the problem settings and expected regret bounds with bandit and semi-bandit feedback. 
The key contributions of this work can be summarized as follows:

\noindent {\bf 1.} 
This paper proposes a unified approach for maximizing continuous DR-submodular functions in a range of settings with different oracle access types, feasible region properties, and function properties.
A Frank-Wolfe based algorithm is introduced, which compared to SOTA methods for each of the sixteen settings, achieves the best-known approximation coefficients for each case while providing 
(i) the first guarantees in nine cases,
(ii) reduced computational complexity by avoiding projections in 
three cases, and 
(iii) matching guarantees in remaining four cases.

\noindent {\bf 2.} 
In particular, this paper gives the first results on offline DR-submodular maximization (for both monotone and non-monotone functions) over general convex sets and even for downward-closed convex sets, when only a value oracle is available over the feasible set.
Most prior works on offline DR-submodular maximization require access to a gradient oracle.

\if\TARGET0
    \begin{wrapfigure}{r}{0.69\textwidth}
    \vspace{-.3in}
    \begin{minipage}{0.69\textwidth}
    { \small 
    \DrawOnlineTableShort
    }
    \vspace{-0.3in}
    \end{minipage}
    \end{wrapfigure}
\else
    \begin{figure}
    \vspace{-.3in}
    { \small 
    \DrawOnlineTableShort
    }
    \end{figure}
\fi
\noindent {\bf 3.}
The results, summarized in Table 2, are presented with two feedback models---bandit feedback where only the (stochastic) reward value is available and semi-bandit feedback where a single stochastic sample of the gradient at the location is provided. This paper presents the first regret analysis with bandit feedback for stochastic DR-submodular maximization for both monotone and non-monotone functions. For semi-bandit feedback case, we provide the first result in one case, improve the state of the art result in two cases, and gives the result without computationally intensive projections in one case.  

\paragraph{Related Work:} The key related works are summarized in Tables 1 and 2. 
We briefly discuss some key works here; see the supplementary materials for more discussion. 
For online DR-submodular optimization with bandit feedback, there has been some prior works in the adversarial setup \cite{zhang19_onlin_contin_submod_maxim,zhang23_onlin_learn_non_submod_maxim,niazadeh21_onlin_learn_offlin_greed_algor,wan23_bandit_multi_dr_submod_maxim} which are not included in Table~2 as we consider the stochastic setup.
\cite{zhang19_onlin_contin_submod_maxim} considered monotone DR-submodular functions over downward-closed convex sets and achieved $(1-1/e)$-regret of $O(T^{8/9})$ in adversarial setting.
This was later improved by~\cite{niazadeh21_onlin_learn_offlin_greed_algor} to $O(T^{5/6})$.
\cite{wan23_bandit_multi_dr_submod_maxim} further improved the regret bound to $O(T^{3/4})$.
However, it should be noted that they use a convex optimization subroutine at each iteration which could be even more computationally expensive than projection.
\cite{zhang23_onlin_learn_non_submod_maxim} considered non-monotone DR-submodular functions over downward-closed convex sets and achieved $1/e$-regret of $O(T^{8/9})$ in adversarial setting.

\section{Background and Notation}

We introduce some basic notions, concepts and assumptions which will be used throughout the paper. 
For any vector $\x \in \B{R}^d$, $[\x]_i$  is the $i$-th entry of $\x$.
We consider the partial order on $\B{R}^d$ where $\x \leq \y$ if and only if $[\x]_i \leq [\y]_i$ for all $1 \leq i \leq d$.
For two vectors $\x, \y \in \B{R}^d$, the \emph{join} of $\x$ and $\y$, denoted by $\x \vee \y$ and the \emph{meet} of $\x$ and $\y$, denoted by $\x \wedge \y$, are defined by 
\begin{equation}
\x \vee \y := ( \max\{ [\x]_i, [\y]_i \} )_{i = 1}^d
\quad\t{ and }\quad
\x \wedge \y := ( \min\{ [\x]_i, [\y]_i \} )_{i = 1}^d,
\end{equation}
respectively.
Clearly, we have 
$\x \wedge \y \leq \x \leq \x \vee \y$.
We use $\| \cdot \|$ to denote the Euclidean norm, 
and $\| \cdot \|_\infty$ to denote the supremum norm.
In the paper, we consider a bounded convex domain $\C{K}$ and w.l.o.g. assume that
$\C{K} \subseteq [0, 1]^{d}$. 
We say that $\C{K}$ is \emph{down-closed} (d.c.) if there is a point $\BF{u} \in \C{K}$ such that for all 
$\z \in \C{K}$, we have $ \{ \x \mid \BF{u} \leq \x \leq z \} \subseteq \C{K}$.
The \emph{diameter} $D$ of the convex domain $\C{K}$ is defined as 
$D := \sup_{\x, \y \in \C{K}} \norm{\x - \y}$. 
We use $\B{B}_r(x)$ to denote the open ball of radius $r$ centered at $\x$.
More generally, for a subset $X \subseteq \B{R}^d$, we define $\B{B}_r(X) := \bigcup_{x \in X} \B{B}_r(x)$.
If $A$ is an affine subspace of $\B{R}^d$, then we define $\B{B}_r^A(X) := A \cap \B{B}_r(X)$.
For a function $F : \C{D} \to \B{R}$ and a set $\C{L}$, we use $F|_{\C{L}}$ to denote the restriction of $F$ to the set $\C{D} \cap \C{L}$.
For a linear space $\C{L}_0 \subseteq \B{R}^d$, we use $P_{\C{L}_0} : \B{R}^d \to \C{L}_0$ to denote the projection onto $\C{L}_0$.
We will use $\B{R}^d_+$ to denote the set $\{ \x \in \B{R}^d | \x \geq 0 \}$.
For any set $X \subseteq \B{R}^d$, the affine hull of $X$, denoted by $\op{aff}(X)$, is defined to be the intersection of all affine subsets of $\B{R}^d$ that contain $X$.
The \textit{relative interior} of a set $X$ is defined by
\[
\op{relint}(X) := \{ \x \in X \mid \exists \varepsilon > 0, \B{B}_{\varepsilon}^{\op{aff}(X)}(\x) \subseteq X \}.
\] 
It is well known that for any non-empty convex set $\C{K}$, the set $\op{relint}(\C{K})$ is always non-empty.
We will always assume that the feasible set contains at least two points and therefore $\op{dim}(\op{aff}(\C{K})) \geq 1$, otherwise the optimization problem is trivial and there is nothing to solve.

A set function $f: \{0,1\}^{d} \rightarrow \B{R}^+$ is called \emph{submodular} if
for all $\x,\y \in \{0,1\}^{d}$ with $\x \geq \y$, we have
\begin{align}   \label{def:sub}
f(\x \vee \BF{a}) - f(\x) \leq f(\y \vee \BF{a}) - f(\y)
    ,\qquad \forall \BF{a} \in \{0, 1\}^{d}.
\end{align}
Submodular functions can be generalized over continuous domains. 
A function $F: [0,1]^{d} \rightarrow \mathbb{R}^+$ is called \emph{DR-submodular} if for all vectors
$\x,\y \in [0,1]^{d}$ with $\x \leq \y$, any basis vector $\BF{e}_{i} = (0,\cdots,0,1,0,\cdots,0)$ and any constant $c > 0$ such that $\x + c \BF{e}_{i} \in [0,1]^{d}$ and $\y + c \BF{e}_{i} \in [0,1]^{d}$, it holds that
\begin{align}	\label{def:DR-sub}
	F(\x + c \BF{e}_{i}) - F(\x) \geq F(\y + c \BF{e}_{i}) - F(\y).
\end{align}
Note that if function $F$ is differentiable then the diminishing-return (DR) property (\ref{def:DR-sub}) is equivalent to $\nabla F(\x) \geq \nabla F(\y)$ for $\x \leq \y \text{ with } \x,\y \in [0,1]^{d}.$
A function $F: \C{D} \to \mathbb{R}^{+}$ is \textit{$G$-Lipschitz continuous} if for all $\x, \y \in \C{D}$, $\| F(\x) - F(\y) \| \leq G \| \x - \y \|$.
A differentiable function $F: \C{D} \to \B{R}^+$ is \textit{$L$-smooth} if for all $\x, \y \in \C{D}$, $\| \nabla F(\x) - \nabla F(\y) \| \leq L \| \x -\y \|$.

A (possibly randomized) offline algorithm is said to be an $\alpha$-approximation algorithm (for constant $\alpha \in (0, 1]$) with $\epsilon \geq 0$ additive error for a class of maximization problems over non-negative functions if, for any problem instance $\max_{\z \in \C{K}} F(\z)$, the algorithm output $\x$ that satisfies the following relation with the optimal solution $\z^*$
\begin{align}\label{def:appxalg}
    \alpha F(\z^*) - \B{E}[ F(\x) ] \leq \epsilon,
\end{align} 
where the expectation is with respect to the (possible) randomness of the algorithm. Further, we assume an oracle that can query the value $F(\x) $ or the gradient $\nabla F(\x)$. The number of calls to the oracle to achieve the error in \eqref{def:appxalg} is called the \textit{evaluation complexity}.

\section{Offline Algorithms and Guarantees}

In this section, we consider the problem of maximizing a DR-submodular function over a general convex set in sixteen different cases, enumerated in \cref{TBL:offline}.
After setting up the problem in \cref{sec:offline:setup}, we then explain two key elements of our proposed algorithm when we only have access to a value oracle, (i) the \textsf{Black Box Gradient Estimate (BBGE)} procedure (Algorithm~\ref{ALG:gradient_estimate}) to balance bias and variance in estimating gradients (\cref{sec:offline:BBGE}) and (ii) the construction of a shrunken feasible region to avoid infeasible value oracle queries during the BBGE procedure (\cref{SS:construction}). Our main algorithm is proposed in \cref{sec:offline:alg} and analyzed in \cref{sec:offline:analysis}. 

\subsection{Problem Setup}\label{sec:offline:setup}

We consider  a general \emph{non-oblivious}
constrained stochastic optimization problem
\begin{align}
\max_{\z \in \C{K}} F(\z) := \max_{\z \in \C{K}} \B{E}_{\x \sim p(\x; \z)} [\hat{F}(\z, \x)],
\end{align}
where $F$ is a DR-submodular function, and $\hat{F}: \C{K} \times \F{X} \to \B{R}$ is determined by $\z$ and the random variable $\x$ which is independently sampled according to $\x \sim p(\x; \z)$.
We say the oracle has variance $\sigma^2$ if
$
\sup_{\z \in \C{K}} \op{var}_{\x \sim p(\x; \z)} [\hat{F}(\z, \x)] = \sigma^2.
$
In particular, when $\sigma = 0$, then we say we have access to an exact (deterministic) value oracle.
We will use $\hat{F}(\z)$ to denote the random variables $\hat{F}(\z, \x)$ where $\x$ is a random variable with distribution $p(\dot; \z)$.
Similarly, we say we have access to a stochastic gradient oracle if we can sample from function $\hat{G}: \C{K} \times \F{Y} \to \B{R}$ such that
$
\nabla F(\z) = \B{E}_{\y \sim q(\y; \z)} [\hat{G}(\z, \y)],
$
and $\hat{G}$ is determined by $\z$ and the random variable $\y$ which is sampled according to $\y \sim q(\y; \z)$.
Note that oracles are only defined on the feasible set.
We will use $\hat{G}(\z)$ to denote the random variables $\hat{G}(\z, \y)$ where $\y$ is a random variable with distribution $q(\dot; \z)$.

\begin{assumption} \label{assumptions}
We assume that $F : [0, 1]^d \to \B{R}$ is DR-submodular, first-order differentiable, non-negative, $G$-Lipschitz for some $G<\infty$, and $L$-smooth for some $L<\infty$.
We also assume the feasible region $\C{K}$ is a closed convex domain in $ [0, 1]^{d}$ with at least two points.
Moreover, we also assume that we either have access to a value oracle with variance $\sigma_0^2 \geq 0$ or a gradient oracle with variance $\sigma_1^2 \geq 0$.
\end{assumption}

\begin{remark}\label{remark:assumptions}
The proposed algorithm does not need to know the values of $L$, $G$, $\sigma_0$ or $\sigma_1$.
However, these constants appear in the final expressions of the number of oracle calls and the regret bounds.
\end{remark}

\subsection{Black Box Gradient Estimate}\label{sec:offline:BBGE}

Without access to a gradient oracle (i.e., first-order information), we estimate gradient information using samples from a value oracle. We will use a variation of the ``smoothing trick" technique \cite{flaxman2005online, hazan2016introduction,agarwal2010optimal,shamir17_optim_algor_bandit_zero_order,zhang19_onlin_contin_submod_maxim,chen20_black_box_submod_maxim,zhang23_onlin_learn_non_submod_maxim}, which involves averaging through spherical sampling around a given point.

\begin{definition} [Smoothing Trick] \label{def:smooth_definition}
For a function $F : \C{D} \to \B{R}$ defined on $\C{D} \subseteq \B{R}^d$, its $\delta$-smoothed version $\tilde{F}_\delta$ is 
given as 
\begin{equation}
    \label{eq:smooth_definition}
    \tilde{F}_\delta(\x) 
    := \B{E}_{\z \sim \B{B}_\delta^{\op{aff}(\C{D})}(\x)}[F(\z)] 
    = \B{E}_{\vv \sim \B{B}_1^{\op{aff}(\C{D}) - \x}(0)}[F(\x + \delta \vv)],
\end{equation}
where $\vv$ is chosen uniformly at random from the $\op{dim}(\op{aff}(\C{D}))$-dimensional  ball $\B{B}_1^{\op{aff}(\C{D}) - \x}(0)$. 
Thus, the function value $\tilde{F}_\delta(\x)$ is obtained by ``averaging'' $F$ over a sliced ball of radius $\delta$ around $\x$.
\end{definition}

When the value of $\delta$ is clear from the context, we may drop the subscript and simply use $\tilde{F}$ to denote the smoothed version of $F$.
It can be easily seen that if $F$ is DR-submodular, $G$-Lipschitz continuous, and $L$-smooth, then so is $\tilde{F}$ and
$\| \tilde{F}(\x) - F(\x) \| \le \delta G$, for any point in the domain of both functions.
Moreover, if $F$ is monotone, then so is $\tilde{F}$ (Lemma~\ref{L:smooth_approx}). 
Therefore $\tilde{F}_\delta$ is an approximation of the function $F$.
A maximizer of $\tilde{F}_\delta$ also maximizes $F$ approximately.

Our definition of smoothing trick differs from the standard usage by accounting for the affine hull containing $\C{D}$.
This will be of particular importance when the feasible region is of (affine) dimension less than $d$, such as when there are equality constraints. 
When $\op{aff}(\C{D}) = \B{R}^d$, our definition reduces to the standard definition of the smoothing trick. In this case, it is well-known that the gradient of the smoothed function $\tilde{F}_\delta$ admits an unbiased one-point estimator~\cite{flaxman2005online, hazan2016introduction}. 
Using a two-point estimator instead of the one-point estimator results in smaller variance \cite{agarwal2010optimal,shamir17_optim_algor_bandit_zero_order}.
In Algorithm 1,  we adapt the two-point estimator to the general setting.

\subsection{Construction of \texorpdfstring{$\C{K}_\delta$}{Shrunk Constraint Set}}\label{SS:construction}

\begin{wrapfigure}{r}{0.54\textwidth}
\vspace{-.35in}
\begin{minipage}{0.54\textwidth}
\begin{algorithm}[H]
\caption{\textsf{Black Box Gradient Estimate (BBGE)}}\label{ALG:gradient_estimate}
\begin{algorithmic}[1]
\STATE {\bfseries Input:} Point $\z$, sampling radius $\delta$, constraint linear space $\C{L}_0$, $k = \op{dim}(\C{L}_0)$, batch size $B$
\STATE Sample $\uu_1, \cdots, \uu_B$ i.i.d.\ from $S^{d-1} \cap \C{L}_0$
\STATE For $i=1$ to $B$, let $\y_i^+ \gets \z+\delta \uu_i, y_i^- \gets \z - \delta \uu_i$, and evaluate $\hat{F}(\y_i^+), \hat{F}(\y_i^-)$
\STATE $\g \gets \frac{1}{B}\sum_{i=1}^{B} \frac{k}{2\delta}\left[ \hat{F}(\y_i^+)-\hat{F}(\y_i^-) \right] \uu_i$
\STATE Output $\g$
\end{algorithmic}
\end{algorithm}
\end{minipage}
\end{wrapfigure}

We want to run Algorithm 1 as a subroutine within the main algorithm to estimate the gradient.
However, in order to run Algorithm 1, we need to be able to query the oracle within the set $\B{B}_\delta^{\op{aff}(\C{K})}(\x)$.
Since the oracle can only be queried at points within the feasible set, we need to restrict our attention to a set $\C{K}_\delta$ such that $\B{B}_\delta^{\op{aff}(\C{K})}(\C{K}_\delta) \subseteq \C{K}$. On the other hand, we want the optimal point of $F$ within $\C{K}_\delta$ to be close to the optimal point of $F$ within $\C{K}$.
One way to ensure that is to have $\C{K}_\delta$ not be too small.
More formally, we want that
$\B{B}_{\delta'}^{\op{aff}(\C{K})}(\C{K}_\delta) \supseteq \C{K}$, 
for some value of $\delta' \geq \delta$ that is not too large.
The constraint boundary could have a complex geometry, and simply maintaining a $\delta$ sized margin away from the boundary can result in big gaps between the boundary of $\C{K}$ and $\C{K}_\delta$.
For example, in two dimensions, if $\C{K}$ is polyhedral and has an acute angle, maintaining a $\delta$ margin away from both edges adjacent to the acute angle means the closest point in the $\C{K}_\delta$ to the corner may be much more than $\delta$. 
For this construction, we choose a $\BF c \in \op{relint}(\C{K})$ and a real number $r > 0$ such that $\B{B}_r^{\op{aff}(\C{K})}(\BF c) \subseteq \C{K}$.
For any $\delta < r$, we define
\begin{equation}
    \C{K}_\delta^{\BF c, r} := (1 - \frac{\delta}{r}) \C{K} + \frac{\delta}{r} \BF c. \label{eq:def:shrunkenK:cr}
\end{equation}
Clearly if $\C{K}$ is downward-closed, then so is $\C{K}_\delta^{\BF c, r}$. 
Lemma~\ref{L:shrunk_contrainst_set} shows that for any such choice of $\BF c$ and $r > 0$, we have $\frac{\delta'}{\delta} \leq \frac{D}{r}$.
See Appendix~\ref{APX:construction} for more details about the choice of $\BF c$ and $r$.
We will drop the superscripts in the rest of the paper when there is no ambiguity.
\begin{remark}
This construction is similar to the one carried out in~\cite{zhang19_onlin_contin_submod_maxim} which was for $d$-dimensional downward-closed sets. Here we impose no restrictions on $\C{K}$ beyond Assumption~\ref{assumptions}.
A simpler construction of shrunken constraint set was proposed in~\cite{chen20_black_box_submod_maxim}.
However, as we discuss in Appendix~\ref{APX:issues}, they require to be able to query outside of the constraint set.
\end{remark}

\if\TARGET0
\else
\vspace{2cm}
\fi

\subsection{Generalized DR-Submodular Frank-Wolfe}\label{sec:offline:alg}

\begin{wrapfigure}{r}{0.54\textwidth}
\vspace{-.2in}
\begin{minipage}{0.54\textwidth}
\begin{algorithm}[H]
\begin{algorithmic}[1]
\STATE {\bfseries Input:} Constraint set $\C{K}$,  iteration limit $N \geq 4$, sampling radius $\delta$, gradient step-size $\{\rho_n\}_{n=1}^N$
\STATE Construct $\C{K}_\delta$
\STATE Pick any $\z_1 \in \op{argmin}_{\z \in \C{K}_\delta} \|\z\|_\infty$
\STATE $\bar{\BF{g}}_0 \gets \BF{0}$
\FOR{$n = 1$ {\bfseries to} $N$}
\STATE $\BF g_n \gets \op{estimate-grad}(\z_n, \delta, \C{L}_0 = \op{aff}(\C{K}) - \z_1)$
\STATE $\bar{\BF{g}}_n \gets (1 - \rho_n)\bar{\BF{g}}_{n-1} + \rho_n \BF{g}_n$
\STATE $\BF v_n \gets \op{optimal-direction}(\bar{\BF g}_n, \z_n)$
\STATE $\z_{n+1} \gets \op{update}(\z_n, \BF v_n, \varepsilon)$
\ENDFOR
\STATE Output $\z_{N+1}$
\end{algorithmic}
\caption{\textsf{Generalized DR-Submodular Frank-Wolfe}}\label{ALG:main_offline}
\end{algorithm} 
\end{minipage}
\vspace{-.2in}
\end{wrapfigure}

The pseudocode of our proposed offline algorithm, \textsf{Generalized DR-Submodular Frank-Wolfe}, is shown in Algorithm 2.  At a high-level, it follows the basic template of Frank-Wolfe type methods, where over the course of a pre-specified number of iterations, the gradient (or a surrogate thereof) is calculated, an optimization sub-routine with a linear objective is solved to find a feasible point whose difference (with respect to the current solution) has the largest inner product with respect to the gradient, and then the current solution is updated to move in the direction of that feasible point.

However, there are a number of important modifications to handle properties of the objective function, constraint set, and oracle type.
For the oracle type, for instance, standard Frank-Wolfe methods assume access to a deterministic gradient oracle.
Frank-Wolfe methods are known to be sensitive to errors in estimates of the gradient (e.g., see~\cite{hassani17_gradien_method_submod_maxim}).
Thus, when only a stochastic gradient oracle or even more challenging, only a stochastic value oracle is available, the gradient estimators must be carefully designed to balance query complexity on the one hand and output error on the other. The \textsf{Black Box Gradient Estimate (BBGE)} sub-routine, presented in Algorithm 1, utilizes spherical sampling to produce an unbiased gradient estimate. This estimate is then combined with past estimates using momentum, as seen in~\cite{mokhtari20_stoch_condit_gradien_method}, to control and reduce variance.

Our algorithm design is influenced by state-of-the-art methods that have been developed for specific settings.
One of the most closely related works is \cite{chen20_black_box_submod_maxim}, which also dealt with  using value oracle access for optimizing monotone functions. They used momentum and spherical sampling techniques that are similar to the ones we used in our Algorithm 1. 
However, we modified the sampling procedure and the solution update step. 
In their work, \cite{chen20_black_box_submod_maxim} also considered a shrunken feasible region to avoid sampling close to the boundary. 
However, they assumed that the value oracle could be queried outside the feasible set (see Appendix~\ref{APX:issues} for details).

In Algorithm 3, we consider the following cases for the function and the feasible set.
\begin{enumerate}[label=(\Alph*), leftmargin=*]
\item \label{ALG:offline-monotone_dc} If $F$ is monotone  DR-submodular and $\BF{0} \in \C{K}$,
 we choose
\begin{equation*}
\op{optimal-direction}(\bar{\BF{g}}_n, \z_n) 
= \op{argmax}_{\BF{v} \in \C{K}_\delta - \z_1} \langle \BF{v}, \bar{\BF{g}}_n \rangle
,\ 
\op{update}(\z_n, \BF{v}_n, \varepsilon) = \z_n + \varepsilon \BF{v}_n,
\end{equation*}

and $\varepsilon = 1/N$.
We start at a point near the origin and always move to points that are bigger with respect to the partial order on $\B{R}^d$.
In this case, since the function is monotone, the optimal direction is a maximal point with respect to the partial order.
The choice of $\varepsilon = 1/N$ guarantees that after $N$ steps, we arrive at a convex combination of points in the feasible set and therefore the final point is also in the feasible set.
The fact that the origin is also in the feasible set shows that the intermediate points also belong to the feasible set.

\item \label{ALG:offline-non-monotone_dc} If $F$ is non-monotone DR-submodular and $\C{K}$ is a downward closed set containing $0$,
 we choose
\begin{equation*}
\op{optimal-direction}(\bar{\BF{g}}_n, \z_n) 
= \op{argmax}_{\substack{
\BF{v} \in \C{K}_\delta - \z_1 \\
\BF{v} \leq \BF{1} - \z_n
}} \langle \BF{v},\bar{\BF{g}}_n\rangle,
\ 
\op{update}(\z_n, \BF{v}_n, \varepsilon) = \z_n + \varepsilon \BF{v}_n,
\end{equation*}

and $\varepsilon = 1/N$. This case is similar to~\ref{ALG:offline-monotone_dc}.
However, since $F$ is not monotone, we need to choose the optimal direction more conservatively.

\item \label{ALG:offline-monotone_g} If $F$ is monotone DR-submodular and $\C{K}$ is a general convex set,
we choose
\begin{equation*}
\op{optimal-direction}(\bar{\BF{g}}_n, \z_n) 
= \op{argmax}_{\BF{v} \in \C{K}_\delta} \langle \BF{v},\bar{\BF{g}}_n\rangle,
\ 
\op{update}(\z_n, \BF{v}_n, \varepsilon) = (1 - \varepsilon)\z_n + \varepsilon \BF{v}_n,
\end{equation*}

and 
$\varepsilon = \log(N)/2N$. In this case, if we update like in cases (A) and (B), we do not have any guarantees of ending up in the feasible set, so we choose the update function to be a convex combination.
Unlike~\ref{ALG:offline-non-monotone_dc}, we do not need to limit ourselves in choosing the optimal direction and we simply choose $\varepsilon$ to obtain the best approximation coefficient.

\item \label{ALG:offline-non-monotone_g} If $F$ is non-monotone DR-submodular and $\C{K}$ is a general convex set, 
we choose
\begin{equation*}
\op{optimal-direction}(\bar{\BF{g}}_n, \z_n) 
= \op{argmax}_{\BF{v}\in \C{K}_\delta} \langle \BF{v},\bar{\BF{g}}_n\rangle,
\quad
\op{update}(\z_n, \BF{v}_n, \varepsilon) = (1 - \varepsilon)\z_n + \varepsilon \BF{v}_n,
\end{equation*}

and $\varepsilon = \log(2)/N$. This case is similar to~\ref{ALG:offline-monotone_g} and we choose $\varepsilon$ to obtain the best approximation coefficient.
\end{enumerate}

The choice of subroutine $\op{estimate-grad}$ and $\rho_n$ depend on the oracle.
If we have access to a gradient oracle $\hat{G}$, we set $\op{estimate-grad}(\z, \delta, \C{L}_0)$ to be the average of $B$ evaluations of $P_{\C{L}_0}(\hat{G}(\z))$.
Otherwise, we run Algorithm 1 with input $\z$, $\delta$, $\C{L}_0$.
If we have access to a deterministic gradient oracle, then there is no need to use any momentum and we set $\rho_n = 1$.
In other cases, we choose $\rho_n = \frac{2}{(n+3)^{2/3}}$.

\subsection{Approximation Guarantees for the Proposed Offline Algorithm}\label{sec:offline:analysis}

\begin{theorem}\label{T:main_offline}
{Suppose \cref{assumptions} holds.}
Let $N \geq 4$, $B \geq 1$ and choose $\BF{c} \in \C{K}$ and $r > 0$  according to Section~\ref{SS:construction}.
If we have access to a gradient oracle, we choose $\delta = 0$, otherwise we choose $\delta \in (0, r/2)$.
Then the following results hold for the output $\z_{N+1}$ of Algorithm 2.
\begin{enumerate}[label=(\Alph*), leftmargin=*]
\item \label{T:offline-monotone_dc} If $F$ is monotone DR-submodular and $\BF{0} \in \C{K}$,
then 
\begin{equation}
(1-e^{-1}) F(\z^*) - \B{E}[F(\z_{N+1})] 
\leq \frac{3 D Q^{1/2}}{N^{1/3}} + \frac{L D^2}{2 N} 
    + \delta G(2 + \frac{\sqrt{d} + D}{r}).
\end{equation}

\item \label{T:offline-non-monotone_dc} If $F$ is non-monotone DR-submodular and $\C{K}$ is a downward closed set containing $\BF{0}$,
then
\begin{equation}
e^{-1} F(\z^*) - \B{E}[F(\z_{N+1})]
\leq \frac{3 D Q^{1/2}}{N^{1/3}} + \frac{L D^2}{2 N} 
    + \delta G(2 + \frac{\sqrt{d} + 2 D}{r}).
\end{equation}

\item \label{T:offline-monotone_g} If $F$ is monotone DR-submodular and $\C{K}$ is a general convex set,
then
\begin{equation}
\frac{1}{2} F(\z^*) - \B{E}[F(\z_{N+1})] 
\leq \frac{3 D Q^{1/2} \log(N)}{2 N^{1/3}}
    + \frac{4DG + L D^2 \log(N)^2}{8 N}
    + \delta G(2 + \frac{D}{r}).
\end{equation}

\item \label{T:offline-non-monotone_g} If $F$ is non-monotone DR-submodular and $\C{K}$ is a general convex set, 
then
\begin{equation}
\frac{1}{4} (1 - \|\z_1\|_\infty) F(\z^*) - \B{E}[F(\z_{N+1})] 
\leq \frac{3 D Q^{1/2}}{N^{1/3}} + \frac{D G + 2 L D^2}{4 N} 
    + \delta G(2 + \frac{D}{r}).
\end{equation}
\end{enumerate}
In all these cases, we have
\[
Q = \begin{cases}
0
    &\text{det. grad. oracle},\\
\max \{4^{2/3}G^2,6 L^2 D^2+\frac{4 \sigma_1^2}{B} \}
    &\text{stoch. grad. oracle with variance } \sigma_1^2 > 0,\\
\max \{4^{2/3}G^2,6 L^2 D^2+\frac{4 C k G^2+2 k^2\sigma_0^2/\delta^2}{B} \}
    &\text{value oracle with variance } \sigma_0^2 \geq 0,\\
\end{cases}
\]
$C$ is a constant, $k = \op{dim}(\C{K})$, $D = \op{diam}(\C{K})$, and $\z^*$ is the global maximizer of $F$ on $\C{K}$.
\end{theorem}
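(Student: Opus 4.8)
The plan is to analyze the Frank-Wolfe iteration through a unified ``surrogate gradient'' lens, bounding three error sources separately: (i) the smoothing bias from replacing $F$ by $\tilde F_\delta$ and from shrinking $\C{K}$ to $\C{K}_\delta$; (ii) the momentum-averaged gradient estimation error $\|\bar{\BF{g}}_n - \nabla \tilde F_\delta(\z_n)\|$; and (iii) the usual Frank-Wolfe discretization error controlled by $L$-smoothness. I would then combine these via the appropriate DR-submodular ``ascent lemma'' for each of the four cases (A)--(D), and finally optimize nothing (the constants are already chosen) but simply substitute the step-size choices $\varepsilon$ and $\rho_n$ to read off the stated rates.

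\textbf{Step 1 (Smoothing and shrinking errors).} Using Lemma~\ref{L:smooth_approx}, $\|\tilde F_\delta - F\|_\infty \le \delta G$, and $\tilde F_\delta$ inherits DR-submodularity, monotonicity, $G$-Lipschitzness and $L$-smoothness. Using Lemma~\ref{L:shrunk_contrainst_set}, every point of $\C{K}$ is within $\delta D/r$ (in $\op{aff}(\C{K})$) of a point of $\C{K}_\delta$, so $\max_{\C{K}_\delta} F \ge \max_{\C{K}} F - G\delta D/r$ (and for the monotone-d.c.\ and d.c.\ cases one also accounts for the displacement of $\z_1$ from the origin, which contributes the $\sqrt d/r$ term since $\|\z_1\|_\infty \le \delta \sqrt d /r$ roughly). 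These bookkeeping facts produce exactly the $\delta G(2 + (\sqrt d + D)/r)$ or $\delta G(2 + D/r)$ additive terms. For the gradient-oracle case $\delta = 0$ and these vanish.

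\textbf{Step 2 (Variance of the gradient estimate and the momentum recursion).} For the value-oracle case I would invoke the properties of Algorithm~\ref{ALG:gradient_estimate}: the two-point spherical estimator $\BF{g}_n$ is an unbiased estimator of $\nabla \tilde F_\delta(\z_n)$ restricted to $\C{L}$, with second moment bounded by something like $C d G^2 + d^2\sigma_0^2/(2\delta^2)$ per sample, hence $\le (CdG^2 + d^2\sigma_0^2/(2\delta^2))/B$ for the variance of the $B$-sample average (this is where the $4CdG^2 + 2d^2\sigma_0^2/\delta^2$ inside $Q$ comes from, up to constants); for the stochastic-gradient-oracle case the analogous bound is $\sigma_1^2/B$. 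Then, following \cite{mokhtari20_stoch_condit_gradien_method}, the momentum recursion $\bar{\BF{g}}_n = (1-\rho_n)\bar{\BF{g}}_{n-1} + \rho_n\BF{g}_n$ with $\rho_n = 2/(n+3)^{2/3}$ gives $\B{E}\|\bar{\BF{g}}_n - \nabla\tilde F_\delta(\z_n)\|^2 \le Q/(n+4)^{2/3}$, where the $L$-smoothness of $\tilde F_\delta$ controls the drift $\|\nabla\tilde F_\delta(\z_{n+1}) - \nabla\tilde F_\delta(\z_n)\| \le L\|\z_{n+1}-\z_n\| \le L D\varepsilon$, and the $6L^2D^2$ and $4^{2/3}G^2$ pieces of $Q$ arise from this drift term and the initialization term respectively. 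This is the step I expect to be \textbf{the main obstacle}: getting the recursion constants to line up cleanly across all oracle types simultaneously, and in particular verifying the second-moment bound for the sliced-ball estimator in the lower-dimensional $\op{aff}(\C{K})$ setting requires care (the factor $d$ should really be $\dim(\op{aff}(\C{K}))$, and one must check the one-point-to-two-point variance reduction argument still applies on the sphere $S^{d-1}\cap\C{L}$).

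\textbf{Step 3 (Per-case Frank-Wolfe analysis and assembling the bound).} With $e_n := \B{E}\|\bar{\BF{g}}_n - \nabla\tilde F_\delta(\z_n)\|$ so that $\sum_{n=1}^N e_n \lesssim \sum_n Q^{1/2}/(n+4)^{1/3} \lesssim Q^{1/2} N^{2/3}$, I would run the standard DR-submodular Frank-Wolfe potential argument for each case. For case (A) monotone with $\BF{0}\in\C{K}$: using $\langle \BF{v}_n - \z^*, \bar{\BF{g}}_n\rangle \ge 0$ by optimality of $\BF{v}_n$, DR-submodularity gives $\langle \BF{v}_n, \nabla\tilde F_\delta(\z_n)\rangle \ge \tilde F_\delta(\z^*) - \tilde F_\delta(\z_n)$ (up to the gradient error), and $L$-smoothness gives $\tilde F_\delta(\z_{n+1}) \ge \tilde F_\delta(\z_n) + \varepsilon\langle\BF{v}_n,\nabla\tilde F_\delta(\z_n)\rangle - \tfrac{L D^2}{2}\varepsilon^2$; unrolling with $\varepsilon = 1/N$ yields the $(1-e^{-1})$ factor via $(1-1/N)^N \to e^{-1}$, plus error terms $\tfrac{1}{N}\sum_n D e_n \lesssim DQ^{1/2}/N^{1/3}$ and $\tfrac{LD^2}{2N}$. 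Cases (B), (C), (D) follow the same skeleton with the case-specific choices: (B) uses the constrained direction $\BF{v}\le\BF{1}-\z_n$ and the non-monotone d.c.\ inequality to get $1/e$; (C) uses the convex-combination update and the monotone inequality $\langle\BF{v}_n,\nabla\tilde F_\delta(\z_n)\rangle \ge \tilde F_\delta(\z^*) - \tilde F_\delta(\z_n)$ together with $\varepsilon = \log N/(2N)$ to get $1/2$, the extra $\log N$ factors and $4DG/(8N)$ term tracking the $G$-Lipschitz slack; (D) uses $\varepsilon = \log 2/N$ and the non-monotone general-set inequality involving $(1-\|\z_1\|_\infty)$ to get $\tfrac14(1-\|\z_1\|_\infty)$. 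In each case substituting the chosen $\varepsilon$, bounding $\sum e_n$ by $Q^{1/2}N^{2/3}$, and collecting the smoothing terms from Step~1 reproduces exactly the four displayed inequalities.
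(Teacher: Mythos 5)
Your overall route is the paper's route: the same three-part decomposition (smoothing/shrinking bookkeeping via Lemmas~\ref{L:smooth_approx}, \ref{L:shrunk_contrainst_set} and~\ref{L:norm_z_1}; a momentum recursion giving $\B{E}\|\bar{\g}_n-\tFgrad(\z_n)\|^2\le Q/(n+4)^{2/3}$, which is exactly Lemma~\ref{L:derivative_estimate_control} built on Lemma~\ref{lem:variance_reduction}; and a per-case Frank--Wolfe potential argument), and the sliced-sphere issue you flag in Step~2 is precisely what Lemma~\ref{L:tilde_F_mean_variance} resolves by an orthonormal change of variables reducing to the full-dimensional two-point lemma.

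There is, however, a genuine gap in your case~(C), which happens to be the one part of the theorem the paper presents as new. You invoke ``the monotone inequality $\langle \vv_n,\nabla\tF_\delta(\z_n)\rangle \ge \tF_\delta(\z^*)-\tF_\delta(\z_n)$''; that is the case-(A) inequality, and it is only available there because, with $\BF{0}\in\C{K}$, the comparison direction can be taken to be the \emph{nonnegative} vector $(\z^*_\delta-\z_n)\vee 0 \le \z^*_\delta$, along which DR-submodularity gives concavity. Over a general convex set the relevant direction is $\z^*_\delta-\z_n$, which generally has coordinates of both signs, and the concavity-type inequality you wrote is false for DR-submodular functions; moreover, if it were true, unrolling the resulting recursion $h_{n+1}\le(1-\varepsilon)h_n+\text{errors}$ would give an approximation ratio far better than $1/2$, which is inconsistent both with the theorem you are proving and with your own choice $\varepsilon=\log N/(2N)$. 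The ingredient the paper actually uses is Lemma~\ref{L:F_join_meet_non-monotone}: $\langle\nabla\tF(\z_n),\z^*_\delta-\z_n\rangle \ge \tF(\z^*_\delta\vee\z_n)+\tF(\z^*_\delta\wedge\z_n)-2\tF(\z_n) \ge \tF(\z^*_\delta)-2\tF(\z_n)$, using monotonicity and non-negativity; the factor $2$ is what produces the $(1-2\varepsilon)$ contraction, and with $\varepsilon=\log N/(2N)$ (so $(1-2\varepsilon)^N\le 1/N$) this is exactly where the $\tfrac12$ coefficient, the $DG/2N$ term, and the $\log N$ factors come from. The same join/meet lemma, combined with Lemma~\ref{L:F_join_non-monotone} and the induction $1-\|\z_n\|_\infty\ge(1-\varepsilon)^{n-1}(1-\|\z_1\|_\infty)$, is also what drives case~(D) (Appendix~\ref{APX:non-monotone_g}); your sketch names only the $(1-\|\z_1\|_\infty)$ part, so be sure the $-2\tF(\z_n)$ term and the resulting $(1-2\varepsilon)$ recursion appear there as well.
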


\cref{T:main_offline} characterizes the  worst-case approximation ratio $\alpha$ and additive error bounds for different properties of the function and feasible region, where the additive error bounds depend on selected parameters $N$ for the number of iterations, batch size $B$, and sampling radius $\delta$.

The proof of Parts~\ref{T:offline-monotone_dc}-\ref{T:offline-non-monotone_g} is provided in Appendix~\ref{APX:monotone_dc}-\ref{APX:non-monotone_g}, respectively.

{
The proof of parts~\ref{T:offline-monotone_dc},~\ref{T:offline-non-monotone_dc} and~\ref{T:offline-non-monotone_g}, when we have access to an exact gradient oracle is similar to the proofs presented in~\cite{bian17_guaran_non_optim,bian17_contin_dr_maxim,mualem22_resol_approx_offlin_onlin_non}, respectively.
Part~\ref{T:offline-monotone_g} is the first analysis of a Frank-Wolfe type algorithm over general convex sets when the oracle can only be queried within the feasible set.
When we have access to a stochastic gradient oracle, directly using a gradient {sample} can result in arbitrary bad performance as shown in Appendix B of~\cite{hassani17_gradien_method_submod_maxim}.
The momentum technique, first used in continuous submodular maximization in~\cite{mokhtari20_stoch_condit_gradien_method}, is used when we have access to a stochastic gradient oracle.
The control on the estimate of the gradient is deferred to Lemma~\ref{L:derivative_estimate_control}.
Since the momentum technique is robust to noise in the gradient, when we only have access to a value oracle, we can use Algorithm~\ref{ALG:gradient_estimate}, similar to~\cite{chen20_black_box_submod_maxim}, to obtain an unbiased estimate of the gradient and complete the proof.
}

\cref{T:offline-complexity} converts those bounds to characterize the oracle complexity for a user-specified additive error tolerance $\epsilon$ based on oracle properties (deterministic/stochastic gradient/value). 
The 16 combinations of the problem settings listed in Table 1 are enumerated by four cases (A)--(D)  in \cref{T:main_offline} of function and feasible region properties (resulting in different approximation ratios) and the four cases 1--4 enumerated in \cref{T:offline-complexity} below of oracle properties. For the oracle properties, we consider the four cases as (Case 1): deterministic gradient oracle, (Case 2): stochastic gradient oracle, (Case 3): deterministic value oracle, and (Case 4): stochastic value oracle.

\begin{theorem}\label{T:offline-complexity}
The number of oracle calls for different oracles to achieve an $\alpha$-approximation error of smaller than $\epsilon$ using Algorithm 1 is 
\begin{equation}
\text{\bf Case 1: } \tilde{O} (1/\epsilon),\quad \text{\bf Cases 2, 3: } \tilde{O}(1/\epsilon^3), \quad \text{\bf Case 4: } \tilde{O}(1/\epsilon^5).\nonumber
\end{equation}
Moreover, in all of the cases above, if $F$ is non-monotone or $0 \in \C{K}$, we may replace $\tilde{O}$ with $O$.
\end{theorem}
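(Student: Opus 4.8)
The plan is to feed the four additive-error bounds of \cref{T:main_offline} into a routine parameter-tuning argument: for each oracle type we pick the iteration count $N$, the batch size $B$, and (when applicable) the smoothing radius $\delta$ so that the stated right-hand side is at most $\epsilon$, and then read off the number of oracle calls, which is $N$ for a deterministic gradient oracle (where $B=1$ and $\rho_n=1$), $\Theta(NB)$ for a stochastic gradient oracle, and $\Theta(2NB) = \Theta(NB)$ for a value oracle (two evaluations per sample in \cref{ALG:gradient_estimate}). The key observation is that every bound in \cref{T:main_offline} is a sum of three terms of the shape (i) $\Theta\!\big(D\,Q^{1/2}\,\op{polylog}(N)/N^{1/3}\big)$, (ii) $\Theta\!\big((DG+LD^2\op{polylog}(N))/N\big)$, and (iii) $\Theta(\delta G)$, with constants depending only on $D,G,L,r,d$ and $\z_1$; hence it suffices to force each of the three terms below $\epsilon/3$, and the whole proof reduces to choosing $N,B,\delta$ term by term and then checking feasibility constraints ($0<\delta<r$, $N\ge 4$) hold once $\epsilon$ is small enough.

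I would then go case by case through the oracle types. Case~1 (deterministic gradient): here $\delta=0$ and $Q=0$, so terms (i) and (iii) vanish and term (ii) is $\le\epsilon/3$ once $N=\tilde O(1/\epsilon)$; since $B=1$, the oracle count is $N=\tilde O(1/\epsilon)$. Cases~2 and~3 (stochastic gradient / deterministic value): term (iii) is either absent ($\delta=0$) or made $\le\epsilon/3$ by setting $\delta=\Theta(\epsilon)$, and — crucially — taking $B$ equal to a sufficiently large constant independent of $\epsilon$ makes $Q=\max\{4^{2/3}G^2,\,6L^2D^2+\Theta(1)/B\}$ a constant, so term (i) is the bottleneck and is $\le\epsilon/3$ once $N=\tilde O(1/\epsilon^3)$, while term (ii) needs only $N=\tilde O(1/\epsilon)$; the oracle count is $\Theta(NB)=\tilde O(1/\epsilon^3)$. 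Case~4 (stochastic value): now $Q$ carries the term $2d^2\sigma_0^2/(B\delta^2)$, so with $\delta=\Theta(\epsilon)$ we get $Q^{1/2}=\Theta\!\big(1+d\sigma_0/(\epsilon\sqrt{B})\big)$; substituting into term (i), the constant part of $Q^{1/2}$ still forces $N=\Omega(1/\epsilon^3)$, and the $\epsilon$-dependent part forces $N^{1/3}\sqrt{B}=\tilde\Omega(1/\epsilon^2)$. Choosing $N=\tilde\Theta(1/\epsilon^3)$ and $B=\tilde\Theta(1/\epsilon^2)$ meets both (and term (ii) is then automatic), so the oracle count is $\Theta(2NB)=\tilde O(1/\epsilon^5)$.

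For the last sentence of the statement, I would point out that logarithmic factors enter only through part \ref{T:offline-monotone_g} of \cref{T:main_offline}, where $\varepsilon=\log(N)/(2N)$ produces the $\log N$ and $(\log N)^2$ factors in the bound; in parts \ref{T:offline-monotone_dc}, \ref{T:offline-non-monotone_dc}, and \ref{T:offline-non-monotone_g} (i.e., $F$ non-monotone, or $\BF 0\in\C{K}$) the bounds contain no such factors, so the same parameter choices with the logs removed give clean powers of $1/\epsilon$, and every $\tilde O$ above becomes $O$.

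The main obstacle — everything else being substitution and arithmetic — is the joint tuning in Case~4: $\delta$ must be taken $\Theta(\epsilon)$ to control the smoothing bias, but this inflates the variance proxy $Q$ by $1/\delta^2=\Theta(1/\epsilon^2)$, which then has to be absorbed by a large batch $B$. The subtlety is recognizing that the \emph{irreducible} part of $Q$ (the $6L^2D^2$ and $4^{2/3}G^2$ terms, which persist as $B\to\infty$) already pins $N$ at $\Theta(1/\epsilon^3)$ regardless of $B$, so the product $NB$ cannot be pushed below $\tilde\Theta(1/\epsilon^5)$; getting this bookkeeping exactly right (rather than, say, over-provisioning $N$ and landing on a worse exponent) is the only place the argument needs care.
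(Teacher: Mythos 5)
Your proposal is correct and takes essentially the same route as the paper's proof: substitute the error bounds of \cref{T:main_offline}, set $\delta=\Theta(\epsilon)$ whenever a value oracle is used, choose $N=\tilde\Theta(1/\epsilon)$ or $\tilde\Theta(1/\epsilon^3)$ and $B=\Theta(1)$ or $\Theta(1/\epsilon^2)$ per oracle case, count $\Theta(NB)$ oracle calls, and note that logarithmic factors appear only in the monotone/general-convex-set case, so $\tilde{O}$ becomes $O$ otherwise. The only cosmetic difference is that in Cases 2--3 you take $B$ to be a ``sufficiently large constant'' whereas the paper simply uses $B=1$; since $Q$ is already $\Theta(1)$ there for any constant batch size, this does not affect the asymptotics.
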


See Appendix~\ref{APX:offline-complexity} for proof.

\section{Online DR-submodular optimization under bandit or semi-bandit feedback}

In this section, we first describe the Black-box Explore-Then-Commit algorithm that uses the offline algorithm for exploration, and uses the solution of the offline algorithm for exploitation. This is followed by the regret analysis of the proposed algorithm.  
This is the first algorithm for stochastic continuous DR-submodular maximization under bandit feedback and obtains state-of-the-art for semi-bandit feedback.

\subsection{Problem Setup}\label{onlinesetup}

There are typically two settings considered in online optimization with bandit feedback. 
The first is the adversarial setting, where the environment chooses a sequence of functions $F_1, \cdots, F_N$ and in each iteration $n$, the agent chooses a point $\z_n$ in the feasible set $\C{K}$, observes $F_n(z_n)$
and receives the reward $F_n(\z_n)$.
The goal is to choose the sequence of actions that minimize the following notion of expected $\alpha$-regret.
\begin{align}\label{EQ:regret-adv}
\C{R}_{\op{adv}}
:= \alpha \max_{\z \in \C{K}}\sum_{n = 1}^N F_n(\z) 
    - \B{E}\left[ \sum_{n = 1}^N F_n(\z_n) \right].
\end{align}
In other words, the agent's cumulative reward is being compared to $\alpha$ times the reward of the best \textit{constant} action in hindsight.
Note that, in this case, the randomness is over the actions of the policy.

The second is the stochastic setting, where the environment chooses a function $F: \C{K} \to \B{R}$ and a stochastic value oracle $\hat{F}$.
In each iteration $n$, the agent chooses a point $\z_n$ in the feasible set $\C{K}$, receives the reward $(\hat{F}(\z_n))_n$ by querying the oracle at $z_n$ and observes this reward.
Here the outer subscript $n$ indicates that the result of querying the oracle at time $n$, since the oracle is stochastic.
The goal is to choose the sequence of actions that minimize the following notion of expected $\alpha$-regret.
\begin{align}\label{EQ:regret-stoch}
\C{R}_{\op{stoch}}
:= \alpha N \max_{\z \in \C{K}} F(\z) 
    - \B{E}\left[ \sum_{n = 1}^N (\hat{F}(\z_n))_n \right]
= \alpha N \max_{\z \in \C{K}} F(\z) 
    - \B{E}\left[ \sum_{n = 1}^N F(\z_n) \right]
\end{align}

Further, two feedback models are considered -- bandit and semi-bandit feedback. 
In the bandit feedback setting, the agent only observes the value of the function $F_n$ at the point $\z_n$.
In the semi-bandit setting, the agent has access to a gradient oracle instead of a value oracle and observes $\hat{G}(\z_n)$ at the point $\z_n$, where $\hat{G}$ is an unbiased estimator of $\nabla F$.

In unstructured multi-armed bandit problems, any regret bound for the adversarial setup could be translated into bounds for the stochastic setup. 
However, having a non-trivial correlation between the actions of different arms complicates the relation between the stochastic and adversarial settings.
Even in linear bandits, the relation between adversarial linear bandits and stochastic linear bandits is not trivial. (e.g. see Section~29 in~\cite{lattimore2020bandit})
While it is intuitively reasonable to assume that the optimal regret bounds for the stochastic case are better than that of the adversarial case, such a result is not yet proven for DR-submodular functions. Thus, while the cases of bandit feedback has been studied in the adversarial setup, the results do not reduce to stochastic setup. We also note that in the cases where there are adversarial setup results, this paper finds that the results in the stochastic setup achieve improved regret bounds (See Table 3 in Supplementary for the comparison). 

\subsection{Algorithm for DR-submodular maximization with Bandit Feedback}

\setcounter{algorithm}{2}
\begin{wrapfigure}{r}{0.45\textwidth}
\vspace{-.2in}
\begin{minipage}{0.45\textwidth}
\begin{algorithm}[H]
\begin{algorithmic}[1]
\STATE {\bfseries Input: Horizon $T$, inner time horizon $T_0$} 
\STATE Run Algorithm~2 for $T_0$, with according to parameters described in Theorem~\ref{T:offline-complexity}.
\FOR{remaining time}
\STATE Repeat the last action of Algorithm~\ref{ALG:main_offline}.
\ENDFOR
\end{algorithmic}
\caption{\textsf{DR-Submodular Explore-Then-Commit}}
\label{ALG:DRETC}
\end{algorithm}
\end{minipage}
\end{wrapfigure}

The proposed algorithm is described in Algorithm~\ref{ALG:DRETC}. 
In Algorithm~\ref{ALG:DRETC}, if there is semi-bandit feedback in the form of a stochastic gradient sample for each action $\z_n$, we run the offline algorithm (Algorithm~\ref{ALG:main_offline}) with parameters from the proof of  case~2 of \cref{T:offline-complexity} for $T_0 = \lceil T^{3/4} \rceil$ total queries.
If only the stochastic reward for each action $\z_n$ is available (bandit feedback), we run the offline algorithm (Algorithm~\ref{ALG:main_offline}) with parameters from the proof of  case~4 of \cref{T:offline-complexity} for $T_0 = \lceil T^{5/6} \rceil$ total queries.  Then, for the remaining time (exploitation phase), we run the last action in the exploration phase.

\subsection{Regret Analysis for DR-submodular maximization with Bandit Feedback}

In this section, we provide the regret analysis for the proposed algorithm. 
We note that by \cref{T:offline-complexity},  Algorithm~\ref{ALG:main_offline} requires a sample complexity of $\tilde{O}(1/\epsilon^5)$ with a stochastic value oracle for offline problems (any of (A)--(D) in \cref{T:main_offline}).  Thus, the parameters and the results with bandit feedback are the same for all the four setups (A)--(D). Likewise,  when a stochastic gradient oracle is available, Algorithm~\ref{ALG:main_offline} requires a sample complexity of $\tilde{O}(1/\epsilon^3)$. 
Based on these sample complexities, the overall regret of online DR-submodular maximization problem is given as follows. 

\begin{theorem}\label{T:offline-to-online}
For an online constrained DR-submodular maximization problem  over a horizon $T$, where the expected reward function $F$, feasible region type $\C{K}$, and approximation ratio $\alpha$ correspond to any of the four cases (A)--(D) in \cref{T:main_offline},  \cref{ALG:DRETC} achieves $\alpha$-regret \eqref{EQ:regret-stoch} that is upper-bounded as: 
\begin{equation*}
\text{\bf Semi-bandit Feedback (Case 2): } \tilde{O}(T^{3/4}),\qquad  \text{\bf Bandit Feedback (Case 4): } \tilde{O}(T^{5/6}).
\end{equation*}
Moreover, in either type of feedback, if $F$ is non-monotone or $\BF{0} \in \C{K}$, we may replace $\tilde{O}$ with $O$.
\end{theorem}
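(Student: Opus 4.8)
The plan is to decompose the expected $\alpha$-regret \eqref{EQ:regret-stoch} over the two phases of \cref{ALG:DRETC} — an exploration phase of length $T_0$ during which \cref{ALG:main_offline} is run, and an exploitation phase of length $T-T_0$ during which its output is replayed — and to bound each phase crudely but tightly enough that the arithmetic balances. First a bookkeeping remark: at every round $n$ the action $\z_n$ is a function of the algorithm's internal randomness and of the oracle responses observed strictly before round $n$, so conditioning on that history gives $\B{E}[(\hat{F}(\z_n))_n \mid \text{history}] = F(\z_n)$, and the tower rule yields $\B{E}[(\hat{F}(\z_n))_n] = \B{E}[F(\z_n)]$, which is exactly the second equality in \eqref{EQ:regret-stoch}. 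Hence $\C{R}_{\op{stoch}} = \sum_{n=1}^{T}\big(\alpha F(\z^*) - \B{E}[F(\z_n)]\big)$. Note also that during exploration the reward query at $\z_n$ is precisely the stochastic value (resp.\ gradient) oracle call that \cref{ALG:main_offline} requests, so the feedback serves both purposes consistently and the offline algorithm is effectively run with a budget of $T_0$ oracle calls.

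For the exploration phase ($1 \le n \le T_0$): since $F \ge 0$ and $F$ is continuous on the compact set $\C{K}$, each summand is at most $\alpha F(\z^*) \le M := \max_{\z\in\C{K}} F(\z) < \infty$, a problem-dependent constant that is absorbed into the $O(\cdot)$. This phase therefore contributes at most $T_0 M = O(T_0)$. For the exploitation phase ($T_0 < n \le T$): here $\z_n$ equals the (random) output $\z_{N+1}$ of \cref{ALG:main_offline}, so each summand equals $\alpha F(\z^*) - \B{E}[F(\z_{N+1})]$. Invoking \cref{T:offline-complexity} — Case 2 under semi-bandit (stochastic gradient) feedback, Case 4 under bandit (stochastic value) feedback — and inverting its monotone query-to-accuracy relation $T_0 = \tilde{O}(\epsilon^{-3})$ (resp.\ $\tilde{O}(\epsilon^{-5})$), running \cref{ALG:main_offline} with the $N$, batch size $B$, and $\delta$ prescribed in that theorem's proof and a budget of $T_0$ calls attains $\alpha F(\z^*) - \B{E}[F(\z_{N+1})] \le \epsilon$ with $\epsilon = \tilde{O}(T_0^{-1/3})$ (resp.\ $\tilde{O}(T_0^{-1/5})$). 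Thus this phase contributes at most $(T - T_0)\epsilon \le T\epsilon$.

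Combining, $\C{R}_{\op{stoch}} \le O(T_0) + T\epsilon$. Choosing $T_0 = \lceil T^{3/4}\rceil$ in the semi-bandit case gives $\epsilon = \tilde{O}(T^{-1/4})$, hence $\C{R}_{\op{stoch}} = \tilde{O}(T^{3/4})$; choosing $T_0 = \lceil T^{5/6}\rceil$ in the bandit case gives $\epsilon = \tilde{O}(T^{-1/6})$, hence $\C{R}_{\op{stoch}} = \tilde{O}(T^{5/6})$ — and in each case the stated exponent is exactly the one equating the two terms $T_0$ and $T\epsilon$. For the (finitely many) values of $T$ below the threshold at which $\lceil T^{3/4}\rceil \le T$ (resp.\ $\lceil T^{5/6}\rceil \le T$) fails, the regret is trivially at most $TM$, which lies within the claimed bound after enlarging the constant, so we may assume $T_0 \le T$. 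The final sentence of the theorem follows because, by \cref{T:offline-complexity}, when $F$ is non-monotone or $\BF{0}\in\C{K}$ the offline query bounds carry no logarithmic factors, so $\epsilon = O(T_0^{-1/3})$ (resp.\ $O(T_0^{-1/5})$) and every $\tilde{O}$ above may be replaced by $O$.

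The genuinely delicate step is the exploitation bound: one must verify that the offline guarantee of \cref{T:offline-complexity}, phrased as ``number of oracle calls needed for additive error $\epsilon$,'' can be legitimately inverted to ``additive error attained with $T_0$ calls,'' with $N,B,\delta$ set consistently with that proof, and that the additive-error guarantee \eqref{def:appxalg} for the offline output transfers verbatim into per-round exploitation regret. Everything else — the two-phase split, the crude $O(T_0)$ exploration bound, and the exponent arithmetic — is routine.
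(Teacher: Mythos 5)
Your proposal is correct and follows essentially the same route as the paper: split the regret into the $T_0$ exploration rounds (bounded per-round by a constant — the paper uses $DG$ via Lipschitzness, you use $\max_{\C{K}}F$ via non-negativity, an immaterial difference) plus the exploitation rounds bounded by $T$ times the offline error $\tilde{O}(T_0^{-1/3})$ or $\tilde{O}(T_0^{-1/5})$ obtained by inverting \cref{T:offline-complexity}, then balance with $T_0=\Theta(T^{1/(\beta+1)})$. Your extra care about the tower-property identity in \eqref{EQ:regret-stoch} and small-$T$ edge cases is fine but not needed beyond what the paper's argument already gives.
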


See Appendix~\ref{APX:offline-to-online} for the proof.

\section{Conclusion}

This work provides a novel and unified approach for maximizing continuous DR-submodular functions across various assumptions on function, constraint set, and oracle access types. The proposed Frank-Wolfe based algorithm improves upon existing results for {nine} out of the {sixteen} cases considered, and presents  new results for offline DR-submodular maximization with only a value oracle. Moreover, this work presents the first regret analysis with bandit feedback for stochastic DR-submodular maximization, covering both monotone and non-monotone functions.  These contributions significantly advance the field of DR-submodular optimization (with multiple applications) and open up new avenues for future research in this area.

{\bf Limitations: } While the  number of function evaluations in the different setups considered in the paper are state of the art, lower bounds have not been investigated. 

\if\TARGET0
    \bibliographystyle{plain}
\else
    \bibliographystyle{apalike}
\fi
\bibliography{shared/references}

\begin{thebibliography}{}

\bibitem[Agarwal et~al., 2010]{agarwal2010optimal}
Agarwal, A., Dekel, O., and Xiao, L. (2010).
\newblock Optimal algorithms for online convex optimization with multi-point
  bandit feedback.
\newblock In Kalai, A.~T. and Mohri, M., editors, {\em Proceedings of the 23rd
  Annual Conference on Learning Theory (COLT 2010)}, pages 28--40.

\bibitem[Bach, 2019]{bach2019submodular}
Bach, F. (2019).
\newblock Submodular functions: from discrete to continuous domains.
\newblock {\em Mathematical Programming}, 175:419--459.

\bibitem[Bian et~al., 2017a]{bian17_contin_dr_maxim}
Bian, A., Levy, K., Krause, A., and Buhmann, J.~M. (2017a).
\newblock {Continuous DR-submodular maximization: Structure and algorithms}.
\newblock In Guyon, I., Luxburg, U.~V., Bengio, S., Wallach, H., Fergus, R.,
  Vishwanathan, S., and Garnett, R., editors, {\em Advances in Neural
  Information Processing Systems}, volume~30. Curran Associates, Inc.

\bibitem[Bian et~al., 2017b]{bian17_guaran_non_optim}
Bian, A.~A., Mirzasoleiman, B., Buhmann, J., and Krause, A. (2017b).
\newblock {Guaranteed Non-convex Optimization: Submodular Maximization over
  Continuous Domains}.
\newblock In Singh, A. and Zhu, J., editors, {\em Proceedings of the 20th
  International Conference on Artificial Intelligence and Statistics},
  volume~54 of {\em Proceedings of Machine Learning Research}, pages 111--120.
  PMLR.

\bibitem[Bian et~al., 2019]{bian2019optimal}
Bian, Y., Buhmann, J., and Krause, A. (2019).
\newblock {Optimal continuous DR-submodular maximization and applications to
  provable mean field inference}.
\newblock In {\em International Conference on Machine Learning}, pages
  644--653. PMLR.

\bibitem[Chen et~al., 2018a]{chen18_projec_free_onlin_optim_stoch_gradien}
Chen, L., Harshaw, C., Hassani, H., and Karbasi, A. (2018a).
\newblock Projection-free online optimization with stochastic gradient: From
  convexity to submodularity.
\newblock In Dy, J. and Krause, A., editors, {\em Proceedings of the 35th
  International Conference on Machine Learning}, volume~80 of {\em Proceedings
  of Machine Learning Research}, pages 814--823. PMLR.

\bibitem[Chen et~al., 2018b]{chen18_onlin_contin_submod_maxim}
Chen, L., Hassani, H., and Karbasi, A. (2018b).
\newblock Online continuous submodular maximization.
\newblock In Storkey, A. and Perez-Cruz, F., editors, {\em Proceedings of the
  Twenty-First International Conference on Artificial Intelligence and
  Statistics}, volume~84 of {\em Proceedings of Machine Learning Research},
  pages 1896--1905. PMLR.

\bibitem[Chen et~al., 2020]{chen20_black_box_submod_maxim}
Chen, L., Zhang, M., Hassani, H., and Karbasi, A. (2020).
\newblock Black box submodular maximization: Discrete and continuous settings.
\newblock In Chiappa, S. and Calandra, R., editors, {\em Proceedings of the
  Twenty Third International Conference on Artificial Intelligence and
  Statistics}, volume 108 of {\em Proceedings of Machine Learning Research},
  pages 1058--1070. PMLR.

\bibitem[Djolonga and Krause, 2014]{djolonga2014map}
Djolonga, J. and Krause, A. (2014).
\newblock From map to marginals: Variational inference in {Bayesian} submodular
  models.
\newblock {\em Advances in Neural Information Processing Systems}, 27.

\bibitem[Du, 2022]{du22_lyapun}
Du, D. (2022).
\newblock Lyapunov function approach for approximation algorithm design and
  analysis: with applications in submodular maximization.
\newblock {\em arXiv preprint arXiv:2205.12442}.

\bibitem[Du et~al., 2022]{du2022improved}
Du, D., Liu, Z., Wu, C., Xu, D., and Zhou, Y. (2022).
\newblock An improved approximation algorithm for maximizing a {DR}-submodular
  function over a convex set.
\newblock {\em arXiv preprint arXiv:2203.14740}.

\bibitem[D{\"u}rr et~al., 2019]{durr19_nonmonotone}
D{\"u}rr, C., Thang, N.~K., Srivastav, A., and Tible, L. (2019).
\newblock Non-monotone {DR}-submodular maximization: Approximation and regret
  guarantees.
\newblock {\em arXiv preprint arXiv:1905.09595}.

\bibitem[Fazel and Sadeghi, 2023]{fazel22_fast_first_order_method_monot}
Fazel, M. and Sadeghi, O. (2023).
\newblock {\em Fast First-Order Methods for Monotone Strongly DR-Submodular
  Maximization}, pages 169--179.
\newblock {ACDA23}.

\bibitem[Flaxman et~al., 2005]{flaxman2005online}
Flaxman, A.~D., Kalai, A.~T., and McMahan, H.~B. (2005).
\newblock Online convex optimization in the bandit setting: gradient descent
  without a gradient.
\newblock In {\em Proceedings of the sixteenth annual ACM-SIAM symposium on
  Discrete algorithms}, pages 385--394.

\bibitem[Gu et~al., 2023]{gu2023profit}
Gu, S., Gao, C., Huang, J., and Wu, W. (2023).
\newblock Profit maximization in social networks and non-monotone
  {DR}-submodular maximization.
\newblock {\em Theoretical Computer Science}, 957:113847.

\bibitem[Hassani et~al., 2017]{hassani17_gradien_method_submod_maxim}
Hassani, H., Soltanolkotabi, M., and Karbasi, A. (2017).
\newblock Gradient methods for submodular maximization.
\newblock In Guyon, I., Luxburg, U.~V., Bengio, S., Wallach, H., Fergus, R.,
  Vishwanathan, S., and Garnett, R., editors, {\em Advances in Neural
  Information Processing Systems}, volume~30. Curran Associates, Inc.

\bibitem[Hazan et~al., 2016]{hazan2016introduction}
Hazan, E. et~al. (2016).
\newblock Introduction to online convex optimization.
\newblock {\em Foundations and Trends{\textregistered} in Optimization},
  2(3-4):157--325.

\bibitem[Ito and Fujimaki, 2016]{ito2016large}
Ito, S. and Fujimaki, R. (2016).
\newblock Large-scale price optimization via network flow.
\newblock {\em Advances in Neural Information Processing Systems}, 29.

\bibitem[Lattimore and Szepesv{\'a}ri, 2020]{lattimore2020bandit}
Lattimore, T. and Szepesv{\'a}ri, C. (2020).
\newblock {\em Bandit algorithms}.
\newblock Cambridge University Press.

\bibitem[Li et~al., 2023a]{li2023experimental}
Li, Y., Liu, Y., Su, L., Yeh, E., and Ioannidis, S. (2023a).
\newblock Experimental design networks: A paradigm for serving heterogeneous
  learners under networking constraints.
\newblock {\em IEEE/ACM Transactions on Networking}, 31(5):2236--2250.

\bibitem[Li et~al., 2023b]{li23_joint_optim_routi}
Li, Y., Zhang, Y., Ioannidis, S., and Crowcroft, J. (2023b).
\newblock Jointly optimal routing and caching with bounded link capacities.
\newblock In {\em ICC 2023 - IEEE International Conference on Communications},
  pages 1130--1136.

\bibitem[Mokhtari et~al., 2018]{mokhtari2018conditional}
Mokhtari, A., Hassani, H., and Karbasi, A. (2018).
\newblock Conditional gradient method for stochastic submodular maximization:
  Closing the gap.
\newblock In {\em International Conference on Artificial Intelligence and
  Statistics}, pages 1886--1895. PMLR.

\bibitem[Mokhtari et~al., 2020]{mokhtari20_stoch_condit_gradien_method}
Mokhtari, A., Hassani, H., and Karbasi, A. (2020).
\newblock Stochastic conditional gradient methods: From convex minimization to
  submodular maximization.
\newblock {\em The Journal of Machine Learning Research}, 21(1):4232--4280.

\bibitem[Mualem and Feldman, 2023]{mualem22_resol_approx_offlin_onlin_non}
Mualem, L. and Feldman, M. (2023).
\newblock Resolving the approximability of offline and online non-monotone
  {DR}-submodular maximization over general convex sets.
\newblock In Ruiz, F., Dy, J., and van~de Meent, J.-W., editors, {\em
  Proceedings of The 26th International Conference on Artificial Intelligence
  and Statistics}, volume 206 of {\em Proceedings of Machine Learning
  Research}, pages 2542--2564. PMLR.

\bibitem[Niazadeh et~al., 2021]{niazadeh21_onlin_learn_offlin_greed_algor}
Niazadeh, R., Golrezaei, N., Wang, J.~R., Susan, F., and Badanidiyuru, A.
  (2021).
\newblock Online learning via offline greedy algorithms: Applications in market
  design and optimization.
\newblock In {\em Proceedings of the 22nd ACM Conference on Economics and
  Computation}, EC '21, page 737–738, New York, NY, USA. Association for
  Computing Machinery.

\bibitem[Niazadeh et~al., 2020]{niazadeh20_optim_algor_contin_non_submod}
Niazadeh, R., Roughgarden, T., and Wang, J.~R. (2020).
\newblock Optimal algorithms for continuous non-monotone submodular and
  {DR}-submodular maximization.
\newblock {\em The Journal of Machine Learning Research}, 21(1):4937--4967.

\bibitem[Shamir, 2017]{shamir17_optim_algor_bandit_zero_order}
Shamir, O. (2017).
\newblock An optimal algorithm for bandit and zero-order convex optimization
  with two-point feedback.
\newblock {\em Journal of Machine Learning Research}, 18(52):1--11.

\bibitem[Thang and Srivastav, 2021]{thang21_onlin_non_monot_dr_maxim}
Thang, N.~K. and Srivastav, A. (2021).
\newblock Online non-monotone {DR}-submodular maximization.
\newblock {\em Proceedings of the AAAI Conference on Artificial Intelligence},
  35(11):9868--9876.

\bibitem[Vondr\'{a}k, 2013]{vondrak13}
Vondr\'{a}k, J. (2013).
\newblock Symmetry and approximability of submodular maximization problems.
\newblock {\em SIAM Journal on Computing}, 42(1):265--304.

\bibitem[Wan et~al., 2023]{wan23_bandit_multi_dr_submod_maxim}
Wan, Z., Zhang, J., Chen, W., Sun, X., and Zhang, Z. (2023).
\newblock Bandit multi-linear {DR}-submodular maximization and its applications
  on adversarial submodular bandits.
\newblock In {\em Proceedings of the 40th International Conference on Machine
  Learning}, pages 35491--35524. {PMLR}.

\bibitem[Zhang et~al., 2019]{zhang19_onlin_contin_submod_maxim}
Zhang, M., Chen, L., Hassani, H., and Karbasi, A. (2019).
\newblock Online continuous submodular maximization: From full-information to
  bandit feedback.
\newblock In Wallach, H., Larochelle, H., Beygelzimer, A., d\textquotesingle
  Alch\'{e}-Buc, F., Fox, E., and Garnett, R., editors, {\em Advances in Neural
  Information Processing Systems}, volume~32. Curran Associates, Inc.

\bibitem[Zhang et~al., 2022]{zhang22_stoch_contin_submod_maxim}
Zhang, Q., Deng, Z., Chen, Z., Hu, H., and Yang, Y. (2022).
\newblock Stochastic continuous submodular maximization: Boosting via
  non-oblivious function.
\newblock In {\em Proceedings of the 39th International Conference on Machine
  Learning}, pages 26116--26134. {PMLR}.

\bibitem[Zhang et~al., 2023]{zhang23_onlin_learn_non_submod_maxim}
Zhang, Q., Deng, Z., Chen, Z., Zhou, K., Hu, H., and Yang, Y. (2023).
\newblock Online learning for non-monotone {DR}-submodular maximization: From
  full information to bandit feedback.
\newblock In Ruiz, F., Dy, J., and van~de Meent, J.-W., editors, {\em
  Proceedings of The 26th International Conference on Artificial Intelligence
  and Statistics}, volume 206 of {\em Proceedings of Machine Learning
  Research}, pages 3515--3537. PMLR.

\end{thebibliography}

\newpage
\appendix
\section{Details of Related Works}\label{sec:related}

\subsection{Offline DR-submodular maximization}\label{sec:related:offline}

The authors of \cite{bian17_guaran_non_optim} considered the problem of maximizing a monotone DR-submodular function over a downward-closed convex set given a deterministic gradient oracle. They showed that a variant of the Frank-Wolfe algorithm guarantees an optimal $(1-\frac{1}{e})$-approximation for this problem. While they only claimed their result for downward-closed convex sets, their result holds under a more general setting where the convex set contains the origin. 
In \cite{bian17_contin_dr_maxim}, a non-monotone variant of the algorithm for downward-closed convex sets with $\frac{1}{e}$-approximation was proposed.

The authors of \cite{hassani17_gradien_method_submod_maxim} used gradient ascent to obtain $\frac{1}{2}$-guarantees for the maximization of a monotone DR-submodular function over a general convex set given a gradient oracle which could be stochastic. They proved that gradient ascent cannot guarantee better than a $\frac{1}{2}$-approximation by constructing a convex set $\C{K}$ and a function $F : \C{K} \to \B{R}$ such that $F$ has a local maximum that is a $\frac{1}{2}$-approximation of its optimal value on $\C{K}$.
They also showed that a Frank-Wolfe type algorithm similar to~\cite{bian17_guaran_non_optim} cannot be directly used when we only have access to a stochastic gradient oracle.
\cite{zhang22_stoch_contin_submod_maxim} extended projected gradient ascent using a line integral method, referred to as boosting, to obtain $(1-1/e)$-approximation for convex sets containing the origin.
Later, \cite{mokhtari20_stoch_condit_gradien_method} resolved the issue of stochastic gradient oracles with a momentum technique and obtained $(1-\frac{1}{e})$-approximation in the case of monotone functions over sets that contain the origin, and $\frac{1}{e}$-approximation in the case of non-monotone functions over downward closed sets. 
In~\cite{zhang22_stoch_contin_submod_maxim} and the first case in~\cite{mokhtari20_stoch_condit_gradien_method}, while they consider monotone DR-submodular functions over general convex sets $\C{K}$, they query the oracle over the convex hull of $\C{K} \cup \{\BF{0}\}$ (See Appendix~\ref{APX:constraint_vs_query}).

For non-monotone maps over general convex sets, no constant approximation ratio can be guaranteed in sub-exponential time due to a hardness result by~\cite{vondrak13}. However, \cite{durr19_nonmonotone} bypassed this issue by finding an approximation guarantee that depends on the geometry of the convex set. Specifically, they showed that given a deterministic gradient oracle for a non-monotone function over a general convex set $\C{K} \subseteq [0, 1]^d$, their proposed algorithm obtains $\frac{1}{3\sqrt{3}}(1 - h)$-approximation of the optimal value where $h := \min_{\z \in \C{K}} \| \z \|_\infty$. An improved sub-exponential algorithm was proposed by~\cite{du2022improved} that obtained a $\frac{1}{4}(1 - h)$-approximation guarantees, which is optimal. Later, \cite{du22_lyapun} provided the first polynomial time algorithm for this setting with the same approximation coefficient.

\begin{remark}
{
In the special case of maximizing a non-monotone continuous DR-submodular over a box, i.e. $[0, 1]^d$, one could discretize the problem and use discrete algorithms to solve the continuous version.
The technique has been employed in~\cite{bian17_contin_dr_maxim} to obtain  a  $\frac{1}{3}$-approximation and in~\cite{bian2019optimal,niazadeh20_optim_algor_contin_non_submod} to obtain $\frac{1}{2}$-approximations for the optimal value.
We have not included these results in Table~\ref{TBL:offline} since using discretization has only been successfully applied to the case where the convex set is a box and can not be directly used in more general settings.
}
\end{remark}

\subsection{Online DR-submodular maximization with bandit feedback}

There has been growing interest in online DR-submodular maximization in the recent years \cite{chen18_onlin_contin_submod_maxim}, \cite{chen18_projec_free_onlin_optim_stoch_gradien}, \cite{zhang19_onlin_contin_submod_maxim}, \cite{thang21_onlin_non_monot_dr_maxim}, \cite{niazadeh21_onlin_learn_offlin_greed_algor}, \cite{zhang23_onlin_learn_non_submod_maxim}, \cite{fazel22_fast_first_order_method_monot},\cite{mualem22_resol_approx_offlin_onlin_non}.
Most of these results are focused on adversarial online full-information feedback.
In the adversarial setting, the environment chooses a sequence of functions $F_1, \cdots, F_N$ and in each iteration $n$, the agent chooses a point $z_n$ in the feasible set $\C{K}$, observes $F_n$ and receives the reward $F_n(z_n)$. For the regret bound, the agents reward is being compared to $\alpha$ times the reward of the best \textit{constant} action in hindsight. With full-information feedback, if at each iteration when the agent observes $F_n$, it may be allowed to query the value of $\nabla F_n$ or maybe $F_n$ at any number of arbitrary points within the feasible set.  Further, we consider stochastic setting, where the environment chooses a function $F: \C{K} \to \B{R}$ and a sequence of independent noise functions $\eta_n : \C{K} \to \B{R}$ with zero mean.
In each iteration $n$, the agent chooses a point $\z_n$ in the feasible set $\C{K}$, receives the reward $(F + \eta_n)(\z_n)$ and observes the reward. For the regret bound, the agents reward is being compared to $\alpha$ times the reward of the best action. Detailed formulation of adversarial and stochastic setups and why adversarial results cannot be reduced to stochastic results is given in Section \ref{onlinesetup}. In this paper, we consider two feedback models -- bandit feedback where only the (stochastic) reward value is available and semi-bandit feedback where a single stochastic sample of the gradient at the location is provided.

\begin{table}[H]
\begin{center}
\begin{tabular}{ |  c |  c | c | c | c | c | }
\hline
Function & Set &  Setting & Reference & Appx.  &  Regret \\
\hline
\multirow{6}*{ {Monotone} }
& \multirow{4}*{$0 \in \C{K}$}
&   \multirow{1}*{stoch.}
      & {\color{blue}This paper}
        & $1-1/e$ &  $O(T^{5/6})$ \\
    \cline{3-6}
&  & \multirow{3}*{adv.}
      & \cite{zhang19_onlin_contin_submod_maxim}
        & $1-1/e$ &  $O(T^{8/9})$ \\
&  &  & \cite{niazadeh21_onlin_learn_offlin_greed_algor}
        & $1-1/e$ &  $O(T^{5/6})$ \\
&  &  & \cite{wan23_bandit_multi_dr_submod_maxim}$\dagger$
        & $1-1/e$ &  $O(T^{3/4})$ \\
\cline{2-6}
& \multirow{2}*{{general}}
    & \multirow{1}*{stoch.}
      & {\color{blue}This paper}
        & $1/2$   & $\tilde{O}(T^{5/6})$ \\
    \cline{3-6}
& &  \multirow{1}*{adv.}
      & - &  & \\
\hline
\multirow{4}*{ {Non-monotone} }
&  \multirow{2}*{{d.c.}}
    & \multirow{1}*{stoch.}
      & {\color{blue}This paper}
        & $1/e$   &  $O(T^{5/6})$ \\
    \cline{3-6}
& &  \multirow{1}*{adv.}
      &  \cite{zhang23_onlin_learn_non_submod_maxim}
        & $1/e$   &  $O(T^{8/9})$ \\
\cline{2-6}
& \multirow{2}*{{general}}
    & \multirow{1}*{stoch.}
      & {\color{blue}This paper}
        & $\frac{1-h}{4}$   &  $O(T^{5/6})$ \\
    \cline{3-6}
 & & \multirow{1}*{adv.}
      & - & & \\
\hline
\end{tabular}
\end{center}
\caption{This table presents the different results for the regret for DR-submodular maximization under bandit feedback, and gives the related works and regret bounds in the adversarial case.
Note that the result marked by $\dagger$ uses a convex optimization subroutine at each iteration which could be even more computationally expensive than projection.
As before, we have $h := \min_{\x \in \C{K}} \|x\|_\infty$.
} 
\end{table}

{\bf Bandit Feedback:} We note that this paper is the first work for bandit feedback for stochastic online DR-submodular maximization. The prior works on this topic has been in the adversarial setup \cite{zhang19_onlin_contin_submod_maxim,zhang23_onlin_learn_non_submod_maxim,niazadeh21_onlin_learn_offlin_greed_algor,wan23_bandit_multi_dr_submod_maxim}, and the results in this work is compared with their results in Table 3. In~\cite{zhang19_onlin_contin_submod_maxim}, the adversarial online setting with bandit feedback has been studied for monotone DR-submodular functions over downward-closed convex sets.
Later~\cite{zhang23_onlin_learn_non_submod_maxim} extended this framework to the setting with non-monotone DR-submodular functions over downward-closed convex sets.
\cite{niazadeh21_onlin_learn_offlin_greed_algor} described a framework for converting certain greedy-type offline algorithms with robustness guarantees into adversarial online algorithms for both full-information and bandit feedback.
They apply their framework to obtain algorithms for non-monotone functions over a box, with $\frac{1}{2}$-regret of $\tilde{O}(T^{4/5})$, and monotone function over downward-closed convex sets.
The offline algorithm they use for downward-closed convex sets is the one described in~\cite{bian17_guaran_non_optim} which only requires the convex set to contain the origin.
They also use the construction of the shrunk constraint set described in~\cite{zhang19_onlin_contin_submod_maxim}. 
By replacing that construction with ours, the result of~\cite{niazadeh21_onlin_learn_offlin_greed_algor} could be extended to monotone functions over all convex sets containing the origin.
\cite{wan23_bandit_multi_dr_submod_maxim} improved the regret bound for monotone functions over convex sets containing the origin to $O(T^{3/4})$.
However, they use a convex optimization subroutine at each iteration which could be even more computationally expensive than projection.
 
{\bf Semi-bandit Feedback:} In semi-bandit feedback, a single stochastic sample of the gradient is available. The problem has been considered in \cite{chen18_projec_free_onlin_optim_stoch_gradien}, while the results have an error (See Appendix~\ref{APX:issues}). Further, they  only obtain $\frac{1}{e}$-regret for the monotone case. One could consider a generalization of the adversarial and stochastic setting in the following manner.
The environment chooses a sequence of functions $F_n$ and a sequence of value oracles $\hat{F}_n$ such that $\hat{F}_n$ estimates $F_n$.
In each iteration $n$, the agent chooses a point $\z_n$ in the feasible set $\C{K}$, receives the reward $(\hat{F}_n(\z_n))_n$ by querying the oracle at $z_n$ and observes this reward.
The goal is to choose the sequence of actions that minimize the following notion of expected $\alpha$-regret.
\begin{align}\label{EQ:regret-stoch-adv}
\C{R}_{\op{stoch-adv}}
&:= \alpha \max_{\z \in \C{K}} \sum_{n = 1}^N F_n(\z) 
    - \B{E}\left[ \sum_{n = 1}^N (\hat{F}_n(\z_n))_n \right] \nonumber \\
&= \alpha \max_{\z \in \C{K}} \sum_{n = 1}^N F_n(\z) 
    - \B{E}\left[ \sum_{n = 1}^N F_n(\z_n) \right]
\end{align}

Algorithm~3 in~\cite{chen18_onlin_contin_submod_maxim} solves this problem in semi-bandit feedback setting with a deterministic value oracle and stochastic gradient oracles.
Any bound for a problem in this setting implies bounds for stochastic semi-bandit and adversarial semi-bandit settings.
The same is true for Mono-Frank-Wolfe Algorithms in ~\cite{zhang19_onlin_contin_submod_maxim,zhang23_onlin_learn_non_submod_maxim}.
We have included these results in Table~\ref{TBL:online:short} as benchmark to compare with results in stochastic setting.

\section{Constraint Set and Query Set}\label{APX:constraint_vs_query}

In this work, we made the assumption that the query set is identical to the constraint set, i.e. oracles can only be queried within the constraint set.
To the best of our knowledge, except in the context of online optimization with (semi-)bandit feedback, this is the first work on DR-submodular maximization that explicitly considers this assumption.
Previous works assumed that we may query the oracle at any point within the unit box $[0, 1]^d$.
Algorithms designed for non-monotone functions in prior works already satisfied the assumption we consider, so no changes in algorithms, proofs, or results are needed. 
However, the situation is different when the function is monotone.
This assumption allows us to explain a previously unexplained gap in approximation guarantees for monotone DR-submodular maximization.
Specifically, some prior works (enumerated below) studying monotone DR-submodular maximization over general convex sets obtained approximation guarantees of $1/2$ while others obtained $1-1/e$.

First we describe how some of previous results in literature with no apparent restriction on the query set may be reformulated as problems where the query set is equal to the constraint set.
Let $\C{K} \subseteq [0, 1]^d$ be a convex set, and define $\C{K}^*$ as the convex hull of $\C{K} \cup \{\BF{0}\}$.
For a problem in the setting of monotone functions over a general set $\C{K}$, we can consider the same problem on $\C{K}^*$. 
Since the function is monotone, the optimal solution in $\C{K}^*$ is the same as the optimal solution in $\C{K}$. 
However, solving this problem in $\C{K}^*$ may require evaluating the function in the larger set $\C{K}^*$, which may not always be possible. 
In fact, the result of~\cite{mokhtari20_stoch_condit_gradien_method} and~\cite{zhang22_stoch_contin_submod_maxim} mentioned in Table 1 are for monotone functions over general convex sets $\C{K}$, but their algorithms require evaluating the function on $\C{K}^*$. 
This is why we have classified their results as algorithms for convex sets that contain the origin. 
The problem of offline DR-submodular maximization with only a value oracle was first considered by~\cite{chen20_black_box_submod_maxim} for monotone maps over convex sets that contain the origin. 
However, their result requires querying in a neighborhood of $\C{K}^*$ which violates our requirement to only query the oracle within the feasible set (see Appendix~\ref{APX:issues}).

In~\cite{hassani17_gradien_method_submod_maxim}, a $1/2$ approximation guarantee was obtained by a projected gradient ascent method and this was shown by proving that the algorithm tends to a stationary point and proving that any stationary point is at least $1/2$ as good as the optimal point. 
Moreover, they construct examples with stationary points that are no better that $1/2$ of the optimal point.

The $1-1/e$ approximation guarantee was first reported for Frank-Wolfe methods, which (superficially) suggests that the gap may be due to algorithm or analysis differences.
Later,~\cite{zhang22_stoch_contin_submod_maxim} developed a projected gradient ascent based method that obtains a $1-1/e$ approximation guarantee where they consider general constraint set but their query set contains the origin.

However, the gap is not attributable to algorithm or analysis differences, but instead due to the fact that the query sets are different.
In other words, the results that obtain a $1-1/e$ approximation guarantee are solving a different problem than the ones obtaining a $1/2$ approximation guarantee.
A key ingredient to obtain $1-1/e$ is the ability to query the (gradient) oracle within the convex hull of $\mathcal{K}\cup\{0\}$.
For monotone submodular maximization over general convex sets (not necessarily containing the origin), we can only guarantee a coefficient of $1/2$, both for Frank-Wolfe type methods (our work) and projection based methods (i.e.~\cite{hassani17_gradien_method_submod_maxim}).
Therefore, the $1/2$ approximation could very well be optimal in its own setting. 

To the best of our knowledge, in every paper where the $1/2$ approximation coefficient and $1-1/e$ approximation coefficient in the monotone setting are compared, the comparison was (unwittingly) between problems that are inherently mathematically different: 
\cite{hassani17_gradien_method_submod_maxim} and~\cite{chen18_onlin_contin_submod_maxim} in experiments and main text; 
\cite{chen18_projec_free_onlin_optim_stoch_gradien} and~\cite{chen20_black_box_submod_maxim} in experiments; 
\cite{zhang23_onlin_learn_non_submod_maxim,mualem22_resol_approx_offlin_onlin_non}, and~\cite{durr19_nonmonotone} in related work section, \cite{mokhtari20_stoch_condit_gradien_method} in the introduction and Table 2, \cite{zhang22_stoch_contin_submod_maxim} and~\cite{fazel22_fast_first_order_method_monot} in the main claims.

\paragraph{Conjecture}
\textit{The problem of maximizing a monotone DR-submodular continuous function subject to a general convex constraint, where oracle queries are limited to the feasible region, is NP-hard. For any $\epsilon > 0$, it cannot be approximated in polynomial time to within a ratio of $1/2 + \epsilon$ (up to low-order terms), unless $RP = NP$.}

\section{Brief discussion on oracle models in applications}

For many problems, the ability to evaluate gradients directly requires strong assumptions about problem-specific parameters.
Influence maximization and profit maximization form a family of problems that model choosing advertising resource allocations to maximize the expected number of customers, where there is an underlying diffusion model for how advertising resources spent (stochastically) activate customers over a social network. 
For common diffusion models, the objective function is known to be DR-submodular (see for instance~\cite{bian17_contin_dr_maxim} or~\cite{gu2023profit}).
The revenue (expected number of activated customers) is a monotone objective function; total profit (revenue from activated customers minus advertising costs) is a non-monotone objective.
One significant challenge with these problems is that the objective function (and the gradients) cannot be analytically evaluated for general (non-bipartite) networks, even if all the underlying diffusion model parameters are known exactly. 
The mildest assumptions on knowledge/observability of the network diffusions for offline variants (respectively actions for online variants), especially fitting for user privacy and/or third-party access, leads to instantiations of queries as the agent selecting an advertising allocation within the budget (i.e., feasible point) and observing a (stochastic) count of activated customers. 
This corresponds to stochastic value oracle queries over the feasible region (respectively bandit feedback for online variants).

\section{Comments on previous results in literature}\label{APX:issues}

\paragraph{ Construction of $\C{K}'$ and error estimate in~\cite{chen20_black_box_submod_maxim} }

In~\cite{chen20_black_box_submod_maxim}, the set $\C{K}' + \delta \BF{1}$ plays a role similar to the set $\C{K}_\delta$ defined in this paper.
Algorithm~\ref{ALG:main_offline}, in the case with access to value oracle for monotone DR-submodular function with the constraint set $\C{K}$, such that $\op{aff}(\C{K}) = \B{R}^d$ and $\BF{0} \in \C{K}$, reduced to BBCG algorithm in~\cite{chen20_black_box_submod_maxim} if we replace $\C{K}_\delta$ with their construction of $\C{K}' + \delta \BF{1}$.
In their paper, $\C{K}'$ is defined by
\begin{equation}
\C{K}' := (\C{K} - \delta \BF{1}) \cap [0, 1 - 2 \delta]^d.
\end{equation}
There are a few issues with this construction and the subsequent analysis that requires more care.
\begin{enumerate}
\item \textit{The BBCG algorithm almost always needs to be able to query the value oracle outside the feasible set.}

We have
\[
\C{K}' + \delta \BF{1} = \C{K} \cap [\delta, 1 - \delta]^d.
\]
The BBCG algorithm starts at $\delta \BF{1}$ and behaves similar to Algorithm~\ref{ALG:main_offline} in the monotone $\BF{0} \in \C{K}$ case.
It follows that the set of points that BBCG requires to be able to query is
\begin{align*}
Q_\delta 
:= \B{B}_\delta( \op{convex-hull}( (\C{K}' + \delta \BF{1}) \cup \{\delta \BF{1}\}) )
= \B{B}_\delta( \op{convex-hull}(\C{K} \cup \{\delta \BF{1}\}) \cap [\delta, 1 - \delta]^d ).
\end{align*}
If $\BF{1} \in \C{K}$, then the problem becomes trivial since $F$ is monotone.
If $\C{K}$ is contained in the boundary of $[0, 1]^d$, then we need to restrict ourselves to the affine subspace containing $\C{K}$ and solve the problem in a lower dimension in order to be able to use BBCG algorithm as $\C{K}'$ will be empty otherwise.
We want to show that in all other cases, $Q_\delta \setminus \C{K} \neq \emptyset$.
If $\C{K}'$ is non-empty and $\BF{1} \notin \C{K}$, then let $\x_\delta$ be a maximizer of $\|\cdot\|_\infty$ over $\C{K}' + \delta \BF{1}$.
If $\x_\delta \neq (1 - \delta)\BF{1}$, then there is a point $\y \in \B{B}_\delta(\x_\delta) \cap [\delta, 1 - \delta]^d \subseteq Q_\delta$ such that $\y > \x$ which implies that $\y \notin \C{K}$.
Therefore, we only need to prove the statement when $(1 - \delta)\BF{1} \in \C{K} \cap [\delta, 1 - \delta]^d$ for all small $\delta$.
In this case, since $\C{K}$ is closed, we see that $(1 - \delta)\BF{1} \to \BF{1} \in \C{K}$.
In other words, except in trivial cases, BBCG always requires being able to query outside the feasible set.

\item \textit{The exact error bound could be arbitrarily far away from the correct error bound depending on the geometry of the constraint set.}

In Equation~(69) in the appendix of~\cite{chen20_black_box_submod_maxim}, it is mentioned that
\begin{equation}\label{EQ:chen_69}
\tF(\x^*_\delta) \geq \tF(\x^*) - \delta G \sqrt{d},
\end{equation}
where $\x^*$ is the optimal solution and $\x^*_\delta$ is the optimal solution within $\C{K}' + \delta \BF{1}$ and $G$ is the Lipschitz constant.
Next we construct an example where this inequality does not hold.

Consider the set $\C{K} = \{ (x,y) \in [0, 1]^2 \mid x + \lambda y \leq  1 \}$ for some value of $\lambda$ to be specified and let $F((x, y)) = Gx$.
Clearly we have $\x^* = (1, 0)$.
Thus, for any $\delta > 0$, we have
\[
\C{K}' + \delta \BF{1} = \{ (x, y) \in [\delta, 1-\delta]^2 \mid x + \lambda y \leq 1\}.
\]
It follows that when $\lambda \leq \frac{1}{\delta} - 1$, then $\C{K}'$ is non-empty and $\x^*_\delta = (1 - \lambda \delta, \delta)$.
Then we have
\[
\tF(\x^*_\delta) - \tF(\x^*) = - \lambda \delta G.
\]

Therefore, \eqref{EQ:chen_69}  is correct if and only if $\lambda \leq \sqrt{d}=\sqrt{2}$. Since this does not hold in general as $\lambda$ depends on the geometry of the convex set, this equation is not true in general making the overall proof incorrect. 
The issue here is that $\lambda$, which depends on the geometry of the convex set $\C{K}$, should appear in \eqref{EQ:chen_69}.
Without restricting ourselves to convex sets with ``controlled'' geometry and without including a term, such as $\frac{1}{r}$ in Theorem~\ref{T:main_offline}, we would not be able to use this method to obtain an error bound.
We note that while their analysis has an issue, the algorithm is still fine. 
Using a proof technique similar to ours, their proof can be fixed, more precisely, we can modify \eqref{EQ:chen_69} in a manner similar to \eqref{EQ:corrected_69_A} and~\eqref{EQ:corrected_69_C}, depending on the case, and that will help fix their proofs.
\end{enumerate}

\paragraph{One-Shot Frank-Wolfe algorithm in~\cite{chen18_projec_free_onlin_optim_stoch_gradien} }

In \cite{chen18_projec_free_onlin_optim_stoch_gradien}, the authors claim their proposed algorithm, One-Shot Frank-Wolfe (OSFW), achieves a $(1-\frac{1}{e})$-regret for monotone DR-submodular maximization under semi-bandit feedback for general convex set with oracle access to the entire domain of $F$, i.e. $[0, 1]^d$.
In their regret analysis in the last page of the supplementary material, the inequality $(1 - 1/T)^t \leq 1/e$ is used for all $0 \leq t \leq T-1$.
Such an inequality holds for $t = T$ but as $t$ decreases, the value of $(1 - 1/T)^t$ becomes closer to 1 and the inequality fails.
If we do not use this inequality and continue with the proof, we end up with the following approximation coefficient.
\begin{align*}
1 - \frac{1}{T}\sum_{t = 0}^{T-1} (1 - 1/T)^t
= 1 - \frac{1}{T} \cdot \frac{1 - (1 - 1/T)^T}{1 - (1 - 1/T)}
= 1 - (1 - (1 - 1/T)^T)
= (1 - 1/T)^T
\sim \frac{1}{e}.
\end{align*}

\section{Useful lemmas}

Here we state some lemmas from the literature that we will need in our analysis of DR-submodular functions.

\begin{lemma}[Lemma~2.2 of~\cite{mualem22_resol_approx_offlin_onlin_non}]\label{L:F_join_non-monotone}
For any two vectors $\x, \y \in [0, 1]^d$ and any continuously differentiable non-negative DR-submodular function $F$ we have
\[
F(\x \vee \y) \geq (1 - \|\x\|_\infty) F(\y).
\]
\end{lemma}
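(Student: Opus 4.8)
The statement to prove is Lemma~\ref{L:F_join_non-monotone}: for any $\x, \y \in [0,1]^d$ and any continuously differentiable non-negative DR-submodular $F$, we have $F(\x \vee \y) \geq (1 - \|\x\|_\infty) F(\y)$. The plan is to reduce the inequality to a one-dimensional statement about the restriction of $F$ along the segment from $\x \wedge \y$ up to $\x \vee \y$, exploiting concavity of DR-submodular functions along nonnegative directions.

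First I would set $\lambda := \|\x\|_\infty \in [0,1]$ and consider the line segment parametrized by $t \mapsto \y + t(\x \vee \y - \y)$ for $t \in [0,1]$, noting that $\x \vee \y - \y = (\x - \y) \vee \BF{0} \geq \BF{0}$, so this is an increase along a nonnegative direction. Since $F$ is DR-submodular and differentiable, $t \mapsto F(\y + t(\x\vee\y - \y))$ is concave on $[0,1]$ (this is the standard fact that DR-submodular functions are concave along nonnegative directions, following from $\nabla F(\x) \geq \nabla F(\y)$ when $\x \leq \y$). The idea is then to locate a convex combination: I want to write $\y$, or a convenient point, as a convex combination of $\x \vee \y$ and some point where $F$ is controlled.

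The key geometric observation is that $(1-\lambda)(\x \vee \y) + \lambda \cdot (\text{something}) $ should dominate $\y$ coordinatewise, so monotonicity-free arguments do not directly apply — instead I would use the join trick: for each coordinate $i$, either $[\x]_i \geq [\y]_i$ (then $[\x\vee\y]_i = [\x]_i$) or $[\x]_i < [\y]_i$ (then $[\x\vee\y]_i = [\y]_i$). One checks that $\y \leq (1-\lambda) \y + \lambda \BF{1} \cdot 0$... more carefully, the right comparison point is $\BF{0}$: I claim $(1-\lambda)(\x \vee \y) + \lambda \BF{0} = (1-\lambda)(\x\vee\y)$ need not dominate $\y$. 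The correct route, which I'd pursue, is: by concavity of $g(t) := F(t(\x\vee\y))$ — or better, of the restriction along $\y \vee (\text{scaled } \x)$ — combined with $F(\x\vee\y) \geq F(\y \vee (\lambda^{-1}[\x]_i \cdots))$. Actually the cleanest path: apply concavity along the segment from $\y$ down to $\y \wedge (\x\vee\y - \x\vee\y)$...

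\emph{Main obstacle.} The real subtlety, and the step I expect to require the most care, is choosing the right one-dimensional segment so that (i) it is a nonnegative-direction segment (so concavity applies), (ii) one endpoint is $\x \vee \y$ with the other endpoint having $F$-value $\geq 0$ (which is where non-negativity of $F$ enters, replacing a monotonicity assumption), and (iii) the point $\y$ — or a point with $F$-value bounded below by $F(\y)$ via DR-submodularity — lies at parameter $t = \lambda = \|\x\|_\infty$ on that segment. The natural candidate is the segment $s \mapsto (\x \vee \y) \wedge \big(\y + s(\x\vee\y)/\|\x\|_\infty\big)$ or simply using that $\y \leq (\x \vee \y)$ fails, so one instead writes $\x \vee \y = \y \vee \x$ and uses $F(\y \vee \x) \geq F(\y \vee (t\BF{1})) $-type bounds; I would look up or reconstruct the argument that along $t \mapsto F((1-t)(\y\wedge \x\vee\y\text{-stuff}) + t(\x\vee\y))$ concavity gives $F(\x\vee\y) \geq (1-t)F(\text{lower pt}) + tF(\text{upper})$ and then bound the lower point's value by zero and match $t$ with $\|\x\|_\infty$ coordinatewise. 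Once the segment is correctly identified, the rest is: concavity inequality, drop the non-negative term $(1-t)F(\cdot) \geq 0$, and rearrange. Since this is cited as Lemma~2.2 of~\cite{mualem22_resol_approx_offlin_onlin_non}, I would ultimately just invoke that reference, but the above is the self-contained argument I would reconstruct.
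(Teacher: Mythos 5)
The paper offers no proof of this lemma at all: it is imported verbatim as Lemma~2.2 of \cite{mualem22_resol_approx_offlin_onlin_non}, so your fallback of simply invoking that reference matches what the paper does. Judged as the self-contained reconstruction you sketch, however, there is a genuine gap: you correctly name the two ingredients (concavity of a DR-submodular function along non-negative directions, and non-negativity of $F$ standing in for monotonicity), but you never actually exhibit the one-dimensional segment on which concavity is to be applied --- the proposal explicitly defers this (``once the segment is correctly identified, the rest is \dots''), and the candidates you float along the way are either ill-defined (e.g.\ the segment toward ``$\y \wedge (\x\vee\y - \x\vee\y)$'') or rest on a false assertion (the relation $\y \leq \x \vee \y$ always holds; it does not ``fail''). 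Since identifying that segment is the entire content of the lemma, the argument as written does not go through.

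The missing construction is this. Set $\lambda := \|\x\|_\infty$ (if $\lambda = 0$ then $\x = \BF{0}$, $\x \vee \y = \y$, and the claim is trivial) and let $\BF{a} := \x \vee \y - \y = (\x - \y) \vee \BF{0} \geq \BF{0}$, so that coordinatewise $[\BF{a}]_i \leq [\x]_i \leq \lambda$. The crucial point is that the \emph{extended} point $\y + \tfrac{1}{\lambda}\BF{a}$ still lies in $[0,1]^d$: in each coordinate it equals either $[\y]_i$ or $(1 - \tfrac{1}{\lambda})[\y]_i + \tfrac{[\x]_i}{\lambda} \leq 1$, since $1 - \tfrac{1}{\lambda} \leq 0$ and $[\x]_i \leq \lambda$. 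Hence $g(t) := F(\y + t\BF{a})$ is defined on $[0, \tfrac{1}{\lambda}]$ and is concave there, because $\BF{a} \geq \BF{0}$ and $\nabla F$ is antitone. Writing $1 = (1-\lambda)\cdot 0 + \lambda \cdot \tfrac{1}{\lambda}$, concavity gives $F(\x \vee \y) = g(1) \geq (1-\lambda)\, g(0) + \lambda\, g\big(\tfrac{1}{\lambda}\big) \geq (1-\lambda) F(\y)$, where the last step uses $g(\tfrac{1}{\lambda}) \geq 0$, i.e.\ non-negativity of $F$. This supplies exactly the three properties (i)--(iii) you list as needed; with it, your outline becomes the standard proof of the cited lemma.
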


The following lemma can be traced back to~\cite{hassani17_gradien_method_submod_maxim} (see Inequality~7.5 in the arXiv version), and is also explicitly stated and proved in~\cite{durr19_nonmonotone}.

\begin{lemma}[Lemma~1 of~\cite{durr19_nonmonotone}]\label{L:F_join_meet_non-monotone}
For every two vectors $\x, \y \in [0, 1]^d$ and any continuously differentiable non-negative DR-submodular function $F$ we have
\[
\bra \nabla F(\x), \y - \x \ket
\geq 
F(\x \vee \y) + F(\x \wedge \y) - 2 F(\x).
\]
\end{lemma}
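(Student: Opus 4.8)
The plan is to exploit the lattice identity $(\x \vee \y) + (\x \wedge \y) = \x + \y$, which gives the decomposition
\[
\y - \x = \big( (\x \vee \y) - \x \big) + \big( (\x \wedge \y) - \x \big).
\]
Here the first summand $\dd := (\x \vee \y) - \x$ is coordinatewise non-negative (it is supported on the coordinates where $[\y]_i > [\x]_i$), while the second summand $\ee := (\x \wedge \y) - \x$ is coordinatewise non-positive (supported where $[\y]_i < [\x]_i$). By linearity of the inner product, $\bra \nabla F(\x), \y - \x \ket = \bra \nabla F(\x), \dd \ket + \bra \nabla F(\x), \ee \ket$, so it suffices to bound each of these two terms by $F(\x \vee \y) - F(\x)$ and $F(\x \wedge \y) - F(\x)$ respectively. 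The only structural fact I will use is the differentiable characterization of DR-submodularity noted after \eqref{def:DR-sub}, namely that $\nabla F$ is antitone: $\nabla F(\BF{a}) \geq \nabla F(\BF{b})$ whenever $\BF{a} \leq \BF{b}$.

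First I would handle the non-negative direction. Define $g(t) := F(\x + t \dd)$ for $t \in [0,1]$; since $\x$ and $\x \vee \y$ both lie in $[0,1]^d$ and this set is convex, the whole segment stays in the domain, so $g$ is continuously differentiable with $g'(t) = \bra \nabla F(\x + t\dd), \dd \ket$. For $t_1 < t_2$ we have $\x + t_1 \dd \leq \x + t_2 \dd$ (as $\dd \geq 0$), hence $\nabla F(\x + t_1 \dd) \geq \nabla F(\x + t_2 \dd)$ by the antitone property; pairing coordinatewise with $\dd \geq 0$ gives $g'(t_1) \geq g'(t_2)$, so $g$ is concave. Concavity yields $g(1) - g(0) \leq g'(0)$, i.e. $F(\x \vee \y) - F(\x) \leq \bra \nabla F(\x), (\x \vee \y) - \x \ket$.

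Next I would treat the non-positive direction analogously with $h(t) := F(\x + t \ee)$, again a valid segment in $[0,1]^d$. The key observation is that as $t$ increases the argument $\x + t\ee$ \emph{decreases} (because $\ee \leq 0$), so $\nabla F(\x + t\ee)$ \emph{increases}; but pairing an increasing gradient with the non-positive vector $\ee$ reverses the inequality, so $h'(t) = \bra \nabla F(\x + t\ee), \ee \ket$ is again non-increasing and $h$ is concave. This gives $F(\x \wedge \y) - F(\x) \leq \bra \nabla F(\x), (\x \wedge \y) - \x \ket$. Adding the two displayed bounds and using the decomposition from the first paragraph produces exactly $\bra \nabla F(\x), \y - \x \ket \geq F(\x \vee \y) + F(\x \wedge \y) - 2 F(\x)$.

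The main (and essentially only) delicate point is the sign bookkeeping in the non-positive direction: one might naively expect that moving along a ``downward'' direction would turn concavity into convexity, but because \emph{both} the ordering of the arguments and the sign of the direction vector flip, the two reversals cancel and concavity is preserved. Verifying this interplay correctly is the crux; the remainder is the first-order concavity inequality (fundamental theorem of calculus plus monotonicity of the derivative) together with the lattice identity, all of which are routine.
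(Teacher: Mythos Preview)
The paper does not supply its own proof of this lemma; it merely records the statement and cites \cite{durr19_nonmonotone} (tracing it back to \cite{hassani17_gradien_method_submod_maxim}). Your argument is correct and is precisely the standard proof found in those references: decompose $\y-\x$ via the lattice identity into a coordinatewise non-negative part $(\x\vee\y)-\x$ and a non-positive part $(\x\wedge\y)-\x$, use antitonicity of $\nabla F$ to get concavity of $F$ along each of these one-signed directions, apply the first-order concavity inequality on each segment, and add. Your handling of the sign bookkeeping in the non-positive direction is exactly right, and you correctly observe that non-negativity of $F$ plays no role here.
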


\section{Smoothing trick}

The following Lemma is well-known when $\op{aff}(\C{D}) = \B{R}^d$ (e.g., Lemma~1 in~\cite{chen20_black_box_submod_maxim}, Lemma~7 in~\cite{zhang19_onlin_contin_submod_maxim}).
The proof in the general case is similar to the special case $\op{aff}(\C{D}) = \B{R}^d$.
\begin{lemma}\label{L:smooth_approx}
If $F : \C{D} \to \B{R}$ is DR-submodular, $G$-Lipschitz continuous, and $L$-smooth, then so is $\tF_\delta$ and for any $\x \in \C{D}$ such that $\B{B}_\delta^{\op{aff}(\C{D})}(\x) \subseteq \C{D}$, we have
\[
\| \tF_\delta(\x) - F(\x) \| \le \delta G.
\]
Moreover, if $F$ is monotone, then so is $\tF_\delta$.
\end{lemma}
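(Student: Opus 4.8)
The plan is to handle the four claims (DR-submodularity, $G$-Lipschitzness, $L$-smoothness, and monotonicity of $\tF_\delta$, plus the $\delta G$ approximation bound) one at a time, in each case reducing to the corresponding property of $F$ by using the integral/expectation representation in~\eqref{eq:smooth_definition}. The key observation is that, fixing $\x$ with $\B{B}_\delta^{\op{aff}(\C{D})}(\x)\subseteq\C{D}$, we may write $\tF_\delta(\x) = \B{E}_{\vv}[F(\x+\delta\vv)]$ where $\vv$ is uniform on the unit ball inside the linear subspace $\C{L} := \op{aff}(\C{D})-\x$ (which does not depend on $\x$, only on $\C{D}$). Since each of the four properties is preserved under taking (finite or infinite) convex combinations / expectations of the shifted functions $\x\mapsto F(\x+\delta\vv)$, and since the directions $\BF{e}_i$ or increments $c\BF{e}_i$ relevant to the DR and monotonicity inequalities are applied coordinatewise and do not interact with the averaging variable $\vv$, the inequalities pass through the expectation directly.

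First I would prove the approximation bound: for $\x$ with $\B{B}_\delta^{\op{aff}(\C{D})}(\x)\subseteq\C{D}$,
\[
\| \tF_\delta(\x) - F(\x)\| = \left\| \B{E}_{\vv}[F(\x+\delta\vv) - F(\x)] \right\| \le \B{E}_{\vv}\| F(\x+\delta\vv)-F(\x)\| \le \B{E}_\vv[ G\delta\|\vv\| ] \le \delta G,
\]
using $G$-Lipschitzness of $F$ and $\|\vv\|\le 1$. Next, for $G$-Lipschitzness of $\tF_\delta$: given $\x,\y$ with balls in $\C{D}$, write $\tF_\delta(\x)-\tF_\delta(\y) = \B{E}_\vv[F(\x+\delta\vv)-F(\y+\delta\vv)]$ (same $\vv$ distribution on $\C{L}$, since $\op{aff}(\C{D})-\x$ and $\op{aff}(\C{D})-\y$ are the same linear subspace as $\x,\y$ lie in $\op{aff}(\C{D})$), and bound $\|F(\x+\delta\vv)-F(\y+\delta\vv)\|\le G\|\x-\y\|$ pointwise, then take expectation. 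The $L$-smoothness proof is identical after noting $\nabla\tF_\delta(\x) = \B{E}_\vv[\nabla F(\x+\delta\vv)]$ (differentiation under the integral sign, justified by $L$-smoothness giving a uniformly bounded, continuous integrand), so $\|\nabla\tF_\delta(\x)-\nabla\tF_\delta(\y)\| \le \B{E}_\vv\|\nabla F(\x+\delta\vv)-\nabla F(\y+\delta\vv)\| \le L\|\x-\y\|$.

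For DR-submodularity and monotonicity: for monotonicity, if $\x\le\y$ then $\x+\delta\vv \le \y+\delta\vv$ for every fixed $\vv$, so $F(\x+\delta\vv)\le F(\y+\delta\vv)$ and the inequality survives the expectation. For the DR inequality~\eqref{def:DR-sub}, fix $\x\le\y$, a basis vector $\BF{e}_i$, and $c>0$ with all four points in the relevant domain; for each fixed $\vv$ apply~\eqref{def:DR-sub} to $F$ at the shifted points $\x+\delta\vv \le \y+\delta\vv$ with the same increment $c\BF{e}_i$, obtaining $F(\x+\delta\vv+c\BF{e}_i)-F(\x+\delta\vv)\ge F(\y+\delta\vv+c\BF{e}_i)-F(\y+\delta\vv)$, then take $\B{E}_\vv$ of both sides. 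The main obstacle — really the only subtlety beyond bookkeeping — is domain/measurability care: one must check that the averaging sets used to define $\tF_\delta$ at the two points (e.g.\ $\x+\delta\vv$ and $\x+\delta\vv+c\BF{e}_i$) are genuinely sliced by the \emph{same} affine subspace $\op{aff}(\C{D})$ so that the same distribution of $\vv$ applies, and that all shifted points stay in $\C{D}$ where $F$ (and its gradient) are defined; this is exactly where one uses that $\x,\y \in \op{aff}(\C{D})$ and that the stated ball-containment hypotheses hold, and where differentiation under the integral needs the dominated convergence / uniform bound from $G$-Lipschitzness. Everything else is a routine transfer of pointwise inequalities through an expectation.
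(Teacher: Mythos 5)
Your proposal is correct and follows essentially the same route as the paper: every property is transferred through the expectation defining $\tF_\delta$ by applying the corresponding pointwise property of $F$ at the shifted points $\x+\delta\vv$, using that the averaging subspace $\op{aff}(\C{D})-\x$ is the same linear space for all points of interest. The only cosmetic difference is that you verify DR-submodularity directly from the difference inequality~\eqref{def:DR-sub} (and spell out $L$-smoothness via differentiation under the integral), whereas the paper uses the equivalent gradient-monotonicity characterization $\nabla F(\x)\geq\nabla F(\y)$ for $\x\leq\y$; both arguments are valid.
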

\begin{proof}
Let $A := \op{aff}(\C{D})$ and $A_0 := \op{aff}(\C{D}) - \x$ for some $\x \in \C{D}$.
Using the assumption that $F$ is $G$-Lipschitz continuous, we have
\begin{align*}
|\tF(\x) - \tF(\y)| 
&= \left| \B{E}_{\vv \sim \B{B}_1^{A_0}(\BF{0})} [F(\x + \delta \vv) - F(\y+\delta \vv)] \right| \\
&\leq \B{E}_{\vv \sim \B{B}_1^{A_0}(\BF{0})}[ |F(\x + \delta \vv) - F(\y + \delta \vv)|] \\
&\leq \B{E}_{\vv \sim \B{B}_1^{A_0}(\BF{0})}[G \|(\x + \delta \vv)-(\y + \delta \vv)\| ] \\
&= G \|\x - \y\|,
\end{align*}
and
\begin{align*}
|\tF(\x)-F(\x)| 
&= | \B{E}_{\vv\sim \B{B}_1^{A_0}(\BF{0})}[F(\x+\delta \vv)-F(\x)] | \\
&\leq \B{E}_{\vv\sim \B{B}_1^{A_0}(\BF{0})} [|F(\x+\delta \vv)-F(\x)|] \\
&\leq \B{E}_{\vv\sim \B{B}_1^{A_0}(\BF{0})}[G \delta \|\vv \|] \\
&\leq \delta G.
\end{align*}
If $F$ is $G$-Lipschitz continuous and continuous DR-submodular, then $F$ is differentiable and we have
$\nabla F(\x) \geq \nabla F(\y)$
for $\forall \x \leq \y$.
By definition of $\tF$, we see that $\tF$ is also differentiable and
\begin{align*}
\nabla \tF(\x) - \nabla \tF(\y) 
&= \nabla \B{E}_{\vv\sim \B{B}_1^{A_0}(\BF{0})} 
[F(\x+\delta \vv)]-  \nabla \B{E}_{\vv\sim \B{B}_1^{A_0}(\BF{0})} 
[F(\y+\delta \vv)] \\
&= \B{E}_{\vv\sim \B{B}_1^{A_0}(\BF{0})}[\nabla F(\x+\delta \vv) - \nabla F(\y+\delta \vv) ] \\
&\geq \B{E}_{\vv\sim \B{B}_1^{A_0}(\BF{0})} [0] 
= 0,
\end{align*}
for all $\x \leq \y$.

If $F$ is $L$-smooth, then we have $\| \nabla F(\x) - \nabla F(\y) \| \leq L \| \x - \y \|$, for all $\x, \y \in \C{D}$.
Therefore, we have
\begin{align*}
\| \nabla \tF(\x) - \nabla \tF(\y) \|
&= \| \nabla \B{E}_{\vv\sim \B{B}_1^{A_0}(\BF{0})} 
[F(\x+\delta \vv)]-  \nabla \B{E}_{\vv\sim \B{B}_1^{A_0}(\BF{0})} 
[F(\y+\delta \vv)] \| \\
&= \| \B{E}_{\vv\sim \B{B}_1^{A_0}(\BF{0})} 
[\nabla F(\x+\delta \vv)]-  \B{E}_{\vv\sim \B{B}_1^{A_0}(\BF{0})} 
[ \nabla F(\y+\delta \vv)] \| \\
&\leq \B{E}_{\vv\sim \B{B}_1^{A_0}(\BF{0})}[ \| \nabla F(\x+\delta \vv) - \nabla F(\y+\delta \vv) \| ] \\
&\leq \B{E}_{\vv\sim \B{B}_1^{A_0}(\BF{0})} [L \| \x - \y \|] 
= L \| \x - \y \|,
\end{align*}
for all $\x \leq \y$.

If $F$ is monotone, then we have $F(\x) \leq F(\y)$ for all $\x \leq \y$.
Therefore
\begin{align*}
\tF(\x) - \tF(\y) 
&= \B{E}_{\vv \sim \B{B}_1^{A_0}(\BF{0})}[F(\x+\delta \vv)]
    - \B{E}_{\vv \sim \B{B}_1^{A_0}(\BF{0})}[F(\y+\delta \vv)] \\
&= \B{E}_{\vv \sim \B{B}_1^{A_0}(\BF{0})}[F(\x+\delta \vv) - F(\y+\delta \vv)] \\
&\leq \B{E}_{\vv \sim \B{B}_1^{A_0}(\BF{0})}[0]
= 0,
\end{align*}
for all $\x \leq \y$.
Hence $\tF$ is also monotone.
\end{proof}

\begin{lemma}[Lemma~10 of \citep{shamir17_optim_algor_bandit_zero_order}]\label{L:tilde_F_mean_variance_old}
Let $\C{D} \subseteq \B{R}^d$ such that $\op{aff}(\C{D}) = \B{R}^d$.
Assume $F : \C{D} \to \B{R}$ is a $G$-Lipschitz continuous function and let $\tF$ be its $\delta$-smoothed version.
For any $\z \in \C{D}$ such that $\B{B}_\delta(\z) \subseteq \C{D}$, we have
\[
\B{E}_{\uu \sim S^{d-1}}\left[
    \frac{d}{2\delta}(F(\z+\delta \uu)-F(\z-\delta \uu))\uu
\right]
= \nabla \tF(\z),
\]
\[
\B{E}_{\uu \sim S^{d-1}}\left[
    \|\frac{d}{2\delta}(F(\z+\delta \uu)-F(\z-\delta \uu))\uu - \nabla \tF(\z) \|^2
\right]
\le CdG^2,
\]
where $C$ is a constant.
\end{lemma}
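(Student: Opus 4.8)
\emph{Proof plan.} The plan is to prove the two assertions separately by standard ``smoothing trick'' manipulations. First I would establish the unbiasedness identity. Since $\op{aff}(\C{D}) = \B{R}^d$, the $\delta$-smoothed function is $\tF(\z) = \B{E}_{\vv \sim \B{B}_1(\BF 0)}[F(\z + \delta\vv)]$, which is differentiable by Lemma~\ref{L:smooth_approx}. Differentiating under the integral sign and applying the divergence theorem to rewrite the gradient of a ball-average as a surface average over $S^{d-1}$ yields the classical one-point identity $\nabla\tF(\z) = \tfrac{d}{\delta}\,\B{E}_{\uu \sim S^{d-1}}[F(\z+\delta\uu)\,\uu]$. (If one prefers not to differentiate a merely Lipschitz $F$, one would first mollify $F$ by convolution with a smooth bump of vanishing width, whose Lipschitz constant tends to $G$, prove the identity for the mollification, and pass to the limit, all quantities being continuous in that limit.) Then, using that $\uu$ and $-\uu$ are equidistributed on $S^{d-1}$, one has $\B{E}_\uu[F(\z+\delta\uu)\uu] = -\B{E}_\uu[F(\z-\delta\uu)\uu]$, so averaging the two representations converts the one-point estimator into the antithetic two-point estimator $\tfrac{d}{2\delta}\,\B{E}_\uu[(F(\z+\delta\uu) - F(\z-\delta\uu))\uu]$, which is exactly the first displayed claim.

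For the variance bound, I would write $\g(\uu) := \tfrac{d}{2\delta}(F(\z+\delta\uu) - F(\z-\delta\uu))\uu$, so that $\B{E}_\uu[\g(\uu)] = \nabla\tF(\z)$ by the first part, and hence
\[
\B{E}_{\uu}\!\left[\norm{\g(\uu) - \nabla\tF(\z)}^2\right]
= \B{E}_{\uu}\!\left[\norm{\g(\uu)}^2\right] - \norm{\nabla\tF(\z)}^2
\le \tfrac{d^2}{4\delta^2}\,\B{E}_{\uu}\!\left[(F(\z+\delta\uu) - F(\z-\delta\uu))^2\right].
\]
Thus it suffices to bound the last expectation by $O(\delta^2 G^2 / d)$. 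The naive estimate $\left| F(\z+\delta\uu) - F(\z-\delta\uu)\right| \le 2\delta G$ only gives $d^2 G^2$, which is off by a factor $d$; the saving comes from observing that $\uu \mapsto F(\z+\delta\uu)$ is a $\delta G$-Lipschitz function on the sphere, so by the Poincaré (equivalently, sub-Gaussian concentration) inequality on $S^{d-1}$ its variance is at most $\tfrac{\delta^2 G^2}{d-1}$. Splitting $F(\z+\delta\uu) - F(\z-\delta\uu)$ around its mean and using $-\uu \stackrel{d}{=} \uu$ then yields $\B{E}_\uu[(F(\z+\delta\uu) - F(\z-\delta\uu))^2] \le 4\,\op{Var}_{\uu}\!\big(F(\z+\delta\uu)\big) \le \tfrac{4\delta^2 G^2}{d-1}$, and substituting back gives the bound with, say, $C = 2$ (the case $d=1$ being trivial).

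The only step that is not routine bookkeeping is this factor-$d$ improvement in the second moment: it genuinely requires a concentration inequality for Lipschitz functions on the sphere rather than any pointwise Lipschitz estimate, and it is precisely why the stated bound is $C d G^2$ and not $d^2 G^2$. Everything else --- the divergence-theorem identity and the antithetic symmetrization --- is standard, which is why the statement is cited directly from \cite{shamir17_optim_algor_bandit_zero_order}.
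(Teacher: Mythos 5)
Your proposal is correct. Note, however, that the paper itself does not prove this lemma at all: it is imported verbatim as Lemma~10 of \cite{shamir17_optim_algor_bandit_zero_order}, so there is no in-paper argument to compare against. Your reconstruction follows the same route as the cited source: the divergence-theorem identity $\nabla \tF(\z) = \frac{d}{\delta}\B{E}_{\uu \sim S^{d-1}}[F(\z+\delta\uu)\uu]$ plus the antithetic symmetrization $\uu \mapsto -\uu$ gives unbiasedness of the two-point estimator, and the variance bound reduces (after dropping $-\|\nabla\tF(\z)\|^2$ and using $\|\uu\|=1$) to a second-moment bound on $F(\z+\delta\uu)-F(\z-\delta\uu)$, where you correctly identify that the pointwise Lipschitz bound only yields $d^2G^2$ and that the factor-$d$ saving must come from concentration of the $\delta G$-Lipschitz map $\uu \mapsto F(\z+\delta\uu)$ on the sphere (Poincar\'e with spectral gap $d-1$, giving variance at most $\delta^2G^2/(d-1)$ and hence $C d G^2$ with $C=2$ for $d\ge 2$, the case $d=1$ being degenerate since the estimator is then deterministic). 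The mollification caveat for handling a merely Lipschitz $F$ in the Stokes-theorem step is the right way to make the first identity rigorous. In short: a correct, self-contained proof of a statement the paper treats as a black-box citation.
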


When the convex feasible region $\C{K}$ lies in an affine subspace, we cannot employ the standard spherical sampling method.
We extend \cref{L:tilde_F_mean_variance_old} to that case.

\begin{lemma}\label{L:tilde_F_mean_variance}
Let $\C{D} \subseteq \B{R}^d$ and $A := \op{aff}(\C{D})$.
Also let $A_0$ be the translation of $A$ that contains $0$ and let $k = \op{dim}(A)$.
Assume $F : \C{D} \to \B{R}$ is a $G$-Lipschitz continuous function and let $\tF$ be its $\delta$-smoothed version.
For any $\z \in \C{D}$ such that $\B{B}_\delta^{A}(\z) \subseteq \C{D}$, we have
\[
\B{E}_{\uu \sim S^{d-1} \cap A_0}\left[
    \frac{k}{2\delta}(F(\z+\delta \uu)-F(\z-\delta \uu))\uu
\right]
= \nabla \tF(\z),
\]
\[
\B{E}_{\uu \sim S^{d-1} \cap A_0}\left[
    \|\frac{k}{2\delta}(F(\z+\delta \uu)-F(\z-\delta \uu))\uu - \nabla \tF(\z) \|^2
\right]
\le C k G^2,
\]
where $C$ is the constant in Lemma~\ref{L:tilde_F_mean_variance_old}.
\end{lemma}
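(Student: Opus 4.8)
The plan is to reduce \cref{L:tilde_F_mean_variance} to the full-dimensional case \cref{L:tilde_F_mean_variance_old} by passing to the affine subspace $A$ and working intrinsically there. Concretely, let $k := \dim(A_0) = \dim(\op{aff}(\C{D}))$ and fix a linear isometry $\Phi : \B{R}^k \to A_0$ (an orthonormal parametrization of $A_0$); composing with the translation $\x \mapsto \z_0 + \Phi(\cdot)$ for a base point $\z_0 \in \C{D}$ identifies a neighborhood of $\C{D}$ inside $A$ with an open subset $\C{D}' \subseteq \B{R}^k$. Under this identification the function $F$ pulls back to a function $F' : \C{D}' \to \B{R}$, and because $\Phi$ is an isometry, $F'$ is again $G$-Lipschitz, $\op{aff}(\C{D}') = \B{R}^k$, and the $\delta$-smoothed version of $F'$ (defined via \cref{def:smooth_definition} with $k$-dimensional ball sampling) equals $\tF_\delta \circ (\z_0 + \Phi(\cdot))$, i.e. the pullback of $\tF_\delta$; this last point is exactly the content of how \cref{def:smooth_definition} was set up to respect $\op{aff}(\C{D})$.

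Then I would apply \cref{L:tilde_F_mean_variance_old} to $F'$ on $\B{R}^k$: for $\z' \in \C{D}'$ with $\B{B}_\delta(\z') \subseteq \C{D}'$ and $\uu' \sim S^{k-1}$,
\[
\B{E}_{\uu' \sim S^{k-1}}\left[ \frac{k}{2\delta}\bigl(F'(\z'+\delta\uu') - F'(\z'-\delta\uu')\bigr)\uu' \right] = \nabla \tF'_\delta(\z'),
\]
and the analogous second-moment bound $\le C k G^2$. The final step is to transport these identities back through $\Phi$. The key observations are: (i) pushing forward the uniform distribution on $S^{k-1}$ by $\Phi$ gives the uniform distribution on $S^{d-1} \cap A_0$ (isometries map uniform measures on spheres to uniform measures on the image sphere); (ii) $F'(\z' \pm \delta \uu') = F(\z \pm \delta \uu)$ where $\uu = \Phi(\uu')$ and $\z = \z_0 + \Phi(\z')$; (iii) the intrinsic gradient $\nabla \tF'_\delta(\z')$ corresponds under $\Phi$ to the gradient of $\tF_\delta$ restricted to $A$, which is precisely what "$\nabla\tF(\z)$" denotes in this lemma's statement (the gradient is taken within the affine hull, consistent with how the BBGE algorithm and Algorithm~\ref{ALG:main_offline} use $\C{L} = \op{aff}(\C{K}) - \z_1$); and (iv) $\Phi$ preserves Euclidean norms, so the squared-norm expectation is unchanged. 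Combining, the mean identity and the variance bound for $F$ over $S^{d-1} \cap A_0$ follow immediately, noting $\dim(A_0) = k \le d$ so $CkG^2 \le CdG^2$ (keeping the same constant $C$).

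The main obstacle — really the only subtle point — is making precise that the "gradient" appearing in the statement is the intrinsic gradient within $A_0$ rather than the ambient $\B{R}^d$ gradient, and checking that the smoothing operation commutes with the isometric identification. Once one adopts the convention (forced by \cref{def:smooth_definition}, where $\vv$ ranges over $\B{B}_1^{\op{aff}(\C{D})-\x}(0)$) that all differential objects live in $A_0$, everything reduces to applying an ambient isometry, under which Lipschitz constants, uniform measures on spheres, and Euclidean norms are all invariant; no new estimates are needed beyond \cref{L:tilde_F_mean_variance_old}. I would also briefly remark that the hypothesis $\B{B}_\delta^A(\z) \subseteq \C{D}$ is exactly what guarantees $\B{B}_\delta(\z') \subseteq \C{D}'$ in the $k$-dimensional picture, so the hypothesis of \cref{L:tilde_F_mean_variance_old} is met.
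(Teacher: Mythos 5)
Your reduction is essentially the paper's own proof. The paper likewise translates $\z$ to the origin, takes an orthonormal transformation $O$ identifying $\B{R}^k \times (0,\dots,0)$ with $A_0$ (where $k = \dim \op{aff}(\C{D})$), pulls $F$ back to $F'$ on $\C{D}' = O^{-1}(\C{D}-\z)$, notes that the smoothing commutes with this identification and that $O(\nabla \tF'(0)) = \nabla \tF(\z)$, applies Lemma~\ref{L:tilde_F_mean_variance_old} in the $k$-dimensional picture, and transports the mean identity and the squared-norm bound back using that $O$ preserves norms and pushes the uniform measure on the sphere to the uniform measure on $S^{d-1}\cap A_0$. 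Structurally there is nothing you do differently.

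The one step to flag is your closing ``follows immediately, noting $k \le d$.'' Applying Lemma~\ref{L:tilde_F_mean_variance_old} intrinsically in $\B{R}^k$ gives unbiasedness of the estimator scaled by $k/(2\delta)$ with second moment at most $C k G^2$. The relaxation $C k G^2 \le C d G^2$ is fine for the variance bound, but it does not deliver the stated mean identity, which is scaled by $d/(2\delta)$: rescaling by $d/k$ multiplies the expectation by exactly that factor and cannot be absorbed into $C$. So what your argument honestly proves is the lemma with $d$ replaced by $k = \dim\op{aff}(\C{D})$, which coincides with the stated version only when $\op{aff}(\C{D}) = \B{R}^d$. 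You should not feel singled out here: the paper's own proof keeps the factor $d/(2\delta)$ in its displayed equations even after restricting to the first $k$ coordinates, so the discrepancy is an imprecision in the lemma as stated (and in Algorithm~\ref{ALG:gradient_estimate}, which also uses $d/(2\delta)$) rather than a defect of your strategy; if anything, carrying out the reduction carefully, as you do, is what makes the needed correction ($d \mapsto k$, or the standing assumption $k=d$) visible.
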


\begin{proof}
First consider the case where $A = \B{R}^k \times (0, \cdots, 0)$.
In this case, we restrict ourselves to first $k$ coordinates and see that the problem reduces to Lemma~\ref{L:tilde_F_mean_variance_old}.

For the general case, let $O$ be an orthonormal transformation that maps $\B{R}^k \times (0, \cdots, 0)$ into $A_0$.
Now define $\C{D}' = O^{-1}(\C{D} - \z)$ and $F' : \C{D}' \to \B{R} : x \mapsto F(O(x) + \z)$.
Let $\tF'$ be the $\delta$-smoothed version of $F'$.
Note that $O\left( \nabla \tF'(0) \right) = \nabla \tF(\z)$.
On the other hand, we have 
\[
\op{aff}(\C{D}') 
= O^{-1}(A - \z)
= O^{-1}(A_0)
= \B{R}^k \times (0, \cdots, 0).
\]
Therefore
\[
\B{E}_{\uu \sim S^{d-1} \cap (\B{R}^k \times (0, \cdots, 0))}\left[
    \frac{k}{2\delta}(F'(\delta \uu)-F'(-\delta \uu))\uu
\right]
= \nabla \tF'(0),
\]
and
\[
\B{E}_{\uu \sim S^{d-1} \cap (\B{R}^k \times (0, \cdots, 0)}\left[
    \|\frac{k}{2\delta}(F'(\delta \uu)-F'(-\delta \uu))\uu - \nabla \tF'(0) \|^2
\right]
\le C k G^2.
\]
Hence, if we set $\vv = O^{-1}(\uu)$, we have
\begin{align*}
\B{E}_{\uu \sim S^{d-1} \cap A_0}&\left[
    \frac{k}{2\delta}(F(\z+\delta \uu) - F(\z-\delta \uu))\uu
\right] \\
&= \B{E}_{\vv \sim S^{d-1} \cap (\B{R}^k \times (0, \cdots, 0))}\left[
    \frac{k}{2\delta}(F'(\delta \vv) - F'(-\delta \vv)) O(\vv)
\right] \\
&= O\left( \B{E}_{\vv \sim S^{d-1} \cap (\B{R}^k \times (0, \cdots, 0))}\left[
    \frac{k}{2\delta}(F'(\delta \vv) - F'(-\delta \vv))\vv
\right] \right) \\
&= O\left( \nabla \tF'(0) \right) \\
&= \nabla \tF(\z).
\end{align*}
Similarly
\begin{align*}
\B{E}_{\uu \sim S^{d-1} \cap A_0} &\left[
    \|\frac{k}{2\delta}(F(\z+\delta \uu)-F(\z-\delta \uu))\uu - \nabla \tF(\z) \|^2
\right] \\
&= \B{E}_{\vv \sim S^{d-1} \cap (\B{R}^k \times (0, \cdots, 0))} \left[
    \|\frac{k}{2\delta}(F'(\delta \vv)-F'(-\delta \vv))O(\vv) 
        - O\left( \nabla \tF'(0) \right) \|^2
\right] \\
&= \B{E}_{\vv \sim S^{d-1} \cap (\B{R}^k \times (0, \cdots, 0))} \left[
    \|O\left( \frac{k}{2\delta}(F'(\delta \vv)-F'(-\delta \vv))\vv 
        - \nabla \tF'(0) \right) \|^2
\right] \\
&= \B{E}_{\vv \sim S^{d-1} \cap (\B{R}^k \times (0, \cdots, 0))} \left[
    \|\frac{k}{2\delta}(F'(\delta \vv)-F'(-\delta \vv))\vv 
        - \nabla \tF'(0) \|^2
\right] \\
&\leq C k G^2.
\qedhere
\end{align*}
\end{proof}

\begin{remark}
Note that the same argument may be applied to obtain the one-point gradient estimator:
\[
\B{E}_{\uu \sim S^{d-1} \cap A_0}\left[
    \frac{k}{\delta} F(\z+\delta \uu) \uu
\right]
= \nabla \tF(\z).
\]
\end{remark}

\section{Construction of \texorpdfstring{$\C{K}_\delta$}{Shrunk Constraint Set}}\label{APX:construction}

\begin{lemma}\label{L:norm_z_1}
Let $\C{K} \subseteq [0, 1]^d$ be a convex set containing the origin.
Then for any choice of $\BF c$ and $r$ with $\B{B}_r^{\op{aff}(\C{K})}(\BF c) \subseteq \C{K}$, we have
\[
\argmin_{\z \in \C{K}_\delta} \|\z\|_\infty = \frac{\delta}{r} \BF c
\quad \text{and} \quad
\min_{\z \in \C{K}_\delta} \|\z\|_\infty \leq \frac{\delta}{r}.
\]
\end{lemma}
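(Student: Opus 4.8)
The plan is to use the definition $\C{K}_\delta^{\BF c, r} = (1 - \tfrac{\delta}{r})\C{K} + \tfrac{\delta}{r}\BF c$ and unpack what membership in this set means: every $\z \in \C{K}_\delta$ has the form $\z = (1 - \tfrac{\delta}{r})\w + \tfrac{\delta}{r}\BF c$ for some $\w \in \C{K}$. Since $\BF{0} \in \C{K}$, taking $\w = \BF{0}$ shows $\tfrac{\delta}{r}\BF c \in \C{K}_\delta$, so this point is a legitimate candidate and, because $\C{K} \subseteq [0,1]^d$ gives $\|\BF c\|_\infty \le 1$, we immediately get $\|\tfrac{\delta}{r}\BF c\|_\infty \le \tfrac{\delta}{r}$, which will yield the claimed upper bound on the minimum once we establish that $\tfrac{\delta}{r}\BF c$ is in fact the minimizer.

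For the minimization claim, I would argue coordinatewise. Fix any $\z = (1 - \tfrac{\delta}{r})\w + \tfrac{\delta}{r}\BF c \in \C{K}_\delta$ with $\w \in \C{K}$. Since $\C{K} \subseteq [0,1]^d$, we have $\w \ge \BF{0}$, hence for every coordinate $i$,
\[
[\z]_i = (1 - \tfrac{\delta}{r})[\w]_i + \tfrac{\delta}{r}[\BF c]_i \ge \tfrac{\delta}{r}[\BF c]_i = [\tfrac{\delta}{r}\BF c]_i \ge 0,
\]
using $\delta < r$ so that $1 - \tfrac{\delta}{r} > 0$, and $[\BF c]_i \ge 0$. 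Therefore $\z \ge \tfrac{\delta}{r}\BF c \ge \BF{0}$, and since the $\ell_\infty$ norm is monotone on the nonnegative orthant, $\|\z\|_\infty \ge \|\tfrac{\delta}{r}\BF c\|_\infty$. This shows $\tfrac{\delta}{r}\BF c$ attains the minimum. For uniqueness of the argmin: if $\|\z\|_\infty = \|\tfrac{\delta}{r}\BF c\|_\infty =: m$, then every coordinate satisfies $\tfrac{\delta}{r}[\BF c]_i \le [\z]_i \le m$; but in fact I should be slightly more careful, since $\|\tfrac{\delta}{r}\BF c\|_\infty = m$ only forces $[\z]_i = [\tfrac{\delta}{r}\BF c]_i$ on coordinates where $[\BF c]_i$ is maximal. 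The cleanest route is to note that if $\z \neq \tfrac{\delta}{r}\BF c$ then $\w \neq \BF 0$, so some $[\w]_j > 0$; if this $j$ is a coordinate achieving $\|\BF c\|_\infty = [\BF c]_j$ (or if $\|\BF c\|_\infty$ is achieved generically) then $[\z]_j > \tfrac{\delta}{r}[\BF c]_j$ strictly, giving $\|\z\|_\infty > m$. If $\BF c$ has a unique maximal coordinate this is immediate; the general case may require invoking that $\BF c \in \op{relint}(\C{K})$ forces $\BF c > \BF 0$ (since $\B{B}_r^{\op{aff}(\C{K})}(\BF c) \subseteq \C{K} \subseteq [0,1]^d$ and $\BF 0 \in \C{K}$ implies a suitable spread of coordinates) — I would state the argmin equality under the understanding that $\BF c$ is strictly positive, which follows from the ball condition together with $\BF 0 \in \C{K}$.

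The main obstacle is precisely this last point: asserting the argmin is \emph{exactly} $\{\tfrac{\delta}{r}\BF c\}$ rather than merely containing it. This hinges on $\BF c$ having strictly positive coordinates, which I would justify from $\B{B}_r^{\op{aff}(\C{K})}(\BF c) \subseteq \C{K}$: if some $[\BF c]_i = 0$, then $\op{aff}(\C{K})$ must be contained in the hyperplane $\{x_i = 0\}$ (otherwise the ball pokes into negative coordinates, contradicting $\C{K} \subseteq [0,1]^d$), and then $\BF 0 \in \C{K}$ is consistent but the coordinate $i$ is "degenerate" and can be dropped; on the genuinely active coordinates $\BF c$ is strictly positive and the strict-inequality argument goes through. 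I expect the intended proof is short and assumes $\BF c$ is chosen in the relative interior with the ball condition, so I would keep the write-up to the coordinatewise inequality $\z \ge \tfrac{\delta}{r}\BF c$ plus a one-line monotonicity remark, and handle uniqueness with the observation that any other point strictly dominates $\tfrac{\delta}{r}\BF c$ on an active coordinate.
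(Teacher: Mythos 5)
Your core argument is correct and is essentially the proof the paper intends (the paper's own proof is a one-line appeal to the definition and $\|\BF c\|_\infty \le 1$): taking $\BF w = \BF 0$ shows $\frac{\delta}{r}\BF c \in \C{K}_\delta$; every $\z = (1-\frac{\delta}{r})\BF w + \frac{\delta}{r}\BF c$ with $\BF w \in \C{K} \subseteq [0,1]^d$ satisfies $\z \geq \frac{\delta}{r}\BF c \geq \BF 0$ coordinatewise (using $\delta < r$), so $\frac{\delta}{r}\BF c$ minimizes $\|\cdot\|_\infty$ over $\C{K}_\delta$ and $\min_{\z \in \C{K}_\delta}\|\z\|_\infty \leq \frac{\delta}{r}\|\BF c\|_\infty \leq \frac{\delta}{r}$. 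That is all the paper uses downstream: in the analysis (Appendices F and G) the only consequence invoked is $\|\z_1\|_\infty \leq \delta/r$ for the chosen minimizer $\z_1$, which follows from the bound on the minimum alone.

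Where you got stuck---the exact argmin equality---is not a gap you can close, because the equality is false as literally stated: the set of minimizers need not be a singleton, even when $\BF c > \BF 0$ strictly, so your proposed patch via strict positivity cannot work. For example, take $\C{K} = [0,1] \times [0,\tfrac{1}{10}]$, $\BF c = (\tfrac{1}{2}, \tfrac{1}{20})$, $r = \tfrac{1}{20}$, and $\delta = \tfrac{1}{100}$, so $\tfrac{\delta}{r} = \tfrac{1}{5}$. Then $\C{K}_\delta = [\tfrac{1}{10}, \tfrac{9}{10}] \times [\tfrac{1}{100}, \tfrac{9}{100}]$, on which $\|\z\|_\infty$ equals the first coordinate, so the minimizers form the whole segment $\{\tfrac{1}{10}\} \times [\tfrac{1}{100}, \tfrac{9}{100}]$, of which $\frac{\delta}{r}\BF c = (\tfrac{1}{10}, \tfrac{1}{100})$ is only one point. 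The statement should therefore be read as $\frac{\delta}{r}\BF c \in \argmin_{\z \in \C{K}_\delta}\|\z\|_\infty$ (together with the bound $\min_{\z \in \C{K}_\delta}\|\z\|_\infty \leq \frac{\delta}{r}$), which is exactly what your coordinatewise argument establishes; since Algorithm~\ref{ALG:main_offline} picks an arbitrary element of the argmin, nothing in the paper depends on uniqueness. Your side remark that $[\BF c]_i = 0$ forces $\op{aff}(\C{K}) \subseteq \{x_i = 0\}$ is correct as far as it goes, but it is not needed.
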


\begin{proof}
The claim follows immediately from the definition and the fact that $\|\BF c\|_\infty\leq 1$.
\end{proof} 

\begin{lemma}\label{L:shrunk_contrainst_set}
Let $\C{K}$ be an arbitrary convex set, $D := \op{Diam}(\C{K})$ and $\delta' := \frac{\delta D}{r}$.
We have
\[
\B{B}_\delta^{\op{aff}(\C{K})}(\C{K}_\delta) 
\subseteq 
\C{K}
\subseteq
\B{B}_{\delta'}^{\op{aff}(\C{K})}(\C{K}_\delta).
\]
\end{lemma}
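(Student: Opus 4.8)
The plan is to establish the two inclusions separately, working from the explicit description $\C{K}_\delta = (1-t)\C{K} + t\BF{c}$ with $t := \delta/r$. Since the construction assumes $\delta < r$ we have $t \in (0,1)$ (the case $\delta=0$ makes both bracketing sets equal to $\C{K}$ and is trivial). As a preliminary I would record that $\C{K}_\delta \subseteq \op{aff}(\C{K})$, since every point of $\C{K}_\delta$ is an affine combination of $\BF{c}$ and a point of $\C{K}$, both lying in $\op{aff}(\C{K})$. Write $A := \op{aff}(\C{K})$ and let $A_0 := A - A$ be the associated linear subspace, so that differences of points of $A$ lie in $A_0$.

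For the first inclusion $\B{B}_\delta^{A}(\C{K}_\delta) \subseteq \C{K}$, I would take an arbitrary $\y \in \B{B}_\delta^{A}(\C{K}_\delta)$, so that $\y \in A$ and $\|\y - \x\| < \delta$ for some $\x = (1-t)\z + t\BF{c}$ with $\z \in \C{K}$. The key move is to set $\BF{c}' := \BF{c} + t^{-1}(\y - \x)$: because $\y - \x \in A_0$ we get $\BF{c}' \in A$, and $\|\BF{c}' - \BF{c}\| = t^{-1}\|\y - \x\| < t^{-1}\delta = r$, hence $\BF{c}' \in \B{B}_r^{A}(\BF{c}) \subseteq \C{K}$. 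Since $(1-t)\z + t\BF{c}' = \x + (\y - \x) = \y$, the point $\y$ is a convex combination of the two points $\z, \BF{c}' \in \C{K}$, so $\y \in \C{K}$ by convexity.

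For the second inclusion $\C{K} \subseteq \B{B}_{\delta'}^{A}(\C{K}_\delta)$ (recall $\delta' = \delta D/r = tD$), given $\BF{w} \in \C{K}$ I would use the candidate $\x := (1-t)\BF{w} + t\BF{c} \in \C{K}_\delta$. Then $\BF{w} - \x = t(\BF{w} - \BF{c})$, so it suffices to show $\|\BF{w} - \BF{c}\| < D$. If $\BF{w} = \BF{c}$ this is immediate since $D>0$ ($\C{K}$ has at least two points). If $\BF{w} \neq \BF{c}$, then for every $r'' < r$ the point $\BF{c} - r''(\BF{w}-\BF{c})/\|\BF{w}-\BF{c}\|$ lies in $\B{B}_r^{A}(\BF{c}) \subseteq \C{K}$ and has distance $\|\BF{w}-\BF{c}\| + r''$ from $\BF{w}$; hence $D \geq \|\BF{w}-\BF{c}\| + r''$ for all $r'' < r$, giving $\|\BF{w}-\BF{c}\| \leq D - r < D$. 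Therefore $\|\BF{w} - \x\| < tD = \delta'$, so $\BF{w} \in \B{B}_{\delta'}(\x) \cap A \subseteq \B{B}_{\delta'}^{A}(\C{K}_\delta)$.

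Most of this is bookkeeping with the affine hull; the only place that uses a hypothesis beyond convexity is the strict diameter bound $\|\BF{w}-\BF{c}\| < D$, which is exactly where $r>0$ (a genuine ball around $\BF{c}$ sitting inside $\C{K}$) enters and which is needed because $\B{B}$ denotes an open ball. I expect the main points requiring care to be (i) keeping everything inside $\op{aff}(\C{K})$ so that the sliced balls $\B{B}_\cdot^{A}(\cdot)$ behave as intended, and (ii) the open-versus-closed-ball subtlety just mentioned; neither is a serious obstacle.
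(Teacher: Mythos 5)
Your proof is correct and follows essentially the same route as the paper's: your pointwise substitution $\y = (1-t)\z + t\BF{c}'$ with $\BF{c}' \in \B{B}_r^{\op{aff}(\C{K})}(\BF{c})$ is precisely the paper's set-level identity $\B{B}_\delta^{\op{aff}(\C{K})}\bigl(\psi(\x)\bigr) = (1-\tfrac{\delta}{r})\x + \tfrac{\delta}{r}\B{B}_r^{\op{aff}(\C{K})}(\BF{c})$, and your second inclusion uses the same map $\BF{w} \mapsto (1-\tfrac{\delta}{r})\BF{w} + \tfrac{\delta}{r}\BF{c}$ with the same bound. Your explicit justification of the strict inequality $\|\BF{w}-\BF{c}\| \leq D - r < D$ (needed because the balls are open) is a small point of extra care that the paper's proof states without comment.
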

\begin{proof}
Define $\psi : \C{K} \to \C{K}_\delta := \x \mapsto (1 - \frac{\delta}{r})\x + \frac{\delta}{r} \BF c$.
Let $\y \in \C{K}_\delta$ and $\x = \psi^{-1}(\y)$.
Then
\begin{align*}
\B{B}_\delta^{\op{aff}(\C{K})}(\y)
&= \B{B}_\delta^{\op{aff}(\C{K})}(\psi(\x))
= \B{B}_\delta^{\op{aff}(\C{K})}((1 - \frac{\delta}{r})\x + \frac{\delta}{r} \BF c) \\
&= (1 - \frac{\delta}{r})\x + \B{B}_\delta^{\op{aff}(\C{K})}(\frac{\delta}{r} \BF c)
= (1 - \frac{\delta}{r})\x + \frac{\delta}{r} \B{B}_r^{\op{aff}(\C{K})}(\BF c)
\subseteq \C{K},
\end{align*}
where the last inclusion follows from the fact that $\C{K}$ is convex and contains both $\x$ and $\B{B}_r^{\op{aff}(\C{K})}(\BF c)$.
On the other hand, for any $\x \in \C{K} \subseteq \op{aff}(\C{K})$, we have
\begin{align*}
\| \psi(\x) - \x \|
= \frac{\delta}{r} \| \x - \BF c \| 
< \frac{\delta}{r} D = \delta'.
\end{align*}
Therefore 
\[
\x \in 
\B{B}_{\delta'}(\psi(\x)) \cap \op{aff}(\C{K})
=
\B{B}_{\delta'}^{\op{aff}(\C{K})}(\psi(\x))
\subseteq 
\B{B}_{\delta'}^{\op{aff}(\C{K})}(\C{K}_\delta).
\qedhere
\]
\end{proof}

\paragraph{Choice of $\BF{c}$ and $r$}
While the results hold for any choice of $\BF{c} \in \C{K}$ and $r$ with $\B{B}_r^{\op{aff}}(\BF{c}) \subseteq \C{K}$, as can be seen in Theorem~\ref{ALG:main_offline}, the approximation errors depends linearly on $1/r$.
Therefore, it is natural to choose the point $\BF{c}$ that maximizes the value of $r$, the \emph{Chebyshev center} of $\C{K}$. 

\paragraph{Analytic Constraint Model --- Polytope} When the feasible region $\C{K}$ is characterized by a set of $q$ linear constraints $\BF{A} \x \leq \BF{b}$ with a known coefficient matrix $\BF{A} \in \B{R}^{q \times d}$ and vector $\BF{b} \in\B{R}^{q}$, thus $\C{K}$ is a polytope, by the linearity of the transformation \eqref{eq:def:shrunkenK:cr}, the shrunken feasible region $\C{K}_\delta$ is similarly characterized by a (translated) set of $q$ linear constraints $\BF{A} \x \leq (1-\frac{\delta}{r})\BF{b} + \frac{\delta}{r}\BF{A}\BF{c}$.

\section{Variance reduction via momentum}

In order to prove main regret bounds, we need the following variance reduction lemma, which is crucial in characterizing how much the variance of the gradient estimator can be reduced by using momentum.
This lemma appears in~\citep{chen18_projec_free_onlin_optim_stoch_gradien} and it is a slight improvement of Lemma~2 in~\citep{mokhtari2018conditional} and Lemma~5 in~\citep{mokhtari20_stoch_condit_gradien_method}. 

\begin{lemma}[Theorem~3 of \citep{chen18_projec_free_onlin_optim_stoch_gradien}]\label{lem:variance_reduction}
Let $\{ \BF{a}_n \}_{n=0}^{N}$ be a sequence of points in $\B{R}^d$ such that $\| \BF{a}_n - \BF{a}_{n-1} \| \leq G_0/(n+s)$ for all $1 \leq n \leq N$ with fixed constants $G_0 \geq 0$ and $s \geq 3$.
Let $\{ \tilde{\BF{a}}_n \}_{n=1}^N$ be a sequence of random variables such that 
$\B{E}[ \tilde{\BF{a}}_n|\C{F}_{n-1} ] = \BF{a}_n$
and 
$\B{E}[ \| \tilde{\BF{a}}_n - \BF{a}_n \|^2 | \C{F}_{n-1} ] \leq \sigma^2$ 
for every $n \geq 0$, where $\C{F}_{n-1}$ is the $\sigma$-field generated by $\{ \tilde{\BF{a}}_i\}_{i=1}^{n}$ and $\C{F}_{0} = \varnothing$.
Let $\{ \BF{d}_n \}_{n=0}^N$ be a sequence of random variables where $\BF{d}_0$ is fixed and subsequent $\BF{d}_{n}$ are obtained by the recurrence 
\begin{equation}
\BF{d}_n = (1-\rho_n) \BF{d}_{n-1} +\rho_n \tilde{\BF{a}}_n
\end{equation}
with $\rho_n = \frac{2}{(n+s)^{2/3}}$. 
Then, we have
\begin{equation}
\B{E}[\| \BF{a}_n - \BF{d}_n \|^2 ] \leq \frac{Q}{(n+s+1)^{2/3}},
\end{equation}
where $Q := \max \{ \|\BF{a}_0 - \BF{d}_0 \|^2 (s+1)^{2/3}, 4\sigma^2 + 3G_0^2/2 \}$.
\end{lemma}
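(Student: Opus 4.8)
## Proof proposal for Lemma~\ref{lem:variance_reduction} (variance reduction via momentum)

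The plan is to derive a recursive inequality for $e_n := \B{E}[\| \BF{a}_n - \BF{d}_n \|^2]$ and then close it by induction against the claimed bound $\frac{Q}{(n+s+1)^{2/3}}$. First I would write the single-step error decomposition. From the recurrences,
\[
\BF{a}_n - \BF{d}_n = (1-\rho_n)(\BF{a}_{n-1} - \BF{d}_{n-1}) + (1-\rho_n)(\BF{a}_n - \BF{a}_{n-1}) + \rho_n(\BF{a}_n - \tilde{\BF{a}}_n).
\]
Conditioning on $\C{F}_{n-1}$, the term $\rho_n(\BF{a}_n - \tilde{\BF{a}}_n)$ is mean-zero and independent of the other two (which are $\C{F}_{n-1}$-measurable), so its cross terms vanish in expectation; its second moment is bounded by $\rho_n^2 \sigma^2$. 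For the remaining two terms I would use Young's inequality in the form $\|x+y\|^2 \le (1+\lambda)\|x\|^2 + (1+\lambda^{-1})\|y\|^2$ with a carefully chosen $\lambda$ of order $\rho_n$ (e.g. $\lambda = \rho_n/2$), so that $(1-\rho_n)^2(1+\lambda) \le 1-\rho_n$, together with the deterministic step bound $\|\BF{a}_n - \BF{a}_{n-1}\| \le G_0/(n+s)$. This yields an inequality of the shape
\[
e_n \le (1 - \rho_n) e_{n-1} + \frac{C_1 G_0^2}{\rho_n (n+s)^2} + \rho_n^2 \sigma^2
\]
for an absolute constant $C_1$ (tracking it carefully, $C_1$ should come out so that the final constant matches $4\sigma^2 + 3G_0^2/2$).

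Next I would substitute $\rho_n = 2/(n+s)^{2/3}$, so $\rho_n^2 \sigma^2 = 4\sigma^2/(n+s)^{4/3}$ and $\frac{G_0^2}{\rho_n (n+s)^2} = \frac{G_0^2}{2(n+s)^{4/3}}$; hence the additive "noise" term is at most $\frac{4\sigma^2 + \tfrac{3}{2}G_0^2}{(n+s)^{4/3}} \le \frac{Q}{(n+s)^{4/3}}$ after absorbing constants. The core of the argument is then the induction: assuming $e_{n-1} \le Q/(n+s)^{2/3}$, show
\[
(1-\rho_n)\frac{Q}{(n+s)^{2/3}} + \frac{Q}{(n+s)^{4/3}} \le \frac{Q}{(n+s+1)^{2/3}}.
\]
Dividing by $Q$ and writing $m = n+s$, this reduces to the purely numerical inequality
\[
\frac{1}{m^{2/3}} - \frac{2}{m^{4/3}} + \frac{1}{m^{4/3}} = \frac{1}{m^{2/3}} - \frac{1}{m^{4/3}} \le \frac{1}{(m+1)^{2/3}},
\]
i.e. $\frac{1}{m^{2/3}}\bigl(1 - \frac1m\bigr) \le \frac{1}{(m+1)^{2/3}}$, equivalently $(1-\tfrac1m)(1+\tfrac1m)^{2/3} \le 1$. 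This follows since $(1+\tfrac1m)^{2/3} \le 1 + \tfrac{2}{3m} \le \tfrac{1}{1-1/m}$ for $m \ge 3$ (concavity of $t\mapsto t^{2/3}$ for the first bound; $(1+\tfrac{2}{3m})(1-\tfrac1m) = 1 - \tfrac{1}{3m} - \tfrac{2}{3m^2} \le 1$ for the second). The base case $n$ small (or $n=0$) holds because $Q$ is defined with the term $\|\BF{a}_0 - \BF{d}_0\|^2 (s+1)^{2/3}$, which is exactly chosen so that $e_0 = \|\BF{a}_0-\BF{d}_0\|^2 \le Q/(s+1)^{2/3}$.

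The main obstacle I anticipate is bookkeeping the constants in the Young's-inequality step so that the final bound lands precisely on $\max\{\|\BF{a}_0-\BF{d}_0\|^2(s+1)^{2/3},\, 4\sigma^2 + 3G_0^2/2\}$ rather than a looser constant; this requires choosing the split parameter $\lambda$ in $(1+\lambda)$ optimally relative to $\rho_n$ and using $\rho_n \le 2/(s+1)^{2/3} \le 1$ (valid since $s\ge 3$) to control lower-order terms, plus verifying the numerical induction inequality is not just asymptotic but holds for all $m \ge s+1 \ge 4$. Since the lemma is quoted verbatim from \citep{chen18_projec_free_onlin_optim_stoch_gradien} (building on \citep{mokhtari2018conditional,mokhtari20_stoch_condit_gradien_method}), I would cite that source for the constant-tight version and present the induction above as the self-contained verification.
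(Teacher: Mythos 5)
Your proposal is correct, and it is worth noting that the paper itself gives no proof of this lemma --- it is imported verbatim as Theorem~3 of \citep{chen18_projec_free_onlin_optim_stoch_gradien} (a refinement of Lemma~2 of \citep{mokhtari2018conditional} and Lemma~5 of \citep{mokhtari20_stoch_condit_gradien_method}) --- so your recursion-plus-induction argument is precisely the standard proof from that line of work rather than an alternative route. The bookkeeping you flag does close: the cross term vanishes because the first two summands are $\C{F}_{n-1}$-measurable (not by independence, but your stated justification is the right one), with $\lambda=\rho_n/2$ one has $(1-\rho_n)^2(1+2/\rho_n)\le 2/\rho_n$ so the drift contributes at most $G_0^2/(n+s)^{4/3}$ while the noise contributes $4\sigma^2/(n+s)^{4/3}$, both absorbed by $Q\ge 4\sigma^2+\tfrac{3}{2}G_0^2$; your inequality $(1-\tfrac1m)(1+\tfrac1m)^{2/3}\le 1$ then closes the induction for all $m\ge 4$, and the base case is exactly the first term in the definition of $Q$.
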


We now analyze the variance of our gradient estimator, which, in the case when we only have access zeroth-order information, uses batched spherical sampling and momentum for gradient estimation.
Calculations similar to the proof of the following Lemma, in the value oracle case, appear in the proof of Theorem~2 in~\cite{chen20_black_box_submod_maxim}.
The main difference is that here we consider a more general smoothing trick and therefore we estimate the gradient along the affine hull of $\C{K}$.

\begin{lemma}\label{L:derivative_estimate_control}
Under the assumptions of Theorem~\ref{T:main_offline}, in Algorithm~\ref{ALG:main_offline}, we have
\begin{align*}
\B{E}\left[ \| \tFgrad(\z_n) - \bar{\g}_n \|^2 \right]
\leq 
\frac{Q}{(n+4)^{2/3}},
\end{align*}
for all $1 \leq n \leq N$ where $\C{L} = \op{aff}(\C{K})$, 
\[
Q = \begin{cases}
0
    &\text{det. grad. oracle},\\
\max \{4^{2/3}G^2,6 L^2 D^2+\frac{4 \sigma_1^2}{B} \}
    &\text{stoch. grad. oracle with variance } \sigma_1^2 > 0,\\
\max \{4^{2/3}G^2,6 L^2 D^2+\frac{4 C k G^2+2 k^2\sigma_0^2/\delta^2}{B} \}
    &\text{value oracle with variance } \sigma_0^2 \geq 0,\\
\end{cases}
\]
$C$ is a constant and $D = \op{diam}(\C{K})$.
\end{lemma}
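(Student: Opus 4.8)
The plan is to apply the variance-reduction Lemma~\ref{lem:variance_reduction} with the sequences $\BF{a}_n = \tFgrad(\z_n)$, $\tilde{\BF{a}}_n = \BF{g}_n$ (the raw per-step estimate returned by \op{estimate-grad}), and $\BF{d}_n = \bar{\BF{g}}_n$, together with $\BF{d}_0 = \bar{\BF{g}}_0 = \BF{0}$ and $s = 3$, so that $(n+s+1)^{2/3} = (n+4)^{2/3}$ matches the claimed denominator. To do so I need to verify the three hypotheses of that lemma: (i) a bound $\|\BF{a}_n - \BF{a}_{n-1}\| \le G_0/(n+3)$; (ii) conditional unbiasedness $\B{E}[\BF{g}_n \mid \C{F}_{n-1}] = \tFgrad(\z_n)$; and (iii) a conditional second-moment bound $\B{E}[\|\BF{g}_n - \tFgrad(\z_n)\|^2 \mid \C{F}_{n-1}] \le \sigma^2$ for the appropriate $\sigma^2$ in each oracle case. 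Then $Q = \max\{\|\BF{a}_0 - \BF{d}_0\|^2 (s+1)^{2/3},\, 4\sigma^2 + 3G_0^2/2\}$; the first argument is controlled since $\|\BF{a}_0\| = \|\tFgrad(\z_1)\| \le G$ gives $\|\BF{a}_0\|^2 (s+1)^{2/3} = 4^{2/3} G^2$, which is exactly the first term appearing in each branch of $Q$.

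For step (i), I would first observe that $\|\z_n - \z_{n-1}\| = \varepsilon \|\BF{v}_{n-1}\| \le \varepsilon D$ in the general-set cases and likewise $O(\varepsilon D)$ in the downward-closed cases, and with $\varepsilon = \Theta(1/N)$ (up to a $\log N$ factor). Actually the cleaner route, matching the $1/(n+3)$ form, is to use that $\op{update}$ moves $\z_n$ by at most $\varepsilon$ times a vector of norm $\le D$ and that $\tF|_{\C{L}}$ is $L$-smooth (Lemma~\ref{L:smooth_approx}), so $\|\tFgrad(\z_n) - \tFgrad(\z_{n-1})\| \le L\|\z_n - \z_{n-1}\| \le L \varepsilon D$. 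Since $N \ge n$ and $\varepsilon \le 1/N \le 1/n$ (again up to logs, which get absorbed into constants when we ultimately only need $G_0 = \Theta(LD)$ so that $3G_0^2/2 = \Theta(L^2 D^2)$, matching the $6L^2D^2$ term), this yields the required $G_0/(n+3)$ bound with $G_0$ proportional to $LD$. For step (ii), in the stochastic gradient case $\BF{g}_n$ is the average of $B$ i.i.d. samples of $\hat{G}(\z_n)|_{\C{L}}$, which is unbiased for $\nabla F(\z_n)|_{\C{L}}$; since $\delta = 0$ here, $\tF = F$ and this equals $\tFgrad(\z_n)$. In the value-oracle case, $\BF{g}_n$ is the output of Algorithm~\ref{ALG:gradient_estimate}, and Lemma~\ref{L:tilde_F_mean_variance} gives that each of the $B$ summands has conditional mean $\nabla \tF_\delta(\z_n)|_{\C{L}}$ (taking $\C{D}$ to be a $\delta$-neighborhood within $\op{aff}(\C{K})$, which is legitimate because $\C{K}_\delta$ was constructed precisely so that $\B{B}_\delta^{\op{aff}(\C{K})}(\z_n) \subseteq \C{K}$).

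For step (iii), the deterministic gradient case gives $\sigma^2 = 0$ and $\rho_n = 1$ is handled separately (no momentum needed; the estimate is exact). In the stochastic gradient case, averaging $B$ i.i.d. samples of variance $\sigma_1^2$ gives conditional variance $\sigma_1^2/B$, so $4\sigma^2 = 4\sigma_1^2/B$, matching the stated $Q$. In the value-oracle case I would decompose the per-sample error $\frac{d}{2\delta}(\hat F(\y_i^+) - \hat F(\y_i^-))\uu_i - \nabla\tF(\z_n)$ into (a) the sampling error of the exact two-point estimator around its mean, bounded in expectation by $CdG^2$ via Lemma~\ref{L:tilde_F_mean_variance}, and (b) the oracle-noise contribution, where the two independent noises of variance $\sigma_0^2$ each get amplified by the factor $(d/2\delta)^2$ and there are two of them, contributing $2\cdot(d/2\delta)^2\cdot 2\sigma_0^2 \cdot \|\uu_i\|^2 = d^2\sigma_0^2/\delta^2$ — here I must be a little careful that cross terms vanish because the oracle noise is zero-mean and independent of $\uu_i$. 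Averaging over the $B$ samples divides the whole thing by $B$, giving conditional second moment $\le (CdG^2 + \tfrac{1}{2}d^2\sigma_0^2/\delta^2)/B$ up to the exact constant bookkeeping, so $4\sigma^2 = (4CdG^2 + 2d^2\sigma_0^2/\delta^2)/B$, matching the claim. The main obstacle is the careful variance bookkeeping in this last case — correctly separating the spherical-sampling variance from the oracle-noise variance, tracking the $d/\delta$ amplification and the factor of $2$ from the two-point evaluation, and confirming that the affine-restriction via Lemma~\ref{L:tilde_F_mean_variance} does not change the constant $C$ — so that all constants line up with the three branches of $Q$ as stated; everything else is a direct invocation of Lemma~\ref{lem:variance_reduction}.
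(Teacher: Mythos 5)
Your proposal follows essentially the same route as the paper's proof: it applies Lemma~\ref{lem:variance_reduction} with $\BF{a}_n = \tFgrad(\z_n)$, $\tilde{\BF{a}}_n = \g_n$, $\BF{d}_n = \bar{\g}_n$, $s = 3$ and $G_0 = 2LD$ (via $L$-smoothness of $\tF$ and $\|\z_{n+1}-\z_n\| \le \varepsilon D$), verifies conditional unbiasedness and the per-case variance $\sigma^2$ exactly as the paper does (projection for the stochastic gradient oracle, Lemma~\ref{L:tilde_F_mean_variance} plus the zero-mean noise decomposition for the value oracle), and bounds the initial term by $\|\tFgrad(\z_1)\|^2 4^{2/3} \le 4^{2/3}G^2$. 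The only blemish is a small arithmetic slip in your intermediate noise accounting (the two oracle-noise terms contribute $d^2\sigma_0^2/2\delta^2$ per sample, not $d^2\sigma_0^2/\delta^2$), but your final expression for $4\sigma^2$ and hence $Q$ matches the paper.
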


\begin{remark}
As we will see in the proof of Theorem~\ref{T:main_offline}, except for the case with deterministic gradient oracle, the dominating term in the approximation error is a constant multiple of 
\[
\frac{1}{N}\sum_{n = 1}^{N}\B{E}\left[ \| \tFgrad(\z_n) - \bar{\g}_n \|^2 \right].
\]
Therefore, any improvement in Lemma~\ref{L:derivative_estimate_control} will result in direct improvement of the approximation error.
\end{remark}

\begin{proof}
If we have access to a deterministic gradient oracle, then the claim is trivial.
Let $\C{F}_1 := \varnothing$ and $\C{F}_n$ be the $\sigma$-field generated by $\{\bar{\g}_1, \dots, \bar{\g}_{n-1} \}$ and let
\[
\sigma^2 = \begin{cases}
\frac{\sigma_1^2}{B}
    &\text{stoch. grad. oracle with variance } \sigma_1^2 > 0,\\
\frac{C k G^2 + k^2\sigma_0^2/2\delta^2}{B}
    &\text{value oracle with variance } \sigma_0^2 \geq 0.\\
\end{cases}
\]

Let $\C{L}_0$ denote the linear space $\C{L} - \x$ for some $\x \in \C{L}$.
If we have access to a stochastic gradient oracle, then $\g_n$ is computed by taking the average of $B$ gradient samples of $P_{\C{L}_0}(\hat{G}(\z))$, i.e. the projection of $\hat{G}(\z)$ onto the linear space $\C{L}_0$. 
Since $P_{\C{L}_0}$ is a 1-Lipscitz linear map, we see that 
\[
\B{E}[P_{\C{L}_0}(\hat{G}(\z))] = P_{\C{L}_0}(\nabla \tF(\z)) = \tFgrad(\z)
\]
and 
\begin{align*}
\B{E}\left[ \left\| P_{\C{L}_0}(\hat{G}(\z)) - \tFgrad(\z) \right\|^2\right]
&= \B{E}\left[ \left\| P_{\C{L}_0}(\hat{G}(\z)) - P_{\C{L}_0}(\nabla \tF(\z)) \right\|^2\right] \\
&\leq \B{E}\left[ \left\| \hat{G}(\z) - \nabla \tF(\z) \right\|^2\right]
\leq \sigma_1^2.
\end{align*}
Note that, in cases where we have access to a gradient oracle, we have $\delta = 0$ and $\tF = F$.
Therefore
\begin{align*}
\B{E}\left[ \g_n|\C{F}_{n-1} \right] = \tFgrad(\z_n)
\quad\text{ and }\quad
\B{E}\left[ \|\g_n - \tFgrad(\z_n) \|^2|\C{F}_{n-1} \right]
\leq \frac{\sigma_1^2}{B} = \sigma^2.
\end{align*}
Next we assume that we have access to a value oracle.
By the unbiasedness of $\hat{F}$ and \cref{L:tilde_F_mean_variance}, we have
\begin{align*}
\B{E}\left[ 
    \frac{k}{2\delta}(\hat{F}(\y_{n,i}^+) - \hat{F}(\y_{n,i}^-))\uu_{n,i}|\C{F}_{n-1} 
\right]
=& 
\B{E}\left[
    \B{E}\left[
        \frac{k}{2\delta}(\hat{F}(\y_{n,i}^+) - \hat{F}(\y_{n,i}^-))\uu_{n,i}
        | \C{F}_{n-1}, \uu_{n,i}
    \right] | \C{F}_{n-1}
\right]
\\
=& 
\B{E}\left[ 
    \frac{k}{2\delta}(F(\y_{n,i}^+) - F(\y_{n,i}^-))\uu_{n,i} | \C{F}_{n-1} 
\right]
\\
=& \tFgrad(\z_n),
\end{align*}
and
\begin{align*}
&\hspace{-1cm}\B{E}\left[
    \left\|
        \frac{k}{2\delta}(\hat{F}(\y_{n,i}^+)-\hat{F}(\y_{n,i}^-))\uu_{n,i}-\tFgrad(\z_n)
    \right\|^2 | \C{F}_{n-1}
\right] \\
&= \B{E}\left[ \B{E}\left[\left\| 
    \frac{k}{2\delta}(F(\y_{n,i}^+)-F(\y_{n,i}^-))\uu_{n,i} - \tFgrad(\z_n) 
    \right.\right.\right.\\
    &\qquad\qquad + \frac{k}{2\delta}(\hat{F}(\y_{n,i}^+)-F(\y_{n,i}^+))\uu_{n,i} \\
    &\qquad\qquad \left.\left.\left. - \frac{k}{2\delta} (\hat{F}(\y_{n,i}^-)-F(\y_{n,i}^-))\uu_{n,i} 
\right\|^2 | \C{F}_{n-1},\uu_{n,i} \right] | \C{F}_{n-1} \right] \\
&\leq \B{E}\left[ \B{E}\left[\left\| 
    \frac{k}{2\delta}(F(\y_{n,i}^+)-F(\y_{n,i}^-))\uu_{n,i} - \tFgrad(\z_n)
\right\|^2 | \C{F}_{n-1},\uu_{n,i} \right] | \C{F}_{n-1} \right] \\
&\qquad\qquad + \B{E}\left[ \B{E}\left[\left\| 
    \frac{k}{2\delta}(\hat{F}(\y_{n,i}^+)-F(\y_{n,i}^+))\uu_{n,i}
\right\|^2 | \C{F}_{n-1},\uu_{n,i} \right] | \C{F}_{n-1} \right] \\
&\qquad\qquad + \B{E}\left[ \B{E}\left[\left\| 
    \frac{k}{2\delta} (\hat{F}(\y_{n,i}^-)-F(\y_{n,i}^-))\uu_{n,i}
\right\|^2 | \C{F}_{n-1},\uu_{n,i} \right] | \C{F}_{n-1} \right] \\
&\leq \B{E}\left[\left\| 
    \frac{k}{2\delta}(F(\y_{n,i}^+)-F(\y_{n,i}^-))\uu_{n,i}-\tFgrad(\z_n)
\right\|^2|\C{F}_{n-1}\right] \\
&\qquad\qquad + \frac{k^2}{4\delta^2} \B{E}\left[\B{E}\left[
    | \hat{F}(\y_{n,i}^+)-F(\y_{n,i}^+) |^2
    \cdot 
    \|\uu_{n,i}\|^2
| \C{F}_{n-1},\uu_{n,i} \right] | \C{F}_{n-1} \right] \\
&\qquad+ \frac{k^2}{4\delta^2} \B{E}\left[\B{E}\left[
    | \hat{F}(\y_{n,i}^-)-F(\y_{n,i}^-) |^2
    \cdot \|\uu_{n,i}\|^2
| \C{F}_{n-1}, \uu_{n,i} \right] | \C{F}_{n-1} \right] \\
&\leq C k G^2 + \frac{k^2}{4\delta^2}\sigma_0^2+\frac{k^2}{4\delta^2}\sigma_0^2 \\
&= C k G^2+\frac{k^2}{2\delta^2}\sigma_0^2.
\end{align*}
So we have
\begin{align*}
\B{E}\left[ \g_n|\C{F}_{n-1} \right] 
= \B{E}\left[
    \frac{1}{B}\sum_{i=1}^{B}\frac{k}{2\delta}(\hat{F}(\y_{n,i}^+)-\hat{F}(\y_{n,i}^-))\uu_{n,i} | \C{F}_{n-1}
\right]
= \tFgrad(\z_n),
\end{align*}
and
\begin{align*}
&\B{E}\left[ \left\| \g_n - \tFgrad(\z_n) \right\|^2 | \C{F}_{n-1} \right] \\
&\qquad\qquad= \frac{1}{B^2} \sum_{i=1}^{B} 
\B{E}\left[\left\|
    \frac{k}{2\delta}(\hat{F}(\y_{n,i}^+)-\hat{F}(\y_{n,i}^-))\uu_{n,i} - \tFgrad(\z_n) 
\right\|^2 | \C{F}_{n-1} \right] \\
&\qquad\qquad\leq \frac{C k G^2+\frac{k^2}{2\delta^2}\sigma_0^2}{B} = \sigma^2.
\end{align*}

Using Lemma~\ref{lem:variance_reduction} with $\BF{d}_n = \bar{\g}_n, \tilde{\BF{a}}_n = \g_n, \BF{a}_n = \tFgrad(\z_n)$ for all $n \geq 1$, $\BF{a}_0=\tFgrad(\z_1)$, $G_0 = 2LD$ and $s = 3$, we have
\begin{equation}
\B{E}[\| \tFgrad(\z_n) - \bar{\g}_n\|^2]
\leq \frac{Q'}{(n+4)^{2/3}},
\end{equation}
where $Q' = \max \{ \|\tFgrad(\z_1) \|^2 4^{2/3}, 6 L^2 D^2 + 4\sigma^2 \}$.
Note that by \cref{L:smooth_approx}, we have $\| \tFgrad(x) \| \leq G$, 
thus we have $Q' \leq Q$.
\end{proof}

\section{Proof of Theorem~\ref{T:main_offline} for monotone maps over convex sets containing zero}\label{APX:monotone_dc}

\begin{proof}
By the definition of $\z_n$, we have 
$\z_n = \z_1 + \sum_{i=1}^{n-1} \frac{\vv_i}{N}$.
Therefore $\z_n - \z_1$ is a convex combination of $\vv_n$'s and $0$ which belong to $\C{K}_\delta - \z_1$ and therefore $\z_n - \z_1 \in \C{K}_\delta - \z_1$.
Hence we have $\z_n \in \C{K}_\delta \subseteq \C{K}$ for all $1 \leq n \leq N+1$.

Let $\C{L} := \op{aff}(\C{K})$.
According to Lemma~\ref{L:smooth_approx}, the function $\tF$ is $L$-smooth.
So we have
\begin{equation}
\begin{aligned}
\tF(\z_{n+1}) - \tF(\z_n)
&\geq \bra \tFgrad (\z_n), \z_{n+1} - \z_n \ket
    - \frac{L}{2}\|\z_{n+1} - \z_n\|^2 \\
&= \varepsilon \bra \tFgrad (\z_n), \vv_n \ket
    - \frac{\varepsilon^2 L}{2} \|\vv_n\|^2 \\
&\geq \varepsilon \bra \tFgrad (\z_n), \vv_n \ket
    - \frac{\varepsilon^2 L}{2}D^2 \\
&= \varepsilon \left( \bra \bar{\g}_n, \vv_n \ket + \bra \tFgrad (\z_n) - \bar{\g}_n, \vv_n \ket \right)
    - \frac{\varepsilon^2 L D^2}{2}.
\end{aligned}
\label{EQ:m-dc-1}
\end{equation}
Let $\z^*_\delta := \argmax_{\z \in \C{K}_\delta - \z_1} \tF(z)$.
We have $\z^*_\delta \in \C{K}_\delta - \z_1$, which implies that
$\bra \bar{\g}_n, \vv_n \ket \geq \bra \bar{\g}_n, \z^*_\delta \ket$.
Therefore
\begin{align*}
\bra \bar{\g}_n, \vv_n \ket
\geq \bra \bar{\g}_n, \z^*_\delta \ket 
= \bra \tFgrad(\z_n), \z^*_\delta \ket
    + \bra \bar{\g}_n - \tFgrad(\z_n), \z^*_\delta \ket
\end{align*}
Hence we obtain
\begin{align*}
\bra \bar{\g}_n, \vv_n \ket + \bra \tFgrad (\z_n) - \bar{\g}_n, \vv_n \ket
\geq \bra \tFgrad(\z_n), \z^*_\delta \ket
    - \bra \tFgrad(\z_n) - \bar{\g}_n, \z^*_\delta - \vv_n \ket
\end{align*}
Using the Cauchy-Schwartz inequality, we have
\begin{align*}
\bra \tFgrad(\z_n) - \bar{\g}_n), \z^*_\delta - \vv_n \ket
\leq \|\tFgrad(\z_n)-\bar{\g}_n\| \|\z^*_\delta - \vv_n \|
\leq D \| \tFgrad(\z_n)-\bar{\g}_n\|
\end{align*}
Therefore
\begin{align*}
\bra \bar{\g}_n, \vv_n \ket + \bra \tFgrad (\z_n) - \bar{\g}_n, \vv_n \ket
\geq \bra \tFgrad(\z_n), \z^*_\delta \ket
    - D \| \tFgrad(\z_n)-\bar{\g}_n\|.
\end{align*}
Plugging this into~\ref{EQ:m-dc-1}, we see that
\begin{align}\label{EQ:m-dc-2}
\tF(\z_{n+1}) - \tF(\z_n)
\geq \varepsilon \bra \tFgrad(\z_n), \z^*_\delta \ket
    - \varepsilon D \| \tFgrad(\z_n)-\bar{\g}_n\|
    - \frac{\varepsilon^2 L D^2}{2}.
\end{align}

On the other hand, we have $\z^*_\delta \geq (\z^*_\delta - \z_n) \vee 0$.
Since $F$ is monotone continuous DR-submodular, by Lemma~\ref{L:smooth_approx}, so is $\tF$.
Moreover monotonicity of $\tF$ implies that $\tFgrad$ is non-negative in positive directions.
Therefore we have
\begin{align*}
\bra \tFgrad(\z_n), \z^*_\delta \ket 
&\geq \bra \tFgrad(\z_n), (\z^*_\delta - \z_n) \vee 0 \ket 
    &&(\text{monotonicity}) \\
&\geq \tF(\z_n + ( (\z^*_\delta - \z_n) \vee 0 ))-\tF(\z_n)
    &&(\text{DR-submodularity}) \\
&= \tF(\z^*_\delta \vee \z_n)-\tF(\z_n) \\
&\geq \tF(\z^*_\delta)-\tF(\z_n)
\end{align*}
After plugging this into \eqref{EQ:m-dc-2} and re-arranging terms, we obtain
\begin{align*}
h_{n+1} 
&\leq (1 - \varepsilon) h_n 
    + \varepsilon D \| \tFgrad(\z_n)-\bar{\g}_n\|
    + \frac{\varepsilon^2 L D^2}{2}
\end{align*}
where $h_n := \tF(\z^*_\delta) - \tF(\z_n)$.
After taking the expectation and using Lemma~\ref{L:derivative_estimate_control}, we see that
\begin{align*}
\B{E}(h_{n+1})
&\leq (1 - \varepsilon) \B{E}(h_n)
    + \frac{\varepsilon D Q^{1/2}}{(n+4)^{1/3}}
    + \frac{\varepsilon^2 L D^2}{2}.
\end{align*}
Using the above inequality recursively and $1 - \varepsilon \leq 1$, we have
\begin{align*}
\B{E}[h_{N+1}]
\leq (1 - \varepsilon)^N \B{E}[h_1]
    + \sum_{n=1}^N \frac{\varepsilon D Q^{1/2}}{(n+4)^{1/3}} + \frac{N \varepsilon^2L D^2}{2}.
\end{align*}

Note that we have $\varepsilon = 1/N$.
Using the fact that $(1 - \frac{1}{N})^N \leq e^{-1}$ and
\begin{equation}
\begin{aligned}
\sum_{n=1}^N \frac{D Q^{1/2}}{(n+4)^{1/3}}
\leq D Q^{1/2} \int_0^N \frac{\dif x}{(x+4)^{1/3}}
\leq D Q^{1/2} \left( \frac{3}{2}(N+4)^{2/3} \right) \\
\leq D Q^{1/2} \left( \frac{3}{2} (2 N)^{2/3} \right)
\leq 3 D Q^{1/2} N^{2/3},
\end{aligned}
\label{EQ:Q-term}    
\end{equation}
we see that
\begin{align*}
\B{E}[h_{N+1}]
\leq e^{-1} \B{E}[h_1]
    + \frac{3 D Q^{1/2}}{N^{1/3}}+ \frac{L D^2}{2 N}.
\end{align*}
By re-arranging the terms and using the fact that $\tF$ is non-negative, we conclude
\begin{equation}
\begin{aligned}
(1 - e^{-1}) \tF(\z^*_\delta) - \B{E}[\tF(\z_{N+1})] 
&\leq -e^{-1} \tF(\z_1) 
    + \frac{3 D Q^{1/2}}{N^{1/3}} + \frac{L D^2}{2 N} \\
&\leq \frac{3 D Q^{1/2}}{N^{1/3}} + \frac{L D^2}{2 N}.
\end{aligned}
\label{EQ:m-dc-final-delta}
\end{equation}

According to Lemma~\ref{L:smooth_approx}, we have $\tF(\z_{N+1}) \leq F(\z_{N+1})+\delta G$.
Moreover, using Lemma~\ref{L:shrunk_contrainst_set}, we see that $\z^* \in \B{B}_{\delta'}(\C{K}_\delta)$ where $\delta' = \delta D / r$.
Therefore, there is a point $\y^* \in \C{K}_\delta$ such that $\|\y^* - \z^*\| \leq \delta'$.
\begin{align*}
\tF(\z^*_\delta)
&\geq \tF(\y^* - \z_1)
\geq \tF(\y^*) - G \|\z_1\| \\
&\geq F(\y^*) - (\|\z_1\| + \delta) G
\geq F(\z^*) - (\|\z_1\| + \delta + \frac{\delta D}{r}) G.
\end{align*}
According to Lemma~\ref{L:norm_z_1}, we have $\|\z_1\| \leq \sqrt{d} \|\z_1\|_\infty \leq \delta \sqrt{d}/r$.
\begin{align}\label{EQ:corrected_69_A}
\tF(\z^*_\delta)
\geq F(\z^*) - (1 + \frac{\sqrt{d} + D}{r}) \delta G.
\end{align}
After plugging these into~\ref{EQ:m-dc-final-delta}, we see that
\begin{align*}
(1-e^{-1})F(\z^*) &- \B{E}[F(\z_{N+1})] \\
&\leq \frac{3 D Q^{1/2}}{N^{1/3}} + \frac{L D^2}{2 N} 
    + \delta G(2 + \frac{\sqrt{d} + D}{r}).
\qedhere
\end{align*}
\end{proof}

\section{Proof of Theorem~\ref{T:main_offline} for non-monotone maps over downward-closed convex sets}\label{APX:non-monotone_dc}

\begin{proof}
Similar to~\cref{APX:monotone_dc}, we see that $\z_n \in \C{K}_\delta$ for all $1 \leq n \leq N+1$ and
\begin{align}\label{EQ:nm-dc-1}
\tF(\z_{n+1}) - \tF(\z_n)
\geq \varepsilon \left( \bra \bar{\g}_n, \vv_n \ket + \bra \tFgrad (\z_n) - \bar{\g}_n, \vv_n \ket \right)
    - \frac{\varepsilon^2 L D^2}{2}.
\end{align}
Let $\z^*_\delta := \argmax_{\z \in \C{K}_\delta - \z_1} \tF(z)$.
We have $\z^*_\delta \vee \z_n - \z_n = (\z^*_\delta - \z_n) \vee 0 \leq \z^*_\delta$.
Therefore, since $\C{K}_\delta$ is downward-closed, we have $\z^*_\delta \vee \z_n - \z_n \in \C{K}_\delta - \z_1$.
On the other hand, $\z^*_\delta \vee \z_n - \z_n \leq \BF{1} - \z_n$.
Therefore, we have
$\bra \bar{\g}_n, \vv_n \ket \geq \bra \bar{\g}_n, \z^*_\delta \vee \z_n - \z_n \ket$,
which implies that
\begin{align*}
\bra \bar{\g}_n, \z^*_\delta &\vee \z_n - \z_n \ket 
    + \bra \tFgrad (\z_n) - \bar{\g}_n, \vv_n \ket \\
&= \bra \tFgrad(\z_n), \z^*_\delta \vee \z_n - \z_n \ket 
    + \bra \bar{\g}_n - \tFgrad(\z_n), \z^*_\delta \vee \z_n - \z_n \ket \\
&\qquad+ \bra \tFgrad(\z_n) - \bar{\g}_n, \vv_n \ket \\
&= \bra \tFgrad(\z_n), \z^*_\delta \vee \z_n - \z_n \ket 
    - \bra \tFgrad(\z_n) - \bar{\g}_n, - \vv_n + \z^*_\delta \vee \z_n - \z_n \ket
\end{align*}
Using the Cauchy-Shwarz inequality, we see that
\begin{align*}
\bra \tFgrad(\z_n) - \bar{\g}_n, - \vv_n + \z^*_\delta \vee \z_n - \z_n \ket
&\leq \|\tFgrad(\z_n)-\bar{\g}_n\| \|(\z^*_\delta \vee \z_n - \z_n) - \vv_n\| \\
&\leq D \| \tFgrad(\z_n)-\bar{\g}_n\|.
\end{align*}
where the last inequality follows from the fact that both $\vv_n$ and $\z^*_\delta \vee \z_n - \z_n$ belong to $\C{K}_\delta$.
Therefore
\begin{align*}
\bra \bar{\g}_n, \z^*_\delta \vee \z_n - \z_n \ket 
    &+ \bra \tFgrad (\z_n) - \bar{\g}_n, \vv_n \ket \\
&\geq \bra \tFgrad(\z_n), \z^*_\delta \vee \z_n - \z_n \ket 
    - D \| \tFgrad(\z_n)-\bar{\g}_n\|.
\end{align*}
Plugging this into Equation~\eqref{EQ:nm-dc-1}, we get
\begin{equation}
\begin{aligned}
\tF(\z_{n+1}) &- \tF(\z_n) \\
&\geq \varepsilon \bra \tFgrad(\z_n), \z^*_\delta \vee \z_n - \z_n \ket 
    - \varepsilon D \| \tFgrad(\z_n)-\bar{\g}_n\|
    - \frac{\varepsilon^2 L D^2}{2}.
\end{aligned}
\label{EQ:nm-dc-2}
\end{equation}

Next we show that 
\begin{align}\label{EQ:1-z_n-dc}
1 - \|\z_n\|_\infty \geq (1-\varepsilon)^{n-1}(1 - \frac{\delta}{r}),
\end{align}
for all $1 \leq n \leq N+1$.
We use induction on $n$ to show that for each coordinate $1 \leq i \leq d$, we have $1 - [\z_n]_i \geq (1-\varepsilon)^{n-1}$.
For $n = 1$, the claim follows from Lemma~\ref{L:norm_z_1}.
Assuming that the inequality is true for $n$, using the fact that $\vv_n \leq \BF{1} - \z_n$, we have
\begin{align*}
1 - [\z_{n+1}]_i 
= 1 - [\z_n]_i - \varepsilon [\vv_n]_i
&\geq 1 - [\z_n]_i - \varepsilon (1 - [\z_n]_i) \\
&= (1 - \varepsilon)(1 - [\z_n]_i) 
\geq (1-\varepsilon)^n(1 - \frac{\delta}{r}),
\end{align*}
which completes the proof by induction.

Since $\tF$ is DR-submodular, it is concave along non-negative directions.
Therefore, using Lemma~\ref{L:F_join_non-monotone} and Equation~\eqref{EQ:1-z_n-dc}, we have
\begin{align*}
\bra \tFgrad(\z_n), \z^*_\delta \vee \z_n - \z_n \ket
&\geq \tF(\z^*_\delta \vee \z_n) - \tF(\z_n) \\
&\geq (1 - \|\z_n\|_\infty) \tF(\z^*_\delta) - \tF(\z_n) \\
&\geq (1 - \varepsilon)^{n-1}(1 - \frac{\delta}{r}) \tF(\z^*_\delta) - \tF(\z_n).
\end{align*}
Plugging this into Equation~\eqref{EQ:nm-dc-2}, we get
\begin{align*}
\tF(\z_{n+1}) &- \tF(\z_n) \\
&\geq \varepsilon \left(
        (1 - \varepsilon)^{n-1} (1 - \frac{\delta}{r}) \tF(\z^*_\delta) - \tF(\z_n)
    \right)
    - \varepsilon D \| \tFgrad(\z_n)-\bar{\g}_n\|
    - \frac{\varepsilon^2 L D^2}{2}.
\end{align*}
Taking expectations of both sides and using Lemma~\ref{L:derivative_estimate_control}, we see that
\begin{align*}
\B{E}(\tF(\z_{n+1}))
&\geq (1 - \varepsilon)\B{E}(\tF(\z_n))
    + \varepsilon (1 - \varepsilon)^{n-1} (1 - \frac{\delta}{r}) \tF(\z^*_\delta)
    - \frac{\varepsilon D Q^{1/2}}{(n+4)^{1/3}}
    - \frac{\varepsilon^2 L D^2}{2}.
\end{align*}
Using this inequality recursively and Equation~\eqref{EQ:Q-term}, we get
\begin{align*}
\B{E}(\tF(\z_{N+1}))
&\geq (1 - \varepsilon)^{N}\B{E}(\tF(\z_1))
    + N\varepsilon (1 - \varepsilon)^{N-1} (1 - \frac{\delta}{r}) \tF(\z^*_\delta) \\
&\qquad- \sum_{n = 1}^N \frac{\varepsilon D Q^{1/2}}{(n+4)^{1/3}}
    - \frac{N \varepsilon^2 L D^2}{2} \\
&\geq (1 - \varepsilon)^{N}\B{E}(\tF(\z_1))
    + N \varepsilon (1 - \varepsilon)^{N-1} (1 - \frac{\delta}{r}) \tF(\z^*_\delta) \\
&\qquad- 3 \varepsilon D Q^{1/2} N^{2/3}
    - \frac{N \varepsilon^2 L D^2}{2}.
\end{align*}
Since $\delta < \frac{r}{2}$ and $\varepsilon = 1/N$, we have
\begin{align*}
(1 - \varepsilon)^N
= (1 - \frac{1}{N})(1 - \varepsilon)^{N-1}
\geq \frac{1}{2}(1 - \varepsilon)^{N-1}
\geq \frac{\delta}{r}(1 - \varepsilon)^{N-1}
\geq \frac{\delta}{r}(1 - \varepsilon)^{N-1}.
\end{align*}
Since $\tF$ is non-negative and $G$-Lipschitz, this implies that
\begin{align*}
\B{E}(\tF(\z_{N+1}))
&\geq \frac{\delta}{r}(1 - \varepsilon)^{N-1}\B{E}(\tF(\z_1))
    + (1 - \varepsilon)^{N-1} (1 - \frac{\delta}{r}) \tF(\z^*_\delta) \\
&\qquad- 3 \varepsilon D Q^{1/2} N^{2/3}
    - \frac{N \varepsilon^2 L D^2}{2} \\
&= (1 - \varepsilon)^{N-1} \tF(\z^*_\delta)
    + \frac{\delta}{r}(1 - \varepsilon)^{N-1} (\B{E}(\tF(\z_1)) - \tF(\z^*_\delta)) \\
&\qquad- 3 \varepsilon D Q^{1/2} N^{2/3}
    - \frac{N \varepsilon^2 L D^2}{2} \\
&\geq (1 - \varepsilon)^{N-1} \tF(\z^*_\delta)
    - \frac{\delta}{r}(1 - \varepsilon)^{N-1} GD
    - 3 \varepsilon D Q^{1/2} N^{2/3}
    - \frac{N \varepsilon^2 L D^2}{2}.
\end{align*}
After setting $\varepsilon = 1/N$ and using $(1 - 1/N)^{N-1} \geq e^{-1}$, we see that
\begin{align*}
e^{-1} \tF(\z^*_\delta) - \B{E}(\tF(\z_{N+1}))
&\leq \frac{3 D Q^{1/2}}{N^{1/3}} 
    + \frac{L D^2}{2 N} 
    + \frac{\delta}{r}(1 - \varepsilon)^{N-1} GD \\
&\leq \frac{3 D Q^{1/2}}{N^{1/3}} 
    + \frac{L D^2}{2 N} 
    + \frac{\delta}{r} GD.
\end{align*}
Using the argument presented in~\cref{APX:monotone_dc}, i.e. Lemma~\ref{L:smooth_approx} and Equation~\ref{EQ:corrected_69_A}, we conclude that \begin{equation*}
e^{-1}F(\z^*) - \B{E}[F(\z_{N+1})] 
\leq \frac{3 D Q^{1/2}}{N^{1/3}} + \frac{L D^2}{2 N} 
    + \delta G(2 + \frac{\sqrt{d} + 2 D}{r}).
\qedhere
\end{equation*}
\end{proof}

\section{Proof of Theorem~\ref{T:main_offline} for monotone maps over general convex sets}\label{APX:monotone_g}

\begin{proof}
Using the fact that $\tF$ is $L$-smooth, we have
\begin{equation}
\begin{aligned}
\tF(\z_{n+1}) - \tF(\z_n)
&\geq \bra \tFgrad(\z_n), \z_{n+1} - \z_n \ket
    - \frac{L}{2} \| \z_{n+1} - \z_n \|^2 \\
&= \varepsilon \bra \tFgrad(\z_n), \vv_n - \z_n \ket
    - \frac{\varepsilon^2 L}{2} \| \vv_n - \z_n \|^2 \\
&\geq \varepsilon \bra \tFgrad(\z_n), \vv_n - \z_n \ket
    - \frac{\varepsilon^2 L D^2}{2} \\
&= \varepsilon \left( \bra \bar{\g}_n, \vv_n - \z_n \ket
    + \bra \tFgrad(\z_n) + \bar{\g}_n, \vv_n - \z_n \ket \right)
    - \frac{\varepsilon^2 L D^2}{2}.
\end{aligned}
\label{EQ:m-g-1}
\end{equation}
Let $\z^*_\delta := \argmax_{\z \in \C{K}_\delta} \tF(z)$.
Using the fact that $\bra \bar{\g}_n, \vv_n \ket \geq \bra \bar{\g}_n, \z^*_\delta \ket$ , we have
\begin{align*}
\bra \bar{\g}_n, \vv_n - \z_n \ket
    &+ \bra \tFgrad(\z_n) - \bar{\g}_n, \vv_n - \z_n \ket \\
&\geq \bra \bar{\g}_n, \z^*_\delta - \z_n \ket
    + \bra \tFgrad(\z_n) - \bar{\g}_n, \vv_n - \z_n \ket \\
&= \bra \tFgrad(\z_n), \z^*_\delta - \z_n \ket 
    - \bra \tFgrad(\z_n) - \bar{\g}_n, \z^*_\delta - \vv_n \ket.
\end{align*}
Using the Cauchy-Schwarz inequality, we see that
\begin{align*}
\bra \tFgrad(\z_n) - \bar{\g}_n, \z^*_\delta - \vv_n \ket
\leq \|\tFgrad(\z_n)-\bar{\g}_n\| \|\z^*_\delta - \vv_n\|
\leq D \| \tFgrad(\z_n)-\bar{\g}_n\|.
\end{align*}
Therefore
\begin{align*}
\bra \bar{\g}_n, \vv_n - \z_n \ket
    &+ \bra \tFgrad(\z_n) - \bar{\g}_n, \vv_n - \z_n \ket \\
&\geq \bra \tFgrad(\z_n), \z^*_\delta - \z_n \ket 
    - D \| \tFgrad(\z_n)-\bar{\g}_n\|.
\end{align*}
Plugging this into~\ref{EQ:m-g-1}, we get
\begin{equation}
\begin{aligned}
\tF(\z_{n+1}) - \tF(\z_n)
\geq \varepsilon \bra \tFgrad(\z_n), \z^*_\delta - \z_n \ket 
    - \varepsilon D \| \tFgrad(\z_n)-\bar{\g}_n\|
    - \frac{\varepsilon^2 L D^2}{2}.
\end{aligned}
\label{EQ:m-g-2}
\end{equation}

Using Lemma~\ref{L:F_join_meet_non-monotone} and the fact that $\tF$ is monotone, we see that
\begin{align*}
\bra \tFgrad(\z_n), \z^*_\delta - \z_n \ket
&\geq \tF(\z^*_\delta \vee \z_n) + \tF(\z^*_\delta \wedge \z_n) - 2\tF(\z_n) \\
&\geq \tF(\z^*_\delta) + \tF(\z^*_\delta \wedge \z_n) - 2\tF(\z_n) \\
&\geq \tF(\z^*_\delta) - 2\tF(\z_n).
\end{align*}
After plugging this into~\eqref{EQ:m-g-2}, we get
\begin{align*}
\tF(\z_{n+1}) - \tF(\z_n)
\geq \varepsilon \tF(\z^*_\delta) 
    - 2 \varepsilon \tF(\z_n)
    - \varepsilon D \| \tFgrad(\z_n)-\bar{\g}_n\|
    - \frac{\varepsilon^2 L D^2}{2}.
\end{align*}
After taking the expectation, using Lemma~\ref{L:derivative_estimate_control} and re-arranging the terms, we see that
\begin{align*}
\B{E}[\tF(\z_{n+1})]
\geq (1 - 2 \varepsilon) \B{E}[\tF(\z_n)]
    + \varepsilon \tF(\z^*_\delta) 
    - \frac{\varepsilon D Q^{1/2}}{(n+4)^{1/3}}
    - \frac{\varepsilon^2 L D^2}{2}.
\end{align*}
Using this inequality recursively together with Equation~\eqref{EQ:Q-term} and the fact that $\tF$ is non-negative, we get
\begin{align*}
\B{E}[\tF(\z_{N+1})]
&\geq (1 - 2 \varepsilon)^N \B{E}[\tF(\z_1)]
    + \varepsilon \tF(\z^*_\delta) \sum_{n = 1}^N (1 - 2 \varepsilon)^{N - n} \\
&\qquad - \sum_{n = 1}^N \frac{\varepsilon D Q^{1/2}}{(n+4)^{1/3}}
    - \frac{N \varepsilon^2 L D^2}{2}. \\
&\geq \frac{1}{2}(1 - 2 \varepsilon)^N \B{E}[\tF(\z_1)]
    + \varepsilon \tF(\z^*_\delta) \sum_{n = 1}^N (1 - 2 \varepsilon)^{N - n} \\
&\qquad - 3 \varepsilon D Q^{1/2} N^{2/3}
    - \frac{N \varepsilon^2 L D^2}{2} \\
&= \frac{1}{2}(1 - 2 \varepsilon)^N \B{E}[\tF(\z_1)]
    + \frac{1}{2}(1 - (1 - 2 \varepsilon)^N) \B{E}[\tF(\z^*_\delta)] \\
&\qquad - 3 \varepsilon D Q^{1/2} N^{2/3}
    - \frac{N \varepsilon^2 L D^2}{2} \\
&= \frac{1}{2} \tF(\z^*_\delta)
    - \frac{1}{2} (1 - 2 \varepsilon)^{N} (\tF(\z^*_\delta) - \B{E}[\tF(\z_1)]) \\
&\qquad - 3 \varepsilon D Q^{1/2} N^{2/3}
    - \frac{N \varepsilon^2 L D^2}{2} \\
&\geq \frac{1}{2} \tF(\z^*_\delta)
    - \frac{1}{2} (1 - 2 \varepsilon)^{N} D G 
    - 3 \varepsilon D Q^{1/2} N^{2/3}
    - \frac{N \varepsilon^2 L D^2}{2}.
\end{align*}
Note that
$(1 - \log(N)/N)^{N} \leq e^{-\log(N)} = 1/N$.
Therefore, since $\varepsilon = \log(N)/2N$, we have
\begin{equation}
\begin{aligned}
\B{E}[\tF(\z_{N+1})]
&\geq \frac{1}{2} \tF(\z^*_\delta)
    - \frac{DG}{2N}
    - \frac{3 D Q^{1/2} \log(N)}{2 N^{1/3}}
    - \frac{L D^2 \log(N)^2}{8 N}.
\end{aligned}
\label{EQ:m-g-final-delta}
\end{equation}

According to Lemma~\ref{L:smooth_approx}, we have $\tF(\z_{N+1}) \leq F(\z_{N+1})+\delta G$.
Moreover, using Lemma~\ref{L:shrunk_contrainst_set}, we see that $\z^* \in \B{B}_{\delta'}(\C{K}_\delta)$ where $\delta' = \delta D / r$.
Therefore, there is a point $\y^* \in \C{K}_\delta$ such that $\|\y^* - \z^*\| \leq \delta'$.
\begin{align}\label{EQ:corrected_69_C}
\tF(\z^*_\delta)
\geq \tF(\y^*)
\geq \tF(\y^*)
\geq F(\y^*) - \delta G
\geq F(\z^*) - (\delta + \frac{\delta D}{r}) G.
\end{align}
After plugging these into~\eqref{EQ:m-g-final-delta}, we see that 
\begin{align*}
\frac{1}{2} \tF(\z^*) &- \B{E}[\tF(\z_{N+1})] \\
&\leq \frac{3 D Q^{1/2} \log(N)}{2 N^{1/3}}
    + \frac{4DG + L D^2 \log(N)^2}{8 N}
    + \delta G(2 + \frac{D}{r}).
\end{align*}
which completes the proof.
\end{proof}

\section{Proof of Theorem~\ref{T:main_offline} for non-monotone maps over general convex sets}\label{APX:non-monotone_g}

\begin{proof}
First we show that 
\begin{align}\label{EQ:1-z_n-g}
1 - \|\z_n\|_\infty \geq (1-\varepsilon)^{n-1} (1 - \|\z_1\|_\infty),
\end{align}
for all $1 \leq n \leq N+1$.
We use induction on $n$ to show that for each coordinate $1 \leq i \leq d$, we have $1 - [\z_n]_i \geq (1-\varepsilon)^{n-1} (1 - [\z_1]_i)$.
The claim is obvious for $n = 1$.
Assuming that the inequality is true for $n$, we have
\begin{align*}
1 - [\z_{n+1}]_i 
= 1 - (1 - \varepsilon)[\z_n]_i - \varepsilon [\vv_n]_i
&\geq 1 - (1 - \varepsilon)[\z_n]_i - \varepsilon \\
&= (1 - \varepsilon)(1 - [\z_n]_i) 
\geq (1-\varepsilon)^n (1 - [\z_1]_i),
\end{align*}
which completes the proof by induction.

Let $\z^*_\delta := \argmax_{\z \in \C{K}_\delta} \tF(z)$.
Using the same arguments as in \cref{APX:monotone_g},  we see that
\begin{align*}
\tF(\z_{n+1}) - \tF(\z_n)
\geq \varepsilon \bra \tFgrad(\z_n), \z^*_\delta - \z_n \ket 
    - \varepsilon D \| \tFgrad(\z_n)-\bar{\g}_n\|
    - \frac{\varepsilon^2 L D^2}{2}.
\end{align*}
Using Lemmas~\ref{L:F_join_meet_non-monotone} and~\ref{L:F_join_non-monotone} and Equation~\eqref{EQ:1-z_n-g}, we have
\begin{align*}
\bra \tFgrad(\z_n), \z^*_\delta - \z_n \ket
&\geq \tF(\z^*_\delta \vee \z_n) + \tF(\z^*_\delta \wedge \z_n) - 2\tF(\z_n) \\
&\geq (1 - \|\z_n\|_\infty)\tF(\z^*_\delta) + \tF(\z^*_\delta \wedge \z_n) - 2\tF(\z_n) \\
&\geq (1 - \varepsilon)^{n-1}(1 - \|\z_1\|_\infty)\tF(\z^*_\delta) + \tF(\z^*_\delta \wedge \z_n) - 2\tF(\z_n) \\
&\geq (1 - \varepsilon)^{n-1}(1 - \|\z_1\|_\infty)\tF(\z^*_\delta) - 2\tF(\z_n).
\end{align*}
Therefore
\begin{align*}
\tF(\z_{n+1}) - \tF(\z_n)
\geq \varepsilon (1 - \varepsilon)^{n-1}(1 - \|\z_1\|_\infty)\tF(\z^*_\delta) 
    - 2 \varepsilon \tF(\z_n) \\
    - \varepsilon D \| \tFgrad(\z_n)-\bar{\g}_n\|
    - \frac{\varepsilon^2 L D^2}{2}.
\end{align*}
After taking the expectation, using Lemma~\ref{L:derivative_estimate_control} and re-arranging the terms, we see that
\begin{align}
\B{E}[\tF(\z_{n+1})]
&\geq (1 - 2 \varepsilon) \B{E}[\tF(\z_n)]
    + \varepsilon (1 - \varepsilon)^{n-1}(1 - \|\z_1\|_\infty)\tF(\z^*_\delta) \nonumber\\
    &\qquad - \frac{\varepsilon D Q^{1/2}}{(n+4)^{1/3}}
    - \frac{\varepsilon^2 L D^2}{2}.
\end{align}
Using this inequality recursively together with Equation~\eqref{EQ:Q-term}, we see that
\begin{equation}
\begin{aligned}
\B{E}[\tF(\z_{N+1})]
&\geq \varepsilon (1 - \|\z_1\|_\infty)\tF(\z^*_\delta) \sum_{n = 1}^N (1 - \varepsilon)^{n-1} (1 - 2 \varepsilon)^{N - n} \\
&\qquad + (1 - 2 \varepsilon)^N \B{E}[\tF(\z_1)]
    - \sum_{n = 1}^N \frac{\varepsilon D Q^{1/2}}{(n+4)^{1/3}}
    - \frac{N \varepsilon^2 L D^2}{2}.
\end{aligned}
\label{EQ:nm-g-3}
\end{equation}
Elementary calculations show that $(1 - \frac{c}{N})^{N - 1} \geq e^{-c}$ for $0 \leq c \leq 2$ and $N \geq 4$.
\footnote{For $0 \leq x \leq \frac{1}{2}$, we have $\log(1-x) \geq -x - \frac{x^2}{2} - x^3$. Therefore, for $0 \leq c \leq 2$ and $N \geq 4$, we have $\log(1 - \frac{c}{N}) \geq - \frac{c}{N} - \frac{c^2}{2N^2} - \frac{c^3}{N^3} \geq -\frac{c}{N-1}$.}
Therefore, since $\varepsilon = \log(2)/N$, we have
\begin{align}\label{EQ:nm-g-4}
(1 - 2 \varepsilon)^N 
\geq  e^{- 2\log(2)} (1 - 2 \varepsilon) 
= \frac{1}{4} \left( 1 - \frac{2 \log(2)}{N} \right) 
\geq \frac{1}{4 N}.
\end{align}
On the other hand
\begin{align*}
\varepsilon \sum_{n = 1}^N (1 - 2\varepsilon)^{N - n}(1 - \varepsilon)^{n-1}
&= \varepsilon (1 - 2\varepsilon)^{N - 1} \sum_{n = 1}^N 
    \left( \frac{1 - \varepsilon}{1 - 2 \varepsilon} \right)^{n-1} \\
&\geq \varepsilon  (1 - 2\varepsilon)^{N - 1} \sum_{n = 1}^N (1 + \varepsilon)^{n-1} \\
&= (1 - 2\varepsilon)^{N - 1} ((1 + \varepsilon)^N - 1).
\end{align*}
We have $(1 + \frac{c}{N})^N \geq e^c(1 - \frac{c^2}{2N})$ for $c \geq 0$ and $N \geq 1$.
\footnote{For $x \geq 0$, we have $\log(1 + x) \geq x - \frac{x^2}{2}$ and $-x \geq \log(1 - x)$. 
Therefore 
$N \log(1 + \frac{c}{N}) \geq N (\frac{c}{N} - \frac{c^2}{2N^2}) = c - \frac{c^2}{2N} \geq c + \log(1 - \frac{c^2}{2N})$.}
Therefore
\begin{align*}
\varepsilon \sum_{n = 1}^N (1 - 2\varepsilon)^{N - n}(1 - \varepsilon)^{n-1}
&= (1 - 2\varepsilon)^{N - 1} \left( \left( 1 + \varepsilon \right)^N - 1 \right) \\
&\geq e^{-2 \log(2)} \left( \left( 1 + \frac{\log(2)}{N} \right)^N - 1 \right) \\
&\geq e^{-2 \log(2)} \left( e^{\log{2}} \left( 1 - \frac{\log(2)^2}{2N} \right) - 1 \right) \\
&= \frac{1}{4} \left( 1 - \frac{\log(2)^2}{N} \right) \\
&\geq \frac{1}{4} - \frac{1}{4N}.
\end{align*}
Plugging this and~\ref{EQ:nm-g-4} into~\ref{EQ:nm-g-3} and using the fact that $\tF(z_1)$ is non-negative, we get
\begin{align*}
\B{E}[\tF(\z_{N+1})]
&\geq \left( \frac{1}{4} - \frac{1}{4N} \right) (1 - \|\z_1\|_\infty)\tF(\z^*_\delta) 
    + \frac{1}{4N} \B{E}[\tF(\z_{1})]
    - \frac{3 D Q^{1/2}}{N^{1/3}}
    - \frac{L D^2}{2 N} \\
&\geq \frac{1}{4} (1 - \|\z_1\|_\infty)\tF(\z^*_\delta) 
    + \frac{1}{4N} \left( \B{E}[\tF(\z_{1})] - \tF(\z^*_\delta)  \right)
    - \frac{3 D Q^{1/2}}{N^{1/3}}
    - \frac{L D^2}{2 N} \\
&\geq \frac{1}{4} (1 - \|\z_1\|_\infty)\tF(\z^*_\delta) 
    - \frac{3 D Q^{1/2}}{N^{1/3}}
    - \frac{D G + 2 L D^2}{4 N}.
\end{align*}
Using the same argument as in~\cref{APX:monotone_g}, we obtain
\begin{equation*}
\frac{1}{4} (1 - \|\z_1\|_\infty)F(\z^*) - \B{E}[F(\z_{N+1})] 
\leq \frac{3 D Q^{1/2}}{N^{1/3}} + \frac{D G + 2 L D^2}{4 N} 
    + \delta G(2 + \frac{D}{r}).
\qedhere
\end{equation*}
\end{proof}

\section{Proof of Theorem~\ref{T:offline-complexity}}\label{APX:offline-complexity}

\begin{proof}
Let $T = O(BN)$ denote the number of evaluations\footnote{We have $T = BN$ when we have access to a gradient oracle and $T = 2BN$ otherwise.} and let $\C{E}_\alpha := \alpha F(\z^*) - \B{E}[F(\z_{N+1})]$ denote the $\alpha$-approximation error.
We prove Cases 1-4 separately.
Note that $F$ being non-monotone or $\BF{0} \in \C{K}$ correspond to cases~\ref{T:offline-monotone_dc},~\ref{T:offline-non-monotone_dc} and~\ref{T:offline-non-monotone_g} of Theorem~\ref{T:main_offline} where $\log(N)$ does not appear in the approximation error bound, which is why $\tilde{O}$ can be replaced with $O$.

\noindent {\bf Case 1 (deterministic gradient oracle): } In this case, we have $Q = \delta = 0$.
According to Theorem~\ref{T:main_offline}, in cases~\ref{T:offline-monotone_dc},~\ref{T:offline-non-monotone_dc} and~\ref{T:offline-non-monotone_g}, the approximation error is bounded by $\frac{D G + 2 L D^2}{4 N} = O(N^{-1})$, and thus we choose $T = N = \Theta(1/\epsilon)$ to get $\C{E}_\alpha = O(\epsilon)$.
Similarly, in case~\ref{T:offline-monotone_g}, we have
 \[
 \C{E}_\alpha
\leq 
\frac{4DG + L D^2 \log(N)^2}{8 N}
= O(N^{-1}\log(N)^2).
\]
We choose $T = N = \Theta(\log^2(\epsilon)/\epsilon)$ to bound $\alpha$-approximation error by $O(\epsilon)$.

\noindent {\bf Case 2 (stochastic gradient oracle): } In this case,  we have $Q = \Theta (1)$ and $\delta = 0$.
According to Theorem~\ref{T:main_offline}, in cases~\ref{T:offline-monotone_dc},~\ref{T:offline-non-monotone_dc} and~\ref{T:offline-non-monotone_g}, the approximation error is bounded by 
\[
\frac{3 D Q^{1/2}}{N^{1/3}} + \frac{D G + 2 L D^2}{4 N}
= O(N^{-1/3} + N^{-1})= O(N^{-1/3}),
\]
so we choose $N = \Theta(1/\epsilon^3)$, $B = 1$ and $T = \Theta(1/\epsilon^3)$ to get $\C{E}_\alpha = O(\epsilon)$.
Similarly, in case~\ref{T:offline-monotone_g}, we have
\begin{align*}
\C{E}_\alpha
\leq \frac{3 D Q^{1/2} \log(N)}{2 N^{1/3}}
+ \frac{4DG + L D^2 \log(N)^2}{8 N} = O(N^{-1/3}\log(N) + N^{-1}\log(N)^2)
\end{align*}
Since $\C{E}_\alpha
\leq  O(N^{-1/3}\log(N)^2)$, we choose $N = \Theta(\log^6(\epsilon)/\epsilon^3)$, $B = 1$ and  $T = \Theta(\log^6(\epsilon)/\epsilon^3)$ to bound $\alpha$-approximation error by $O(\epsilon)$.

\noindent {\bf Case 3 (deterministic value oracle): } In this case,  we have $Q = \Theta(1)$ and $\delta \neq 0$.
According to Theorem~\ref{T:main_offline}, in cases~\ref{T:offline-monotone_dc},~\ref{T:offline-non-monotone_dc} and~\ref{T:offline-non-monotone_g}, the approximation error is bounded by 
\[
\frac{3 D Q^{1/2}}{N^{1/3}} + \frac{D G + 2 L D^2}{4 N} + O(\delta)
= O(N^{-1/3} + \delta),
\]
so we choose $\delta = \Theta(\epsilon)$, $N = \Theta(1/\epsilon^3)$, $B = 1$ and $T = \Theta(1/\epsilon^3)$ to get $\C{E}_\alpha = O(\epsilon)$.
Similarly, in case~\ref{T:offline-monotone_g}, we have
\begin{align*}
\C{E}_\alpha
\leq \frac{3 D Q^{1/2} \log(N)}{2 N^{1/3}}
+ \frac{4DG + L D^2 \log(N)^2}{8 N} + O(\delta) = O(N^{-1/3}\log(N)^2 + \delta).
\end{align*}
We choose $\delta = \Theta(\epsilon)$, $N = \Theta(\log^6(\epsilon)/\epsilon^3)$, $B = 1$ and  $T = \Theta(\log^6(\epsilon)/\epsilon^3)$ to bound $\alpha$-approximation error by $O(\epsilon)$.

\noindent {\bf Case 4 (stochastic value oracle): } In this case,  we have $Q = O(1) + O(\frac{1}{\delta^2 B})$ and $\delta \neq 0$. According to Theorem~\ref{T:main_offline}, in cases~\ref{T:offline-monotone_dc},~\ref{T:offline-non-monotone_dc} and~\ref{T:offline-non-monotone_g}, the approximation error is bounded by 
\begin{align*}
\frac{3 D Q^{1/2}}{N^{1/3}} + \frac{D G + 2 L D^2}{4 N} + O(\delta)
&= O(Q^{1/2}N^{-1/3} + N^{-1} + \delta) \\
&= O(N^{-1/3} + \delta^{-1} B^{-1/2} N^{-1/3} + \delta),
\end{align*}
so we choose $\delta = \Theta(\epsilon)$, $N = \Theta(1/\epsilon^3)$, $B = \Theta(1/\epsilon^2)$ and $T = \Theta(1/\epsilon^5)$ to get $\C{E}_\alpha = O(\epsilon)$.
Similarly, in case~\ref{T:offline-monotone_g}, we have
\begin{align*}
\C{E}_\alpha
&\leq \frac{3 D Q^{1/2} \log(N)}{2 N^{1/3}} + \frac{4DG + L D^2 \log(N)^2}{8 N} + O(\delta) \\
&= O(Q^{1/2}N^{-1/3}\log(N) + N^{-1}\log(N)^2 + \delta) \\
&= O(N^{-1/3}\log(N)^2 + \delta^{-1} B^{-1/2} N^{-1/3}\log(N)^2 + \delta).
\end{align*}
We choose $\delta = \Theta(\epsilon)$, $N = \Theta(\log^6(\epsilon)/\epsilon^3)$, $B = \Theta(1/\epsilon^2)$, and $T = \Theta(\log^6(\epsilon)/\epsilon^5)$ to bound $\alpha$-approximation error by $O(\epsilon)$.
\end{proof}

\section{Proof of Theorem~\ref{T:offline-to-online}} \label{APX:offline-to-online}

\begin{proof}
Since the parameters of Algorithm~\ref{ALG:main_offline} are chosen according to Theorem~\ref{T:offline-complexity}, we see that the $\alpha$-approximation error is bounded by $\tilde{O}(T_0^{-\beta})$ where $\beta = 1/3$ in case~2 (stochastic gradient oracle) and $\beta = 1/5$ in case~4 (stochastic value oracle).

Recall that $F$ is $G$-Lipschitz and the feasible region $\C{K}$ has diameter $D$.
Thus, during the first $T_0$ time-steps, the $\alpha$-regret can be bounded by
\[
\sup_{z, z' \in \C{K}} \alpha F(\z) - F(\z') 
\leq 
\sup_{z, z' \in \C{K}} F(\z) - F(\z') 
\leq DG.
\]
Therefore the total $\alpha$-regret is bounded by
\begin{align*}
T_0 DG + (T - T_0) \tilde{O}(T_0^{-\beta})
\leq T_0 DG + T \tilde{O}(T_0^{-\beta}).
\end{align*}
Since we have $T_0 = \Theta(T^{\frac{1}{\beta + 1}})$, we see that
\[
T_0 DG + T \tilde{O}(T_0^{-\beta})
= \tilde{O}(T^{\frac{1}{\beta + 1}})
= \begin{cases}
\tilde{O}(T^{\frac{3}{4}})     &\text{Case 2,} \\
\tilde{O}(T^{\frac{5}{6}})     &\text{Case 4.}
\end{cases}
\]
If $F$ is non-monotone or $\BF{0} \in \C{K}$, the exact same argument applies with $\tilde{O}$ replaced by $O$.
\end{proof}

\end{document}